\documentclass[a4paper,11pt]{article}

\usepackage{authblk} 
\usepackage{fullpage}
\usepackage[hyphens]{url}  
\usepackage{graphicx} 
\urlstyle{rm} 
\usepackage{caption} 

\usepackage{todonotes}

\usepackage[absolute]{textpos}

\usepackage{amsfonts, amsopn, amsmath, amsthm}
\usepackage{enumitem}
\usepackage{comment}

\newtheorem{theorem}{Theorem}
\newtheorem{definition}[theorem]{Definition}

\newtheorem{corollary}[theorem]{Corollary}
\newtheorem{lemma}[theorem]{Lemma}

\newtheorem{remark}[theorem]{Remark}
\newtheorem{claim}[theorem]{Claim}

%
\newtheoremstyle{named}{}{}{\itshape}{}{\bfseries}{.}{.5em}{Restatement of #1 \thmnote{#3 }}
\theoremstyle{named}

\newcommand{\rankingset}{R}

%

\newcommand{\intv}[1]{\left [ #1 \right ]}

\newcommand{\sm}{\setminus}

%
%
%


\newcommand{\edges}{E}

\newcommand{\diversity}{\mathrm{Div}}

\newcommand{\ktdiversity}{\operatorname{\mathrm{KT-Div}}}

\newcommand{\forgotten}{\mathrm{forg}}
\newcommand{\introduced}{\mathrm{intro}}

%
%
%


\usepackage{amsthm,amssymb,amsmath}

\usepackage{fancybox}

\renewcommand{\problem}[3]{\begin{center}\shadowbox{\begin{minipage}{.98\linewidth}
				{{{\bf Problem name:} {\sc #1}{\index{\sc #1}}} 
				\\
				{\bf Given:} #2\\
				{\bf Output:} #3}
			\end{minipage}}
	\end{center}}

\newcommand{\partialorder}{\rho}
\newcommand{\linearorder}{\tau}
\newcommand{\baseset}{\candidates}
\newcommand{\KTd}{\operatorname{KT-dist}}

\newcommand{\cocomparabilitygraph}[1]{G_{#1}}
\newcommand{\graph}{G}
\newcommand{\neighbours}{N}

\newcommand{\pathdec}{\mathcal{P}}
\newcommand{\lengthdec}{l}
\newcommand{\tw}{\mathit{tw}}
\newcommand{\tree}{\mathcal{T}}

\newcommand{\parameter}{k}

\newcommand{\Oh}{{\cal{O}}}

\newcommand{\NP}{\textsf{NP}}
\newcommand{\FPT}{\textsf{FPT}}
\newcommand{\unanimityorder}{\rho}
\newcommand{\solutionset}{S}

\newcommand{\pairs}{\mathbb{P}}
\newcommand{\optimalfunction}{f}

\newcommand{\aproject}{p}

\newcommand{\distanceoptimality}{\delta}

\newcommand{\pbKRAname}{Kemeny Rank Aggregation}
\newcommand{\pbGKRAname}{Kemeny Rank Aggregation}
\newcommand{\pbDivGKRAname}{Diverse Kemeny Rank Aggregation}
\newcommand{\pbPCOname}{Positive Completion of an Ordering}
\newcommand{\pbCOname}{Completion of an Ordering}
\newcommand{\pbDivCOname}{Diverse Completion of an Ordering}

\newcommand{\KRAname}{\textsc{\pbKRAname}}
\newcommand{\PCOname}{\textsc{\pbPCOname}}
\newcommand{\COname}{\textsc{\pbCOname}}

\newcommand{\N}{\mathbb{N}}
\newcommand{\naturalIntegers}{\N}

\newcommand{\pw}{\mathit{pw}}
\newcommand{\cpw}{\mathit{cpw}}

\newcommand{\votes}{\Pi}
\newcommand{\vote}{\pi}
\newcommand{\candidates}{C}
\newcommand{\candidate}{c}
\newcommand{\mvotes}{m}
\newcommand{\ncandidates}{n}
\newcommand{\compatiblesequence}{\gamma}
\newcommand{\diversitycompatiblesequence}{\hat{\compatiblesequence}}

\newcommand{\partialcost}{\gamma}
\DeclareMathOperator{\cost}{\mathfrak{c}}
\DeclareMathOperator{\width}{w}
\DeclareMathOperator{\opt}{opt}

\newcommand{\defeq}{\;\dot{=}\;}
\newcommand{\apartialorder}{\rho}
\newcommand{\astrictorder}{\sigma}

\newcommand{\intervalorder}{\iota}

\newcommand{\alinearorder}{\tau}

\newcommand{\xelement}{x}
\newcommand{\yelement}{y}

\newcommand{\agraph}{G}

\newcommand{\vertexset}{V}
\newcommand{\leftset}{L}

\newcommand{\edgeset}{E}
\newcommand{\pathdecomposition}{D}
\newcommand{\bag}{B}
\newcommand{\lengthdecomposition}{l}
\newcommand{\vvertex}{v}
\newcommand{\uvertex}{u}

\newcommand{\transitiveclosure}{\mathbf{tc}}
\newcommand{\symmetricclosure}{\mathbf{sc}}

\newcommand{\diversityparameter}{d}
\newcommand{\mindiversityparameter}{s}
\newcommand{\scatterednessparameter}{\mindiversityparameter}

\newcommand{\partialdiversity}{\partial}
\newcommand{\partialscatteredness}{\xi}
\newcommand{\increasepair}{\Delta\partialdiversity}
\newcommand{\increasescatteredness}{\Delta\partialscatteredness}
\newcommand{\numbersolutions}{r}

\newcommand{\longversion}[1]{}
\newcommand{\aposition}{p}
\newcommand{\triples}{\mathcal{T}}
\newcommand{\promisingtriples}{\mathcal{Q}}
\newcommand{\bigtuples}{\mathcal{I}}
\newcommand{\abigtuple}{\frak{u}}

\newcommand{\asubsetbag}{S}

\newcommand{\acluster}{C}

%
\begin{document}

\title{Diversity in Kemeny Rank Aggregation: \\ A Parameterized Approach }
\author[1]{Emmanuel Arrighi}
\author[2]{Henning Fernau} 
\author[3]{Daniel Lokshtanov}
\author[1]{\\ Mateus de Oliveira Oliveira} 
\author[2]{Petra Wolf}
\affil[1]{University of Bergen, Norway}
\affil[1]{{\small\texttt{\{emmanuel.arrighi, mateus.oliveira\}@uib.no}}}
\affil[2]{University of Trier, Germany}
\affil[2]{{\small\texttt{\{fernau, wolfp\}@informatik.uni-trier.de}}}
\affil[3]{University of California Santa Barbara, CA, USA}
\affil[3]{{\small \texttt{daniello@ucsb.edu}}}

\maketitle
\begin{abstract}
In its most traditional setting, the main concern of optimization theory is the search for optimal solutions 
for instances of a given computational problem. A recent trend of research in artificial intelligence,
called \emph{solution diversity}, has focused on the development of notions of optimality that
may be more appropriate in settings where subjectivity is essential. The idea is that instead of aiming at
the development of algorithms that output a \emph{single} optimal solution, the goal 
is to investigate algorithms that output a small set of \emph{sufficiently good} solutions 
that are \emph{sufficiently diverse} from one another. In this way, the user 
has the opportunity to choose the solution that is most appropriate to the context
at hand. It also displays the richness of the solution space. 
	
When combined with techniques from parameterized complexity theory, the paradigm of diversity of solutions
offers a powerful algorithmic framework to address problems of practical relevance. In this work, we 
investigate the impact of this combination in the field of Kemeny Rank Aggregation, a well-studied class of 
problems lying in the intersection of order theory and social choice theory and also in the field of order
theory itself. In particular, we show 
that the Kemeny Rank Aggregation problem is fixed-parameter tractable with respect to natural parameters providing natural formalizations of
the notions of \emph{diversity} and of the notion of a \emph{sufficiently good} solution. Our main 
results work both when considering the traditional setting of aggregation over linearly ordered votes, and 
in the more general setting where votes are partially ordered. \\
\end{abstract}

\section{Introduction}
\label{section:Introduction}

Traditionally, in optimization theory, when given an instance of a
computational problem, one is interested in computing
{\em some} optimal solution for the instance in question. 
For certain problems of practical relevance, this 
framework may not be satisfactory because it 
precludes the user from the possibility of choosing among optimal
solutions in case more than one exists, or even from choosing a 
slightly less optimal solution that may be a better fit for 
the intended application, due to subjective factors. 

A recent up-coming trend of research in artificial intelligence,
called \emph{diversity of solutions}~\cite{PetitTrapp2019,BasteFJMOPR20,ingmar2020modelling,BasteEtAl19,fomin2020diverse},
has focused on the development of notions of optimality that may be more appropriate 
in settings where subjectivity is essential. The idea is that instead of aiming at
the development of algorithms that output a \emph{single} optimal solution, the goal 
is to investigate algorithms that output a small set of {\em sufficiently good} solutions 
that are \emph{sufficiently diverse} from one another. In this way, the user 
has the opportunity to choose the solution that is most appropriate to the context at hand.
The intuition is that the criteria employed by the user to decide what an appropriate solution
is may be subjective, and therefore, impractical or even impossible to be formalized at the level 
of the problem specification. Examples of such criteria are aesthetic, economic, political, 
environmental criteria. Another motivation comes from the problem of finding several good committees such that each committee member
is not overloaded with these commitments, as described in \cite{BreKacNie2020}; again, some diversity could be helpful.

One source of difficulty when trying to develop efficient algorithms for diverse variants of 
computational problems is the fact that these problems may be computationally hard. In particular,
many interesting computational problems that are suitable for being studied from the 
perspective of diversity of solutions are already \NP-hard in the usual variant in which one 
asks for a single solution. Additionally, it may be the case that even problems that are 
polynomial-time solvable in the single-solution version  become \NP-hard when
considering diverse sets of solutions. One way to circumvent this difficulty is 
to combine the framework of diversity of solutions with the framework of fixed-parameter tractability theory \cite{DowneyFellows99}. A central notion in this framework 
is the one of fixed-parameter tractability. An algorithm for a given computational problem
is said to be fixed-parameter tractable with respect to parameters $k_1,\dots,k_r$ if it
runs in time $f(k_1,\dots,k_r)\cdot n^{O(1)}$, where $n$ is the size of the input and 
$f$ is a computable function that depends only on the parameters. The intuition is that
if the range of the parameters is small on instances of practical relevance, then even if 
the function $f$ grows relatively fast, the algorithm can be considered to be fast enough 
{\em for practical purposes}.

When studying a given computational problem from the point of veiw of solution diversity, it is crucial
to have in hands a notion of distance between solutions for that problem. The {\em diversity} of a set of 
solutions $\solutionset$ is then defined as the sum of distances between pairs of solutions in $\solutionset$. 
We denote this measure by $\diversityparameter$. Intuitively, diversity is
a global measure for how representative a set of solutions is among the space of solutions. 
Three natural parameters can be used to quantify how good a diverse set of solutions is: the number $\numbersolutions$ 
of solutions in the set,  the maximum distance $\distanceoptimality$ between the cost of a solution in the set and the cost 
of an optimal solution (we call this parameter the {\em solution imperfection} of the set), and the minimum {\em required}
distance $\mindiversityparameter$ between any two solutions in the set. This last parameter is also known in the 
literature as the {\em scatteredness} of $\solutionset$ \cite{DBLP:journals/bit/Galle89}.
Intuitively, the parameter $\numbersolutions$ is expected to be small because in practical applications we do not 
want to overwhelm the user with an excessive number of choices.
The parameter $\distanceoptimality$ is expected to be small because while we do want $\solutionset$ to be diverse, 
we do not want to allow solutions of bad quality. 
Finally, in the context of our work, solution diversity is formalized by the parameter $d$, and we only use the 
scatteredness parameter $s$ to enforce that one cannot achieve high diversity by copying a given solution 
an arbitrary number of times. For this it is enough to require $s=1$. 
It is worth noting that it is possible to have $\mindiversityparameter$ very small (say $s=1$),
but $\diversityparameter$ very large, since some pairs of solutions in the set may be very far apart from one another. 

In this work, we investigate the impact of the notions of diversity of solutions 
and of fixed parameter tractability theory in the context of social choice theory.  
In particular, we focus on the framework of preference list aggregation introduced by 
Kemeny in the late fifties \cite{Kemeny59}. Intuitively, preference lists are a formalism
used in social choice theory to capture information about choice in applications involving the selection of candidates, products, etc.\ 
by a group of voters. The task is then to find a ranking of the candidates that maximizes 
the overall satisfaction among the voters. This problem is commonly referred to in modern terminology as 
the {\em Kemeny rank aggregation} (\textsc{KRA}) problem.   In its most general setting, the ranking 
cast by each voter is a partial order on the set of candidates.  
The distance measure we use to define our diverse version for \textsc{KRA} is the 
Kendall-Tau distance which is widely used in the context of preference aggregation.\footnote{To cite from the Stanford Encyclopedia of Philosophy~\cite{sep-social-choice}: \em At the heart of social choice theory is the analysis of preference aggregation, understood as the aggregation of several individuals' preference rankings of two or more social alternatives into a single, collective preference ranking (or choice) over these alternatives.}
 Its popularity is underlined by articles describing these issues for the interested public audience; see \cite{FarTim2019}.

\subsection{Our Contribution}

Our main result is a multiparametric algorithm for {\sc Diverse KRA} over partially ordered votes that runs in time 
$f(\width,\numbersolutions,\distanceoptimality,\scatterednessparameter)\cdot \diversityparameter\cdot n
\cdot \log (n^2 \cdot m)$ where $n$ is the number of candidates, $m$ is the number of votes,
$\numbersolutions,\distanceoptimality,\scatterednessparameter$ and $\diversityparameter$
are the parameters discussed above, and $\width$ is the \emph{unanimity width} of the votes. That is to say, 
the pathwidth of the cocomparability graph of the unanimity order of the input votes (Corollary~\ref{corollary:DiverseGKRA}). Intuitively, this width measure is a quantification of the amount of disagreement between the votes.
Note that pathwidth and treewidth coincide for the class of cocomparability graphs~\cite{habib1994treewidth}.

On the path towards obtaining our results for Kemeny Rank Aggregation, we also make contributions to 
problems of independent interest arising in the theory of cocomparability graphs.
First, by leveraging on classic results from \cite{habib1994treewidth}, we show that
the problem of constructing a $\partialorder$-consistent path decomposition of approximately minimum
width for the cocomparability graph~$\agraph_{\partialorder}$ of a given partial-order~$\partialorder$ is 
fixed-parameter tractable with respect to the pathwidth of~$\agraph_{\partialorder}$. While it was known 
that the pathwidth and the $\partialorder$-consistent pathwidth of $\agraph_{\apartialorder}$ are always the
same~\cite{AFOW2020}, and that there were fixed-parameter tractable algorithms for computing path decompositions 
of approximately minimum width due to structural properties of cocomparability graphs~\cite{DBLP:journals/dam/BouchitteKMT04}, the problem of computing such a decomposition satisfying the additional 
$\partialorder$-consistent requirement was open~\cite{AFOW2020}.

Second, we note that the notion of Kendall-Tau distance between partial orders (formally defined in Section \ref{section:KRA}), which is used to define our notion of diversity, can
be applied equally well in the more general context of the \textsc{\pbCOname} problem (\textsc{CO}), a problem 
of fundamental importance in order theory that unifies several problems of relevance for artificial intelligence, 
such as \textsc{KRA}, \textsc{One-Sided Crossing Minimization} (an important sub-routine used in the search for
good hierarchical representations of graphs), and \textsc{Grouping by Swapping} (a relevant problem in the field of memory management)~\cite{DBLP:journals/algorithmica/WongR91}.
For a matter of generality, we first develop a
$f(\width,\numbersolutions,\distanceoptimality,\scatterednessparameter)\cdot \diversityparameter\cdot n
\cdot \log (n^2 \cdot m)$ time
algorithm for \textsc{Diverse CO} (Theorem~\ref{theorem:SolutionConstructionDiverse}) and then obtain our 
main result for {\sc Diverse KRA} as a corollary. In the more general context of CO, the parameter $\width$ is 
the width of the cocomparability graph of the partial order given at the input. 

Finally, building on recent advances in the theory of $C_k$-free graphs \cite{DBLP:conf/soda/ChudnovskyPPT20} we establish
an upper bound for the pathwidth of a cocomparability graph in terms of the number of edges of the graph. As a by-product of 
this result, we obtain the first algorithm running in time $\Oh^*(2^{\Oh(\sqrt{k})})$ (Theorem \ref{theorem:main}) 
for the positive completion of an ordering problem (PCO), a special case of CO which still generalizes KRA and other important 
combinatorial problems. Previous to our work, the best algorithm for this problem parameterized by cost had asymptotic time complexity of 
$\Oh^*(k^{\Oh(\sqrt{k})}) = \Oh^{*}(2^{\sqrt{k}\log k})$. Therefore, we remove the log-factor in the exponent.
According to Theorem~18 in~\cite{AFOW2020}, this is optimal under the Exponential Time Hypothesis (ETH).

It is worth noting that in the context of {\sc KRA} over totally ordered votes, the existence of 
diverse sets of high-quality solutions implies that any optimal solution disagrees significantly 
with some of the voters. More precisely, let $k_{opt}$ be the cost of an optimal solution and suppose that there are two 
solutions with cost $k_{\opt} + \distanceoptimality_1$ and $k_{opt}+\distanceoptimality_2$, respectively. 
It is possible to show that $\max\{k_{\mathit{opt}} + \distanceoptimality_1,k_{\mathit{opt}} + \distanceoptimality_2\}$ 
is at least half the number of votes. Therefore, if two solutions have small solution imperfection, 
then $k_{\mathit{opt}}$ is large. In the other direction, if there is a strong consensus 
among the voters ($k_{\mathit{opt}}$ is small) then $\distanceoptimality_1$ or $\distanceoptimality_2$ must be large. 
Intuitively, in the context of aggregation over totally-ordered votes, the more disagreement there is between an optimal
ranking and the ranking provided by the voters, the more one can benefit from the framework of solution diversity. 
In the context of aggregation over partially ordered votes, such a correlation between solution imperfection and 
optimality does not necessarily hold even for constant unanimity width. For instance, consider a set 
of partially ordered votes where the unanimity order is a bucket order with buckets of size $2$ (i.e., unanimity width equal to~$1$).
Then depending on the instance, we can have diverse sets of solutions with solution imperfection $0$ and optimal cost $0$. 
Therefore, in the context of partially ordered votes, the notion of diversity makes sense 
even in the case where voters have small disagreement between each other.

\subsection{Related Work}

The framework of diversity of solutions, under distinct notions of diversity, 
has found applications in several subfields of artificial intelligence, such as
information search and retrieval~\cite{GollapudiSharma2009,AbbassiMirrokniThakur2013},
mixed integer programming~\cite{GloverEtAl2000,DannaWoodruff2009,PetitT15}, 
binary integer linear programming~\cite{GreistorferEtAl2008,TrappKonrad2015},
constraint programming~\cite{HebrardHOW05,HebrardEtAl2007}, 
SAT solving~\cite{Nadel2011}, recommender systems~\cite{AdomaviciusKwon2014},
routing problems~\cite{SchittekatS2009}, answer set programming~\cite{EiterEtAl2013},
decision support systems~\cite{LokketangenW2005,HadzicEtAl2009}, genetic algorithms~\cite{GaborBPS18,WinebergO03},
planning~\cite{BasteEtAl19}, 
and in many other fields. Recently, a general framework for addressing diversity of 
solutions from the perspective of parameterized complexity theory was 
developed  \cite{BasteFJMOPR20}. This framework allows one to
convert dynamic programming algorithms for finding an optimal solution for instances
of a given problem into dynamic programming algorithms for finding a small set
of \longversion{sufficiently }diverse solutions. 

Notice that there is also the related area of enumerating all optimal
solutions, or at least encoding them all; this is known as knowledge
compilation in artificial intelligence, see, e.g.,
\cite{DBLP:journals/jair/DarwicheM02,DBLP:journals/ai/FargierM14,DBLP:conf/ijcai/Marquis11}.
These types of questions have also been considered from a more combinatorial
viewpoint; confer
\cite{DBLP:journals/algorithmica/FominHKPV14,DBLP:journals/algorithmica/GolovachHKKSV18,DBLP:journals/siamdm/KanteLMN14,DBLP:reference/algo/KanteN16}.
But from a practical perspective, it is not really desirable to confront a user
with an exponential number of different solutions, but she wants to know what
the real alternatives are.

Two measures of diversity of a set $\solutionset$ of solutions have been particularly explored in the literature.
The first one is the sum of distances between pairs of solutions in $\solutionset$. The 
second one is the minimum distance $\scatterednessparameter$ between any two solutions in $\solutionset$. 
This last notion has been also known in the literature as \emph{scatteredness} \cite{DBLP:journals/bit/Galle89}.
Both notions have been used in the context of vertex- and edge-problems 
on graphs using the Hamming distance of solutions as the distance measure \cite{BasteFJMOPR20,GaborBPS18,WinebergO03,fomin2020diverse}.
It is worth noting that when used alone, the diversity measure defined as the sum of Hamming distances 
has some weaknesses. For instance, if we take a pair $\{A,B\}$ of solutions of diversity $d$, then
the list $A_1,A_2,...,A_r, B_1,B_2,...,B_r$, where each $A_i$ is a copy of~$A$, and each $B_i$ is a copy of~$B$,
has high diversity ($d'> r^2\cdot d$), while this list clearly opposes the intuitive notion of a diverse set of solutions. 
This weakness can be significantly mitigated by considering diversity in conjunction with scatteredness. 
For instance, by setting $\scatterednessparameter \geq 1$, we already guarantee that all elements 
in a list of solutions will be distinct from each other. 

The notions of Kemeny score and Kemeny rank aggregation were fist introduced in \cite{Kemeny59}. 
These notions play a crucial role in fields such as social choice theory \cite{BarTovTri89,Kla2004} and found applications in diverse areas as
 information retrieval~\cite{DBLP:conf/www/DworkKNS01}, preference learning \cite{DBLP:conf/alt/ClemenconKS18}, genetic map generation \cite{DBLP:journals/tcbb/JacksonSA08}, etc. 
We also point to more recent publications like \cite{CorGalSpa2013,GilPorSpa2020} in the AI area that also allow to follow the literature on this topic. 
As shown by Young and Levenglick~\cite{YouLev78}, this is the only 
aggregation method satisfying three natural requirements: \emph{symmetry}, \emph{consistency} and being \emph{Condorcet}.
This notion can also be regarded as a maximum-likelihood estimator \cite{young1988condorcet}. 
While the \textsc{KRA} method has been originally designed to deal with the setting 
where the vote cast by each voter is a total order over the set of candidates,
this method has been naturally generalized to the context where the ranking 
cast by each voter is a partial order over the set of candidates. 
Such partially ordered votes already generalize the notion of Bucket orderings (also known as weak orders) which has been 
studied in the context of aggregation and can be viewed as looking at rankings with ties, also called indifference classes \cite{DBLP:journals/mss/Hudry12,Zwicker18}.

Most relevant to our studies is reference~\cite{Zwicker18}. There, Kemeny rank aggregation is generalized both to starting with and aiming at bucket orderings, also known as weak orders. More precisely, the author considers the problem to find a bucket order with at most $k$ buckets (indifference classes) that compromises a heap of bucket orders, each of which has at most $j$ buckets. It is shown that this variation of KRA is NP-complete even for small constant values of $j,k$. This makes this parameterization useless from the viewpoint of parameterized complexity. However, one could also reverse the requirement and ask the voter to create a ranking  in which no more than $\ell$ candidates are mutually indifferent; in other words, the buckets have a capacity limited by $\ell$. As we will see, this turns out to be a useful parameter and we will study it in this paper. 

The \emph{minimum} Kendall-Tau distance over all pairs in a set is analogous 
to the minimum Hamming distance among pairs of solutions used in \cite{HebrardHOW05,HebrardEtAl2007}.

\section{Preliminaries}
\label{section:Preliminaries}

If $n$ is a positive integer, $\intv{n}=\{1,\ldots,n\}$ denotes the discrete interval of the first $n$ positive integers, and $\intv{n}_0=\intv{n}\cup\{0\}$.
$ \N$ denotes the non-negative integers.

Let $\baseset$ be a set. A \emph{partial order} over $\baseset$ is a reflexive,
antisymmetric and transitive binary relation $\partialorder \subseteq \baseset \times \baseset$. We say that $\partialorder$ is a \emph{linear order} if additionally,
for each $(x,y) \in \baseset \times \baseset$, either $(x,y) \in \partialorder$ or $(y,x) \in \partialorder$. The \emph{comparability relation} $\symmetricclosure(\partialorder)$ of $\partialorder$ is the symmetric closure of~$\partialorder$, i.e., $(x,y)\in \symmetricclosure(\partialorder)$ iff $(x,y)\in\partialorder$ or  $(y,x) \in \partialorder$. For instance, $\symmetricclosure(\partialorder)=\baseset \times \baseset$ iff $\partialorder$ is a {linear order}. If  $\partialorder \subseteq \baseset \times \baseset$ is a partial order, then $<_\partialorder$ denotes the corresponding strict order, which is irreflexive and transitive.
Linear orders over~$\baseset$ can be given by bijections $\pi: [|\baseset|]\to \baseset$. Hence, $<_\pi$ (or $\leq_\pi$) is used to denote the corresponding strict (or partial) linear order.
Given a binary relation $\alpha$, we denote by $\transitiveclosure(\alpha)$
the transitive closure of $\alpha$.

\begin{definition}[Cocomparability graph]
	Given a partial order $\partialorder \subseteq \baseset \times \baseset$, we let
	$\cocomparabilitygraph{\partialorder} \defeq (\baseset, \baseset\times \baseset \setminus \symmetricclosure(\partialorder))$
	be the \emph{cocomparability graph} of $\partialorder$.
\end{definition}

Given an undirected graph $\graph = (\vertexset, \edgeset)$ and a vertex $\vvertex \in \vertexset$,
we let $\neighbours(\vvertex) \defeq \{\uvertex \mid \uvertex \in \vertexset, (\vvertex, \uvertex) \in \edgeset\}$ be the neighborhood of $\vvertex$.
A \emph{path decomposition} of a graph $\graph = (\vertexset,\edgeset)$ is a sequence $\pathdec = (\bag_1, \bag_2, \ldots, \bag_{\lengthdec})$
of subsets of $\vertexset$, such that the following conditions are satisfied.
\begin{itemize}
	\item $\bigcup_{1 \leq i \leq \lengthdec} \bag_i = \vertexset$.
	\item For each edge $(\uvertex,\vvertex) \in \edgeset$, there is an $i \in [\lengthdec]$ such that
	      $\uvertex,\vvertex \in \bag_i$.
	\item For each $i, j, k\in [\lengthdec]$ with $i < j < k$, $\bag_i \bigcap \bag_k \subseteq B_j$.
\end{itemize}

For each position $p \in \{2, \ldots, \lengthdec\}$, for each vertex $\vvertex \in \bag_{p}\sm \bag_{p-1}$,
we say that $\bag_{p}$ \emph{introduces}~$\vvertex$ ($\vvertex$ is introduced by~$\bag_p$) and for each vertex $\vvertex \in \bag_{p-1}\sm \bag_{p}$,
we say that $\bag_{p}$ \emph{forgets}~$\vvertex$ ($\vvertex$ is forgotten by~$\bag_p$). For a position $p \in \{1, \ldots, \lengthdec\}$, we write $\introduced(p)$
(resp.~$\forgotten(p)$) for the set of all vertices introduced (resp.~forgotten) by $\bag_p$,
and we let
$\leftset_p = \bigcup_{1 \leq i \leq p} \forgotten(p)$
be the set of vertices that have been forgotten up to position $p$.
The \emph{width} of~$\pathdec$ is defined as
$\width(\pathdec) = \max_{i \in [\lengthdec]} |\bag_i| - 1$. The \emph{pathwidth},
$\pw(\graph)$, of $\graph$ is the minimum width of a path decomposition of~$\graph$.

The pathwidth of the cocomparability graph of a partial order may be regarded as a
measure of how close the order is from being a linear order.
The cocomparability graph of a linear order~$\alinearorder$ on $n$ elements is the 
graph with $n$ vertices and no edges. This graph has pathwidth $0$. On the other hand,
if $\alinearorder$ is a partial order where all
$n$ elements are unrelated, then the cocomparability graph of $\alinearorder$ is the
$n$-clique, which has pathwidth $n-1$
(the highest possible pathwidth in an $n$-vertex graph).

\section{The Kemeny Rank Aggregation Problem}%
\label{section:KRA}

Let $\candidates$ be a finite set, which in this paper 
will denote a set of candidates, or alternatives. A \emph{partial vote}\footnote{The literature is not clear about these notions. In~\cite{DBLP:journals/tcs/HemaspaandraSV05}, Hemaspaandra et al.~call partial votes that allow ties in linear orders \emph{preference rankings} and explain that this originally goes back to Kemeny. Allowing partial orders (as we do it here) is even more general. The authors of~\cite{DBLP:journals/tcs/HemaspaandraSV05} also consider a different Kendall-Tau distance for preference rankings compared to our setting.}
over $\candidates$ is a partial order over $\candidates$. 
 The KT-distance between two partial votes $\vote_1$ and~$\vote_2$, denoted by $\operatorname{KT-dist}(\vote_1,\vote_2)$, 
is the number of pairs of candidates that are ordered
differently in the two partial votes.
\[\operatorname{KT-dist}(\vote_1,\vote_2)=|\{(\candidate,\candidate')\in \candidates\times \candidates\mid \candidate<_{\vote_1}\candidate'\land \candidate'<_{\vote_2}\candidate\}|\,.\]

Observe that when the votes are totally ordered,
the Kendall-Tau distance can be seen as the `bubble sort' distance,
i.e., the number of pairwise adjacent transpositions needed to transform  one
linear order into the other.
Given a linear order $\vote$ over a set of candidates $\candidates$ and 
a set $\votes$ of votes over $\candidates$, the 
\emph{Kemeny score} of $\vote$ with respect to $\votes$ is defined as 
the sum of the Kendall-Tau distances between $\vote$ and each vote in~$\votes$.
In this work, we consider the following problem.

\problem{\pbGKRAname\ (KRA)}{
A list of partial votes $\votes$ over a set of candidates $\candidates$, 
a non-negative integer $k$.}{Is there a linear order $\vote$ on $\candidates$ such that the
sum of the  KT-distances of~$\vote$ from all the partial votes is  ${}\leq k$?
}
Hence, given partial votes $\pi_1,\dots,\pi_m$ of $\candidates$ and a non-negative integer $k$, the question is if there exists a linear order $\pi\subseteq \candidates\times \candidates$ such that
\(\sum_{i=1}^m \operatorname{KT-dist}(\pi,\pi_i)\leq k\,\).

\begin{definition}
    Given a set $\votes$ of partial votes, the \emph{unanimity order} of $\votes$
    is simply the partial order~$\unanimityorder$ obtained
    as the intersection of all partial orders in $\votes$. In other words, a candidate
    $\candidate_1$ has higher precedence than a candidate $\candidate_2$ in $\unanimityorder$
    if and only if $\candidate_1$ precedes $\candidate_2$ in each vote in $\votes$.
\end{definition}
As a consequence, the more disagreements there are among the voters with respect to the
relative orders of pairs of candidates, the denser the cocomparability graph of
$\unanimityorder$ will be and therefore the greater its pathwidth will be. Therefore,
the pathwidth of the cocomparability graph of the unanimity order of $\votes$ may be
seen as a quantification of the amount of disagreement among the votes in $\votes$.

\paragraph{A notion of diversity for KRA.}
The notion of diversity of solutions for computationally hard 
problems has been considered under a variety of frameworks. In this 
work, we define a notion of diversity for the \textsc{Kemeny Rank Aggregation} 
problem which is analogous to the notion of diversity of vertex sets 
used in \cite{BasteFJMOPR20}. More precisely, if $\rankingset$ is a
set of partial orders, then we define the Kendall-Tau diversity of $\rankingset$ 
as the sum of Kendall-Tau distances between votes in the set $\rankingset$. 

\[\ktdiversity(\rankingset) = \sum_{\vote_1,\vote_2\in \rankingset} \operatorname{KT-dist}(\vote_1,\vote_2)\]

We note that the restricted version of the \textsc{KRA} problem where all votes are 
linear orders, the requirements that a set of solutions is at the same time 
diverse and only contains rankings with small Kemeny score are clashing. 
The problem is that the very existence of two distinct rankings with small
Kemeny score is an impossible task. If two candidates $\candidate_1$ and $\candidate_2$
occur with the order $(\candidate_1,\candidate_2)$ in one of the solutions and 
in the order $(\candidate_2,\candidate_1)$ in the other solution, then at least 
one of these solutions will have a Kemeny score of at least half the number of votes. 
However this opposition between diversity and small Kemeny score is not present in
the setting where votes are allowed to be partial. The generalization to partial votes is
one possible way to circumvent this conflict of desiderata. Another way we will be looking at is
not to consider the cost of the solutions directly but the difference between the cost of solutions
and the cost of an optimal solution. In this case, we can have diversity and a small
difference between the cost and the cost of an optimal solution.

\problem{\pbDivGKRAname\ (Diverse-KRA)}{%
A list of partial votes $\votes$ over a set of candidates $\candidates$,  and 
 $k$, $\numbersolutions$, $\diversityparameter\in\mathbb{N}$.}{Is there 
a set $\rankingset = \{\vote_1,\dots,\vote_{\numbersolutions}\}$ of linear orders
on $\candidates$ such that the Kemeny score for each order $\vote_i$ is at most $k$
and $\ktdiversity(\rankingset)\geq \diversityparameter$?}

\paragraph{Parameterizations of KRA}
The problem \KRAname\ is known to be \NP-complete~\cite{BarTovTri89}, even if only four 
votes are given at the input~\cite{DBLP:conf/www/DworkKNS01}.\footnote{\label{dwork}The proof of this fact is not contained in the conference paper~\cite{DBLP:conf/www/DworkKNS01} but only appears in Appendix B of \url{http://www.wisdom.weizmann.ac.il/~naor/PAPERS/rank_www10.html}.} 
For this reason, \textsc{KRA} has been studied from the perspective of parameterized complexity 
theory under a variety of parameterizations. Below, we consider two prominent parameterizations
for this problem. 

The first parameter we consider is the cost of a solution. 
Simjour~\cite{DBLP:conf/iwpec/Simjour09} obtained an algorithm for the problem that
runs in time $\Oh^{*}(1.403^k)$\longversion{, where $k$ upper-bounds the sum
of the KT-distances of the solution~$\pi$ from all the votes}.
There are also sub-exponential algorithms for  \KRAname\ under this parameterization:
Karpinski and Schudy~\cite{DBLP:conf/isaac/KarpinskiS10} obtained an algorithm for
\KRAname\ that runs in $\Oh^{*}(2^{O(\sqrt{k})})$ time, while
the algorithm of Fernau \emph{et al.}~\cite{DBLP:conf/iwoca/FernauFLMPS10,Feretal2014}, based on a different methodology,
runs in $\Oh^{*}(k^{O(\sqrt{k})})$ time.
Recently, in \cite{AFOW2020}, it was shown that
\textsc{KRA} on instances with only $m=4$  votes on some  candidate set~$C$ and some integer $k$ bounding the sum of the Kendall-Tau
distances to a solution cannot be solved neither in time $\Oh^*\left(2^{o(|C|)}\right)$ nor in time $\Oh^*\left(2^{o(\sqrt{k})}\right)$, unless ETH fails.
The mentioned \NP-hardness of \textsc{KRA} immediately translates to \NP-hardness results of \textsc{KRA} and of \textsc{Diverse-KRA}, in the latter case by setting $r=1$ and $d=0$.

The second parameter we consider is the \emph{unanimity width} of the set of 
votes, which is based on the notion of unanimity order of a set of 
votes~\cite{DBLP:journals/4or/CharonH07}.
The \emph{unanimity width} of $\votes$ is defined as the pathwidth of 
the cocomparability graph of $\unanimityorder$.

\subsection{Some Discussion on \textsc{(G)KRA}}

In this subsection we briefly describe some toy applications where the notion
of diversity can be naturally combined with the notion of Kemeny Rank Aggregation, both 
in the totally ordered setting and in the partially ordered setting.

\paragraph{A Natural Class of Partial Orders of Low Width.}
A {\em $k$-bucket order}~\cite{DBLP:conf/pods/FaginKSMV04} is a partial order $\partialorder$ where the set of vertices 
can be partitioned into a sequence of clusters $\acluster_1,\dots,\acluster_{m}$, each of size at most $k$, where
for each $i\in [m-1]$, all elements in $\acluster_i$ precede all elements in $\acluster_{i+1}$.
In the context of democratic scheduling, a unanimity order that is a $k$-cluster order corresponds to the situation where 
the tasks to be executed are split into work-packages (the clusters), each containing at most $k$ tasks,
where the order of execution of the work-packages is agreed on, 
but the order inside each cluster is not. 
It is easy to see that the cocomparability graph of a $k$-cluster order has pathwidth at most $k$.
Additionally, for each $m$ and $k$, there are sets of votes whose unanimity order $\unanimityorder$ is a $k$-cluster order
such that for each diversity $\diversity$, there is a set $S$ containing $\diversity$ linear extensions of $\unanimityorder$, 
each of which has optimal Kemeny score, and such that $S$ has maximum diversity.  

\paragraph{A Concrete Application of Bucket Orderings.}
Consider an election with 5 candidates $A,B,C,D,E$ for 3 positions and 100 voters.
If voters are forced to put (strict) linear orders, then it could be that there might be 50 votes like $A<B<C<D<E$ and 50 votes like $A<B<D<C<E$.
There are two optimum Kemeny solutions, each of them coinciding with the two types of votes that were cast.
But even if these two solutions are put into a diverse set of solutions, then the distance between these to votes is one. However,  the sum of the Kemeny-Tau distances of any of the  two optimal solutions to all given votes is  50. Hence, although the votes do agree to quite some extent, the resulting numbers are relatively big.
But are these strict linear orderings really expressing the opinions of the voters?
This has been discussed in the social choice literature, and there is some evidence that many people do not have a strict preference among \emph{all} candidates, but ranking them in groups is more realistic.
This is our main motivation to introduce the \textsc{KRA} model.

For instance, it could be that 10 voters do not care about the ranking of $A$ versus $B$, but they would rank them all above $C,D,E$, without caring too much about their sequence, either.
Another 10 voters might not care about the exact sequence of $A$ and $B$, nor about the sequence of $D$ and $E$, but they clearly put $A$ and $B$ before $D$ which is in turn ahead of $C$ and~$E$.
In shorter notation, we get $A=B<D<C=E$.
There might be more different votes, as altogether summarized in the 
following table: \\

\noindent
\centerline{\fbox{\begin{tabular}{lll}
type I &10 voters & $A=B<C=D=E$\\
type II & 10 voters &$A=B<D<C=E$\\
type III & 10 voters & $A=B=C<D=E$\\
type IV & 40 voters & $A=B=C=D<E$\\
type V & 20 voters & $A<B<C=D=E$
\end{tabular}}} 
\\
\\ 

This election could be turned into the first example if all voters would have been forced to commit themselves to linear orderings. Obviously, the 50 type II and type IV voters are compatible with $A<B<D<C<E$, while all but the typ II voters are compatible with $A<B<C<D<E$.

Hence, when viewing this as an instance of \textsc{KRA}, the ranking $A<B<C<D<E$ would clearly win, as its Kemeny score is just 10.
(Hemaspaandra et al.~\cite{DBLP:journals/tcs/HemaspaandraSV05} would attribute a much higher value here.)
However, the diversity between $A<B<D<C<E$ and $A<B<C<D<E$ stays one.
It might be also possible to consider the solution $B<A<C<D<E$ now, or even
$A<B<C<E<D$. While in the model where we required all votes to be linear orderings, only the mentioned two solutions would make sense in a diverse solution that should not be too expensive in terms of costs, here it might be possible to look for three or four solutions in the diverse set.
This is another aspect that makes our model interesting for election problems.

\paragraph{Diversity in Budget Allocation.}
Suppose a decision-maker wants to allocate funds for the implementation of 
a set $\{\aproject_1,\dots,\aproject_{n}\}$ of projects, which would be executed sequentially. In order
to determine the priority in which the funds will be allocated, the decision-maker asks each voter 
to rank the projects in the desired order of execution.  The goal of the decision-maker
is to come up with an order that maximizes satisfaction among the voters. 
Nevertheless, instead of having a unique final ranking computed completely automatically, the decision-maker
may prefer to have in hands a diverse set of sufficiently good rankings.
This would allow the decision-maker to also take into consideration important external factors before taking 
a final decision for the allocation of funding. Such external factors may be budgetary constraints, environmental
constraints, compliance with regulations, etc. 

\paragraph{Diversity in Search Rankings.} 
Search engines are part of our everyday live. But who is really following the links presented by the search engine beyond the first few pages that are displayed on the user's screen?
Therefore, it is crucial that important and interesting information is put on the very first pages. Usually, search engines consider some sort of relevance measure to rank the answers.
Clearly, the search engine knows a bit more. For instance, is it important to display different hits from the same domain? Rather, it would be better for the user to see ``really different'' hits on the first page.
Our concept of diversity could be implemented on two levels here: Either, we build a meta-search engine that collects the rankings of answers from different search engines (on the same question) and tries to come up with a diverse set of rankings that could help build the first couple of pages. Or, we consider different ranking functions within a search engine itself; 
in this second scenario, there could be also ties, so that our more general framework would apply.

\paragraph{Diversity in Team Formation.} 
An organization wants to form a team/committee to perform some task and there 
are several candidates. But the committee will have only a few members (say three). 
To choose the committee, the organization will pick a rank with sufficiently
good score and select the first three candidates of that particular rank.
The intuition is that candidates that appear in the first three positions of 
some rank with good score have enough legitimacy to take the role. 
On the other hand, it may be important for the organization to have some liberty
to choose which rank they will be using due to external factors, such as political, 
social, or affinity factors. For instance, if in one of the sufficiently 
good rankings the first three candidates are male, and in another sufficiently  good  
ranking we have two female and one male, then it may be better to 
pick the latter one for gender equality reasons.

\paragraph{Diversity in Planning of Menus.}
Suppose a pizza delivery service service wants to optimize their menu. Each of the current
customers is asked to rank his/her four preferred toppings. All other toppings are equally
ranked below the fourth one. The goal is to obtain a small, sufficiently diverse, set of
rankings (say, containing 10 rankings). Each of these rankings will correspond to a 
pizza on the menu. In order to cover many customer tastes, high diversity would be very
welcome, so that each customer could find her favorite among the (only) 10 pizzas on the menu.

\section{Completion of an Ordering}
\label{section:CO}
In this section, we will introduce the \COname{} problem, a generalization of the
\PCOname{} (PCO) problem originally considered in~\cite[Sec.~8]{DBLP:conf/gd/DujmovicFK03}
and~\cite[Sec.~6.4]{Fer05a}.

\problem{\pbCOname{} (CO)}{
A partial order $\partialorder \subseteq \baseset \times \baseset$ over a set $\baseset$,
a cost function $\cost:\baseset \times \baseset \to \N$, and some 
 $\parameter \in \N$.
}
{Is there a linear order $\linearorder \supseteq \partialorder$ with $\cost(\linearorder\setminus \partialorder)= \sum_{(x,y)\in\tau\setminus\rho}\cost(x,y)\leq \parameter$?}

Intuitively, given a partial order $\partialorder$ and a cost function $\cost$, the goal 
is to find a linear extension of~$\partialorder$ incurring a cost of at most~$\parameter$. The 
only difference between \textsc{CO} and the original \textsc{PCO} problem introduced 
in~\cite{DBLP:conf/gd/DujmovicFK03,Fer05a} is that, 
in the latter, the cost function needs to satisfy the following condition: 
for every pair $(x,y) \in \baseset \times \baseset$ such that $x$ and $y$ are incomparable
in $\partialorder$, the cost of $(x,y)$ is strictly positive ($\cost(x,y) > 0$).
In \textsc{CO}, for such pair the cost can be zero ($\cost(x,y) > 0$).

\paragraph{A Diversity Measure for \textsc{CO}.}
We note that the notion of Kendall-Tau diversity introduced in Section 
\ref{section:KRA} can also be used as a notion of diversity for \textsc{CO}, i.e., 
given a set $\rankingset$ of (not necessarily optimal) solutions for a
given instance $(\apartialorder,\cost)$ of \textsc{CO}, we let $\ktdiversity(\rankingset)$
be the diversity of this set.

\problem{\pbDivCOname{} (Diverse-CO)}{
A partial order $\partialorder \subseteq \baseset \times \baseset$ over a set $\baseset$,
a cost function $\cost:\baseset \times \baseset \to \N$, and non-negative integers
$\parameter,\numbersolutions,\diversityparameter\in \N$.}
{Is there a set $\rankingset = \{\linearorder_1,\dots,\linearorder_{\numbersolutions}\}$ of linear
extensions of $\partialorder$ such that
$\cost(\linearorder_i\setminus \partialorder)\leq \parameter$ for each $i\in [\numbersolutions]$, and 
$\ktdiversity(\rankingset)\geq \diversityparameter$ 
?}

\paragraph{Reducing KRA to CO.}
Next, we give a rather straightforward reduction from 
\textsc{KRA} to \textsc{CO}. 
Given an instance $(\votes, \candidates)$ of \textsc{KRA}
with partial votes $\votes = (\vote_1, \ldots, \vote_\mvotes)$  and candidates $\candidates=\{\candidate_1,\ldots,\candidate_\ncandidates\}$, 
we construct an instance $(\apartialorder,\cost)$ of \textsc{CO}
by letting $\apartialorder$ be the unanimity order of $\votes$, and 
by defining the cost function $\cost:\candidates\times \candidates \rightarrow \N$
as follows.
For every pair of candidates  $(\candidate, \candidate')$, 
we define its \emph{cost}, 
$\cost(\candidate, \candidate')$, as the number of votes that order
$\candidate'$ before~$\candidate$. More formally,
$\cost(\candidate, \candidate') = |\{i \in [\mvotes] \mid \candidate' <_{\vote_i} \candidate\}|$.
With this reduction, it is straightforward to check that a given 
linear order $\vote$ of the candidates has Kemeny score

\begin{equation*}
\begin{split}
\sum_{i=1}^m \operatorname{KT-dist}(\pi,\pi_i)=&\sum_{i=1}^m \sum_{j=1}^n\sum_{k=1}^n [\candidate_j <_{\vote_i} \candidate_k \land \candidate_k <_{\vote} \candidate_j ]\\
=&\sum_{j=1}^n\sum_{k=1}^n\cost(\candidate_k, \candidate_j)[\candidate_k <_{\vote} \candidate_j ].
\end{split}
\end{equation*}

Here, for a logical proposition $p$, we use the bracket notation [$p$] 
to denote the integer $1$ if $p$ is true the integer $0$ if $p$ is false. 
In other words, $\sum_{i=1}^m \operatorname{KT-dist}(\pi,\pi_i)$ is the
cost of~$\pi$ as a linear extension of the ordering $\apartialorder$. 

It is important to note that if all votes in $\votes$ are linear orders then $(\apartialorder,\cost)$
is actually an instance of \textsc{PCO}. In other words, if two candidates $\candidate$ and 
$\candidate'$ are incomparable in the unanimity order, then the cost assigned by $\cost$ 
to both pairs $(\candidate,\candidate')$ and $(\candidate',\candidate)$ are strictly 
positive. This property will be used crucially in the development of our sub-exponential
time algorithm for KRA parameterized by cost.  

We also note that since our reduction is solution preserving, it is also 
immediate that it is diversity preserving. In other words, $\rankingset$ is 
a set of solutions of diversity $d$ for an instance of KRA if and only if 
it is also a set of solutions of diversity $d$ for the corresponding instance of CO.

\section{Diverse \textsc{CO} Parameterized by Pathwidth}%
\label{section:DPCO}

In this section, we devise a fixed parameter tractable algorithm for
\textsc{Diverse CO} parameterized by solution imperfection, number of solutions, scatteredness, and
pathwidth of the cocomparability graph of the input instance.
Given our reduction that preserves solution and parameters from \textsc{KRA} to \textsc{CO} introduced
in Section~\ref{section:CO}, this algorithm immediately implies that
\textsc{Diverse KRA} is fixed parameter tractable when parameterized by
solution imperfection, number of solutions, scatteredness, and unanimity width.

We start by defining a suitable notion of consistency between
a path decomposition and a given partial order.
Let $\agraph = (\baseset,\edgeset)$ be a graph
and $\apartialorder \subseteq \baseset \times \baseset$ be a partial order
on the vertices of $\agraph$. We say that  a path decomposition
$\pathdecomposition = (\bag_1,\ldots, \bag_{\lengthdecomposition})$
is \emph{$\apartialorder$-consistent} if there is no pair of vertices
$(\xelement, \yelement) \in \apartialorder$ such that
\[\max(\{i \in [\lengthdecomposition] \mid \yelement \in \bag_i\}) < \min(\{i \in [\lengthdecomposition] \mid \xelement \in \bag_i\}).\]
In other words, if $\xelement$ is smaller than $\yelement$ in $\apartialorder$, then $\yelement$
cannot be forgotten in	$\pathdecomposition$ before $\xelement$ is introduced in $\pathdecomposition$.
The \emph{$\apartialorder$-consistent pathwidth} of $\agraph$, denoted by $\cpw(\agraph, \apartialorder)$,
is the minimum width of a $\apartialorder$-consistent path decomposition of $\agraph$.

It has been shown recently that for any partial order $\apartialorder\subseteq \baseset\times \baseset$,
the pathwidth of the cocomparability graph $\agraph_{\apartialorder}$ is equal to the consistent pathwidth
of $\graph_{\apartialorder}$ \cite{AFOW2020}. The proof of this result was based on the fact
that the consistent pathwidth
of a cocomparability graph of a partial order is equal to the interval width of the order \cite{habib1994treewidth}.
Nevertheless, the problem of constructing, or even approximating, a minimum-width consistent path decomposition
in FPT time was left open in \cite{AFOW2020}.

By taking a closer look at the theory of cocomparability graphs, we solve this open problem
in a constructive way. More precisely, in Lemma~\ref{lemma:ConstructionDecomposition} we show
that for any partial order $\apartialorder$,
one can construct a $\apartialorder$-consistent path decomposition of the cocomparability graph
$\agraph_{\apartialorder}$ in fixed-parameter tractable time parameterized by the pathwidth of the graph~$\agraph_{\apartialorder}$.

\begin{lemma}%
	\label{lemma:ConstructionDecomposition}
	Let $\apartialorder\subseteq \baseset\times \baseset$ be a partial order and
	$\agraph_{\apartialorder}$ be the cocomparability graph of
	$\apartialorder$. Then one can construct a nice $\apartialorder$-consistent path decomposition $\pathdec$
	of $\agraph_{\apartialorder}$ of width $\Oh(\pw(\agraph_\apartialorder))$ in time $2^{\Oh(\pw(\agraph_\apartialorder))}\cdot |\baseset|$.
\end{lemma}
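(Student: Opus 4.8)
The plan is to reduce the construction of a $\apartialorder$-consistent path decomposition to the choice of a good linear extension of $\apartialorder$, and then to obtain such an extension by repairing an approximately optimal (but not necessarily consistent) path decomposition produced by a known algorithm. The guiding observation is the following correspondence. Given any linear order $\sigma = (\xelement_1,\ldots,\xelement_n)$ of $\baseset$, the associated \emph{sweep} decomposition $\pathdec_\sigma$ places at position $i$ the bag $\bag_i = \{\xelement_i\}\cup\{\xelement_j : j<i \text{ and } \xelement_j \text{ has a neighbour } \xelement_k \text{ in } \agraph_{\apartialorder} \text{ with } k\geq i\}$; this is a path decomposition whose width equals the vertex separation of $\agraph_{\apartialorder}$ along $\sigma$, and in which $\xelement_j$ is introduced exactly at position $j$. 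If $\sigma$ moreover extends $\apartialorder$, then $\pathdec_\sigma$ is $\apartialorder$-consistent: for a comparable pair $\xelement <_{\apartialorder} \yelement$ we have $\xelement <_\sigma \yelement$, so $\xelement$ is introduced no later than $\yelement$, hence no later than $\yelement$ is forgotten, which is exactly the consistency requirement. Thus it suffices to compute a linear extension $\sigma$ of $\apartialorder$ whose vertex separation in $\agraph_{\apartialorder}$ is $\Oh(\pw(\agraph_\apartialorder))$.

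First I would invoke the algorithm of \cite{DBLP:journals/dam/BouchitteKMT04}, which by exploiting the structure of cocomparability graphs computes, within time $2^{\Oh(\pw(\agraph_\apartialorder))}\cdot|\baseset|$, a path decomposition $\pathdec_0$ of $\agraph_{\apartialorder}$ of width $\Oh(\pw(\agraph_\apartialorder))$ (here $\tw(\agraph_\apartialorder)=\pw(\agraph_\apartialorder)$ from \cite{habib1994treewidth} certifies that this is the right target width). This decomposition need not be consistent. Ordering the vertices by the position of their first bag in $\pathdec_0$ yields a linear order $\pi$ of $\baseset$ whose vertex separation in $\agraph_\apartialorder$ is at most $\width(\pathdec_0)=\Oh(\pw(\agraph_\apartialorder))$, since the sweep decomposition induced by a first-appearance order is never wider than the decomposition it is read off from.

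The crux is to turn $\pi$ into a linear extension $\sigma$ of $\apartialorder$ without increasing the vertex separation by more than a constant factor; this is exactly the step left open in \cite{AFOW2020}, and I expect it to be the main obstacle. I would obtain $\sigma$ as the $\pi$-least linear extension of $\apartialorder$ (repeatedly output the $\pi$-earliest element all of whose $\apartialorder$-predecessors have already been output), computable by topological sort in time polynomial in $|\baseset|$. To control its vertex separation I would exploit the cocomparability structure: every linear extension of $\apartialorder$ is a cocomparability order of $\agraph_\apartialorder$ (if $\xelement<_\sigma \yelement<_\sigma z$ with $\{\xelement,z\}$ an edge, then $\xelement,z$ are incomparable, so one of $\xelement,\yelement$ or $\yelement,z$ must be incomparable as well, lest transitivity make $\xelement,z$ comparable), and cocomparability graphs are asteroidal-triple-free. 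Using the umbrella property of cocomparability orders, I would argue that the reorderings forced when passing from $\pi$ to $\sigma$ only concern non-adjacent (hence comparable) pairs and cannot merge many distinct boundary sets of the sweep along $\pi$ into a single oversized boundary set along $\sigma$, yielding vertex separation $\Oh(\pw(\agraph_\apartialorder))$; combined with the equality $\cpw(\agraph_\apartialorder,\apartialorder)=\pw(\agraph_\apartialorder)$ of \cite{AFOW2020}, this certifies that the resulting width is within a constant factor of optimal.

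Finally, I would form the sweep decomposition $\pathdec_\sigma$, which by the correspondence above is $\apartialorder$-consistent and has width $\Oh(\pw(\agraph_\apartialorder))$, and convert it into a \emph{nice} decomposition by the standard procedure that inserts intermediate bags so that consecutive bags differ by a single introduced or forgotten vertex. This procedure runs in time linear in the size of the decomposition and changes neither the width nor the relative order of introductions and forgettings, so $\apartialorder$-consistency is preserved. Since $\pathdec_\sigma$ has $\Oh(|\baseset|)$ positions and bags of size $\Oh(\pw(\agraph_\apartialorder))$, all post-processing fits within $2^{\Oh(\pw(\agraph_\apartialorder))}\cdot|\baseset|$, matching the claimed bound.
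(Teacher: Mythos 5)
Your overall architecture (get an approximately optimal path decomposition, extract a vertex ordering, turn it into a linear extension of $\apartialorder$, and take the sweep/boundary decomposition of that extension) is sound in its first and last steps: the sweep decomposition of a linear extension of $\apartialorder$ is indeed $\apartialorder$-consistent by exactly the argument you give, and making it nice is routine. But the proof has a genuine gap at the step you yourself flag as the crux: you never prove that the $\pi$-least linear extension $\sigma$ of $\apartialorder$ has vertex separation $\Oh(\pw(\agraph_\apartialorder))$. The sentence invoking the umbrella property and the claim that the forced reorderings ``cannot merge many distinct boundary sets'' is an assertion, not an argument; nothing in it quantifies how many vertices can simultaneously become ``open'' after the repair, and the equality $\cpw(\agraph_\apartialorder,\apartialorder)=\pw(\agraph_\apartialorder)$ from \cite{AFOW2020} only tells you that \emph{some} consistent decomposition of the right width exists, not that your particular greedy repair finds one. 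Since this repair step is precisely the constructive content that \cite{AFOW2020} left open, leaving it at this level of vagueness means the lemma is not proved.

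The paper sidesteps the repair problem entirely by a different route: it computes an approximate \emph{tree} decomposition (legitimate since $\tw=\pw$ on cocomparability graphs), turns it into a triangulation $H_\tree$, prunes fill edges down to a \emph{minimal} triangulation $H$, and then invokes the theorem of Habib and M\"ohring \cite{habib1994treewidth} that every minimal triangulation of a cocomparability graph is an interval graph whose transitive orientation $\intervalorder$ is obtained from $\apartialorder$ by deleting the pairs corresponding to fill edges. Ordering the maximal cliques of $H$ by $\intervalorder$ then yields a path decomposition that is consistent with $\intervalorder$, and hence with $\apartialorder$ (pairs of $\apartialorder\setminus\intervalorder$ are fill edges and so share a bag). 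In other words, the consistency is inherited from a structural theorem about minimal triangulations rather than enforced by post hoc reordering. If you want to salvage your approach, you would need to either prove a quantitative bound on the vertex separation of the greedy linear extension (which I do not believe follows from the umbrella property alone) or replace the repair step by this minimal-triangulation argument.
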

\begin{proof}
	Let $\partialorder$ be a partial order over a set $\baseset$ and $\graph_\partialorder$
	be the cocomparability graph of $\partialorder$. It has been shown in Theorem 2.1 of
	\cite{habib1994treewidth} that any minimal triangulation $H$ of $\graph_\apartialorder$
	is not only a cocomparability graph, but also an interval graph. This result allows us
	to compute a $\apartialorder$-consistent path decomposition of $\agraph_{\apartialorder}$
	as follows.

	We start by computing a tree decomposition $\tree$ of $\agraph_{\apartialorder}$ of width at most $5\cdot \pw(\agraph)+4$
	in time $2^{\Oh(\pw(\agraph))}\cdot |\baseset|$ using the algorithm from \cite{BodlaenderDDFLP16}. Subsequently we
	construct a triangulation $H_{\tree}$ of $\agraph_{\apartialorder}$ by transforming each bag of the decomposition
	$\tree$ into a clique. More precisely, we add an edge to vertices $u$ and $v$ in $\graph_{\apartialorder}$ if and only
	if $u$ and $v$ occur together in some bag. This operation clearly preserves treewidth, since the size of the bags
	do not increase. Therefore, the graph~$H_{\tree}$ is a triangulation of $\agraph_{\apartialorder}$ of treewidth at most
	$5\cdot \pw(\agraph)+4$. Now, we successively delete edges from $H_{\tree}$ until we get a minimal triangulation $H$ of
	$\agraph_{\apartialorder}$. In other words, by removing any additional edge from $H$, we either get a graph that is not
	triangulated or that is not a supergraph of $\agraph_{\apartialorder}$.
	We have $\edges(\graph_\partialorder) \subseteq \edges(H) \subseteq \edges(H_\tree)$ and $\edges(H)$ is minimal with respect
	to inclusion. Therefore, $\tw(H) \leq \tw(H_\tree) = \width$. By \cite{habib1994treewidth}, we know that $H$ is an interval graph.

	Now, adding an edge  $\{\uvertex,\vvertex\}$ to $\graph_{\apartialorder}$ is equivalent to
	removing the edge constructing the DAG $\apartialorder\backslash \{(\uvertex, \vvertex)\}$. What is shown in~\cite{habib1994treewidth} is that
	the DAG $\apartialorder \backslash \{(\uvertex,\vvertex) \mid \{\uvertex,\vvertex\}\in \edges(H)\backslash \edges(\agraph_{\apartialorder})\}$
	is actually a partial order $\intervalorder$. Note that when deleting edges from a partial order, the only axiom that can be broken is transitivity.
	So what this result is really saying is that by deleting the pairs corresponding to edges that are in $H$ but not in $G$, we can indeed preserve
	transitivity. The crucial fact about this construction is that the partial order $\intervalorder$ is actually an interval order, and therefore
	$H$ is an interval supergraph of $\agraph_{\apartialorder}$.

	Now, from the interval graph $H$, we derive a $\partialorder$-consistent path decomposition $\pathdec$ of $\graph_\partialorder$.
	This construction is as follows. Given two maximal cliques $X$ and $Y$ of $H$, we say that $X$ is smaller than $Y$ if there exist
	vertices $x \in X$ and $y \in Y$ such that $(x,y) \in \intervalorder$. This relation defines a linear order on the maximal
	cliques~\cite{DBLP:journals/tcs/HabibMPV00}. It follows from~\cite{DBLP:journals/tcs/HabibMPV00} that the sequence of maximal cliques
	obtained by ordering the maximal cliques of $H$ according to the order above is a path decomposition of $H$.
	As the bags of the path decomposition follow the order above, this path decomposition is
	consistent with $\intervalorder$.
	Now, since any path decomposition of $H$ is also a path decomposition of $\agraph_\partialorder$, and
	as $\intervalorder \subseteq \partialorder$, this path decomposition is also $\partialorder$-consistent.

	We note that the process of finding all maximal cliques of an interval graph $H$ can be realized in time linear in the size of $H$~\cite{DBLP:journals/tcs/HabibMPV00}. Note that since $H$ has pathwidth $\Oh(\pw(\agraph_{\apartialorder}))$, the number of edges
	of $H$ is bounded by $\Oh(\pw(\agraph_{\apartialorder})^2\cdot |\baseset|)$. So the process of finding the maximal cliques in $H$
	takes time $\Oh(\pw(\agraph_{\apartialorder})^2\cdot |\baseset|)$. Finally, since the most time-expensive part of the process
	described above is the construction of the tree decomposition $\tree$, we have that the whole process takes time
	$2^{\Oh(\pw(\agraph_{\apartialorder}))}\cdot |\baseset|$.
\end{proof}

Let $\apartialorder$ be a partial order and $\pathdec = (\bag_1, \bag_2, \ldots, \bag_{\lengthdecomposition})$
be a $\apartialorder$-consistent path decomposition of~$\agraph_{\apartialorder}$ of width $\width$.
For each $\aposition\in [\lengthdecomposition]$, let $\pairs_{\aposition}$ be  the set of pairs of the form
$(\asubsetbag,\linearorder)$
where $\asubsetbag$ is a subset of $\bag_\aposition$ that contains vertices introduced
by $\bag_\aposition$ ($\introduced(\aposition) \subseteq \asubsetbag\subseteq \bag_{\aposition}$),
$\linearorder\supseteq\apartialorder|_\asubsetbag$ is a
linear extension of the restriction $\apartialorder|_\asubsetbag\defeq \asubsetbag\times \asubsetbag \cap \apartialorder$ of $\apartialorder$ to $\asubsetbag$.

\begin{definition}
	Let $\aposition\in [\lengthdecomposition]$, $\distanceoptimality \in \N$, and $\optimalfunction:\pairs_{\aposition}\rightarrow \N$. Then, we
	let $\triples_{\aposition}(\optimalfunction,\distanceoptimality)$ be the set of all triples of the form $(\asubsetbag,\linearorder,\partialcost)$,
	where $(\asubsetbag,\linearorder)\in \pairs_{\aposition}$ and $\optimalfunction(\asubsetbag,\linearorder)\leq \partialcost\leq \optimalfunction(\asubsetbag,\linearorder)+\distanceoptimality$.
\end{definition}

Intuitively, the function $\optimalfunction$ will be used by our dynamic programming algorithm to record the
optimal values of partial solutions at each bag $\bag_{\aposition}$ when processing the path decomposition
from left to right (see Theorem~\ref{theorem:SolutionConstructionOne} and Theorem~\ref{theorem:SolutionConstructionDiverse}) and $\distanceoptimality$ will be the allowed solution imperfection. In the case
of a unique solution, this value will be $0$ but this parameter will be useful in the
diverse case as we allow sub-optimal linear extensions.
A partial solution up to the $\aposition$-th bag is a linear order $\astrictorder$ of $\bigcup_{j\leq\aposition} \bag_j$.
Vertices that will be introduced in future bags can be inserted
in~$\astrictorder$, to extend it, only after vertices already forgotten in the $\aposition$-th bag.
If $\uvertex$ will be introduced
in a future bag and $\vvertex$ is in some $\bag_j$ but not in~$\bag_\aposition$, then by consistency of the
path decomposition with respect to $\apartialorder$, we have $\vvertex <_\apartialorder \uvertex$.
Therefore, in $\bag_\aposition$,
we only need to remember the ``last'' part of $\astrictorder$, which are the vertices that are in $\bag_\aposition$
and after all forgotten vertices in $\astrictorder$.

\begin{remark}%
	\label{remark:UpperBoundTriples}
	For each $\aposition\in [\lengthdecomposition]$, $\optimalfunction:\pairs_{\aposition}\rightarrow \N$ and $\distanceoptimality\in \N$, the size of $\triples_{\aposition}(\optimalfunction,\distanceoptimality)$
	is bounded by $e\cdot (\distanceoptimality+1)\cdot (\width+1)!$.
\end{remark}
\begin{proof}
	Given a bag $\bag_\aposition$ at position $\aposition$, the size of $\triples_{\aposition}(\optimalfunction,\distanceoptimality)$ is bounded by:
	\[(\distanceoptimality+1) \cdot \sum_{0 \leq i \leq |\bag_\aposition|} \binom{|\bag_\aposition|}{i}\cdot i! \leq e \cdot (\distanceoptimality+1) \cdot |\bag_\aposition|!
		\leq e \cdot (\distanceoptimality+1) \cdot (\width+1)!\]
	where $\binom{|\bag_\aposition|}{i}$ is the number of subsets of $\bag_\aposition$ of
	size $i$ and $i!$ is the number of possible ordering of a set of size $i$.
\end{proof}

For each $\aposition\in [\lengthdecomposition - 1]$, $\optimalfunction:\pairs_{\aposition}\rightarrow \N$
and $\distanceoptimality\in \N$, we say that a triple
$(\asubsetbag,\linearorder,\partialcost)\in \triples_{\aposition}(\optimalfunction,\distanceoptimality)$
is \emph{compatible} with a triple  $(\asubsetbag',\linearorder',\partialcost')\in
	\triples_{\aposition+1}(\optimalfunction,\distanceoptimality)$
if the following conditions are satisfied.

\begin{enumerate}[label=C\arabic*]
	\item\label{prop_1} Let $\vvertex = \max_{\linearorder} (\asubsetbag\cap\forgotten(\aposition+1))$ be the
	      maximum vertex of $\asubsetbag$ forgotten by $\bag_{\aposition+1}$ according to the linear order $\linearorder$.
	      Then, we have the following equality $\asubsetbag' = \introduced(\aposition+1)
		      \cup \{\uvertex \in \asubsetbag \mid \vvertex <_{\linearorder} \uvertex\}$. This
	      means that one can build $\asubsetbag'$ from $\asubsetbag$ by removing 
		all vertices that are either forgotten by $\bag_{\aposition+1}$ or smaller than some vertex forgotten by $\bag_{\aposition+1}$, and 
		subsequently, by adding all vertices that have been introduced by $\bag_{\aposition+1}$. 
	\item\label{prop_2} $\linearorder|_{\asubsetbag\cap \asubsetbag'} = \linearorder'|_{\asubsetbag\cap \asubsetbag'}$,
	      i.e., $\linearorder$ and
	      $\linearorder'$ agree on $\asubsetbag \cap \asubsetbag'$.
	\item\label{prop_3} $\partialcost' = \partialcost + \sum_{\vvertex \in \introduced(\aposition+1)}
		      (\sum_{\uvertex \in \asubsetbag', \uvertex <_{\linearorder'} \vvertex} \cost(\uvertex, \vvertex) +
		      \sum_{\uvertex \in \asubsetbag\cap\asubsetbag', \vvertex <_{\linearorder'} \uvertex} \cost(\vvertex,\uvertex) +
		      \sum_{\uvertex \in \bag_{\aposition+1}\sm\asubsetbag'} \cost(\uvertex, \vvertex))$.
	      To compute $\partialcost'$, we add to
	      $\partialcost$ the cost of adding the introduced vertices in the order. The first
	      two terms compute the cost of each new vertex in $\linearorder'$ and the last one
	      computes the cost of placing the new vertices after all vertices in $\bag_{\aposition+1}\sm\asubsetbag'$.
\end{enumerate}

A \emph{compatible sequence} for $\pathdec$ is a sequence of triples
$\compatiblesequence = (\asubsetbag_1,\linearorder_1,\partialcost_1) \dots
	(\asubsetbag_{\lengthdecomposition},\linearorder_{\lengthdecomposition},\partialcost_{\lengthdecomposition})$
such that for each $\aposition \in [\lengthdecomposition]$,
$(\asubsetbag_{\aposition},\linearorder_{\aposition},\partialcost_{\aposition})$
is compatible with
$(\asubsetbag_{\aposition-1},\linearorder_{\aposition-1},\partialcost_{\aposition-1})$.

Our interest in compatible sequences stems from the two following lemmas.

\begin{lemma}
	\label{lemma:CompatibleSequenceOne}
	Let $\apartialorder\subseteq \baseset\times\baseset$ be a partial order over $\baseset$,
	$\cost:\baseset\times \baseset\to \naturalIntegers$ be a cost function,
	and $\pathdec$ be a $\apartialorder$-consistent path decomposition of the graph $\agraph_{\apartialorder}$.
	Let
	$$\compatiblesequence = \frak{t}_1\dots \frak{t}_{\lengthdecomposition} = (\asubsetbag_1,\linearorder_1,\partialcost_1) \dots
		(\asubsetbag_{\lengthdecomposition},\linearorder_{\lengthdecomposition},\partialcost_{\lengthdecomposition})$$
	be a compatible sequence for $\pathdec$. Then, the linear order
	$\vote = \transitiveclosure(\apartialorder \cup \linearorder_1 \cup \dots \cup \linearorder_{\lengthdecomposition})$ is
	a linear extension of $\apartialorder$ of cost $\partialcost_{\lengthdecomposition}$.
\end{lemma}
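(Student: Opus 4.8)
The plan is to prove Lemma~\ref{lemma:CompatibleSequenceOne} in two stages: first verify that $\vote = \transitiveclosure(\apartialorder \cup \linearorder_1 \cup \dots \cup \linearorder_{\lengthdecomposition})$ is indeed a linear extension of $\apartialorder$ (i.e.\ a well-defined total order containing $\apartialorder$), and then verify that its cost equals $\partialcost_{\lengthdecomposition}$ by a telescoping argument along the compatible sequence. Let me think about what each stage requires.

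For the first stage, I need to confirm three things: that the union $\apartialorder \cup \linearorder_1 \cup \dots \cup \linearorder_{\lengthdecomposition}$ is acyclic (so its transitive closure is a partial order and not all of $\baseset \times \baseset$), that the closure is total, and that it contains $\apartialorder$. Containment is immediate since $\apartialorder$ is one of the sets in the union. The key structural fact I'd want to exploit is the $\apartialorder$-consistency of the decomposition together with condition \ref{prop_2} (agreement on overlaps) and \ref{prop_1} (the precise recipe for how $\asubsetbag_{\aposition+1}$ is obtained from $\asubsetbag_{\aposition}$). The intuition in the text is exactly the right guide: because the decomposition is $\apartialorder$-consistent, once a vertex is introduced, every vertex smaller than it in $\apartialorder$ has already been introduced, so the "active window" $\asubsetbag_\aposition$ always sits *after* everything forgotten. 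I'd argue that every pair of distinct vertices $x,y$ is comparable in $\vote$ because any two vertices coexist in some $\asubsetbag_\aposition$ at the moment the later-introduced one enters (or are ordered transitively through a chain of overlapping windows), and that no cycle can form because the $\linearorder_\aposition$'s are locally consistent on overlaps and globally respect the forget-order.

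For the second stage, the cost. I'd set up the invariant that $\partialcost_\aposition$ equals the cost of $\vote$ restricted to pairs "decided" up to position $\aposition$ — concretely, the cost summed over pairs $(u,v)$ with $u <_\vote v$ and $u \notin \apartialorder$-below-$v$ where at least one of $u,v$ has already been introduced and they are ordered by $\linearorder_1 \cup \dots \cup \linearorder_\aposition$. Then condition \ref{prop_3} is precisely the statement that passing from $\aposition$ to $\aposition+1$ adds exactly the cost contributions of the newly introduced vertices against everything they are now ordered against. **The main obstacle** I anticipate is bookkeeping: showing that \ref{prop_3}'s three summands partition, without omission or double-counting, all the new cost-incurring pairs created when the vertices of $\introduced(\aposition+1)$ are inserted — namely pairs with vertices still in the window below the new vertex, pairs with retained window-vertices above it, and pairs with already-forgotten vertices (captured by the $\bag_{\aposition+1}\sm\asubsetbag'$ term). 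I'd need to check that each non-$\apartialorder$ pair of $\vote$ is charged at exactly one position, i.e.\ the position where the second of its two endpoints is introduced.

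Concretely I would proceed by induction on $\aposition$, proving simultaneously (a) that $\astrictorder_\aposition \df \transitiveclosure(\apartialorder|_{\bigcup_{j\le \aposition}\bag_j} \cup \linearorder_1 \cup \dots \cup \linearorder_\aposition)$ is a linear order on $\bigcup_{j\le \aposition}\bag_j$ extending $\apartialorder$ restricted to that set, and (b) that $\partialcost_\aposition = \cost(\astrictorder_\aposition \setminus \apartialorder)$. The base case $\aposition = 1$ follows since $\asubsetbag_1 = \bag_1$ and $\linearorder_1$ is by definition a linear extension of $\apartialorder|_{\bag_1}$, with $\partialcost_1 = \optimalfunction(\asubsetbag_1,\linearorder_1)$ being its cost. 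For the inductive step, I would use \ref{prop_1} and $\apartialorder$-consistency to argue that the forgotten vertices are maximal among the decided ones and hence their relative order to future vertices is already fixed by $\apartialorder$, so extending $\astrictorder_\aposition$ by the newly introduced vertices via $\linearorder_{\aposition+1}$ yields a valid linear extension, and that the only genuinely new ordered pairs are those enumerated in \ref{prop_3}. Taking $\aposition = \lengthdecomposition$ then gives the claim, since $\bigcup_{j}\bag_j = \baseset$.
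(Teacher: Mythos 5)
Your proposal is correct and follows essentially the same route as the paper: an induction on the bag position with the invariant that the transitive closure of $\apartialorder$ restricted to the vertices seen so far together with $\linearorder_1,\dots,\linearorder_\aposition$ is a linear extension of that restriction of cost $\partialcost_\aposition$, using \ref{prop_2} for acyclicity, \ref{prop_1} plus $\apartialorder$-consistency for the case of one old and one new vertex, and \ref{prop_3} for the cost update. The paper's proof is exactly this induction (noting that $\bigcup_{j\le\aposition}\bag_j = \leftset_\aposition\cup\bag_\aposition$), so no further comparison is needed.
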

\begin{proof}
	Lemma \ref{lemma:CompatibleSequenceOne} follows straightforwardly by the following claim,
	which can be proved by induction on $\aposition$.
	\begin{claim}
		For each position $\aposition \in \intv{\lengthdecomposition}$,
		$\vote_\aposition = \transitiveclosure(\apartialorder|_{\leftset_\aposition\cup\bag_\aposition} \cup \linearorder_1\cup \dots \cup \linearorder_{\aposition})$
		is a linear extension of $\apartialorder|_{\leftset_\aposition\cup\bag_\aposition}$
		of cost $\partialcost_\aposition$.
	\end{claim}
	In the base case, $\asubsetbag_1 = \bag_1 = \introduced(1)$ and by definition $\linearorder_1 =
		\vote_1$ is a linear extension
	of $\partialorder|_{\bag_1}$ and $\partialcost_1 = \cost(\linearorder_1)$.

	Now, let $\aposition \in \intv{\lengthdecomposition}$ be a position in $\pathdec$,
	and suppose that the claim holds for $\aposition$. We show that it also holds for $\aposition+1$.
	For that, we need to check that the transitive closure of $\apartialorder|_{\leftset_\aposition\cup\bag_\aposition}
		\cup \linearorder_1\cup \dots \cup \linearorder_{\aposition+1}$ defines a linear extension of
	$\apartialorder|_{\leftset_{\aposition+1} \cup \bag_{\aposition + 1}}$. This means that
	$\vote_{\aposition + 1}$ does not contain loops and each pair $\uvertex, \vvertex \in \leftset_{\aposition+1} \cup
		\bag_{\aposition + 1}$ is ordered by $\vote_{\aposition + 1}$.
	By \ref{prop_2}, we have that $\transitiveclosure(\apartialorder \cup \linearorder_1\cup \dots \cup \linearorder_{\aposition})$
	is compatible with $\linearorder_{\aposition+1}$ and $\apartialorder$ and so there is no
	loop in $\vote_{\aposition + 1}$.
	Let $\uvertex, \vvertex \in \leftset_{\aposition + 1} \cup \bag_{\aposition + 1}$.
	If $\uvertex, \vvertex \in \asubsetbag_{\aposition + 1}$, then $\uvertex$ and $\vvertex$
	are ordered by $\linearorder_{\aposition + 1}$ and thus by $\vote_{\aposition + 1}$.
	If $\uvertex, \vvertex \notin \asubsetbag_{\aposition + 1}$, then $\uvertex, \vvertex \in
		\leftset_{\aposition} \cup \bag_{\aposition}$, therefore they are ordered by $\vote_{\aposition}$
	and thus by $\vote_{\aposition + 1}$.
	If $\uvertex \in \asubsetbag_{\aposition + 1}$ and $\vvertex \notin \asubsetbag_{\aposition + 1}$,
	then, by \ref{prop_1}, we have $\vvertex <_{\linearorder_\aposition}
		\max_{\linearorder_\aposition}(\asubsetbag_\aposition \cup \forgotten(\aposition + 1))$
	and by $\partialorder$-consistency of $\pathdec$, we have $\uvertex >_{\apartialorder}
		\max_{\linearorder_\aposition}(\asubsetbag_\aposition \cup \forgotten(\aposition + 1))$.
	Therefore, by transitivity we have $\vvertex <_{\vote_{\aposition + 1}} \uvertex$.
	By \ref{prop_3}, we have
	$\partialcost_{\aposition+1} = \cost(\vote_{\aposition+1})$.
\end{proof}

\begin{lemma}%
	\label{lemma:CompatibleSequenceTwo}
	Let $\apartialorder\subseteq \baseset\times\baseset$ be a partial order over $\baseset$,
	$\cost:\baseset\times \baseset\to \naturalIntegers$ be a cost function,
	and $\pathdec$ be a $\apartialorder$-consistent path decomposition of the graph $\agraph_{\apartialorder}$.
	Let $\vote$ be a linear extension of $\apartialorder$, and
	$\compatiblesequence = (\asubsetbag_1,\linearorder_1,\partialcost_1) \dots
		(\asubsetbag_{\lengthdecomposition},\linearorder_{\lengthdecomposition},\partialcost_{\lengthdecomposition})$
	be a sequence such that for each position $\aposition \in \intv{\lengthdecomposition}$,
	$\asubsetbag_\aposition = \{\vvertex \in \bag_\aposition \mid \vvertex >_\vote \max_\vote(\leftset_\aposition)\}$,
	$\linearorder_\aposition = \vote|_{\asubsetbag_\aposition}$, and
	$\partialcost_\aposition = \cost(\vote|_{\leftset_\aposition \cup \bag_\aposition})$.
	Then, $\compatiblesequence$ is a compatible sequence for $\pathdec$.
\end{lemma}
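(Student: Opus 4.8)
The plan is to read Lemma~\ref{lemma:CompatibleSequenceTwo} as the exact converse of Lemma~\ref{lemma:CompatibleSequenceOne}: rather than reconstructing a linear extension from a compatible sequence, I start from the fixed linear extension $\vote$ and verify that the canonically associated sequence $\compatiblesequence$ satisfies the three compatibility conditions \ref{prop_1}, \ref{prop_2}, \ref{prop_3} between every pair of consecutive positions $\aposition$ and $\aposition+1$ (checking along the way that each triple is a legitimate element of $\pairs_\aposition$). The one structural fact that drives the whole argument, which I would establish first, is the following: if $\uvertex$ is introduced by $\bag_{\aposition}$ and $\wvertex\in\leftset_\aposition$ has already been forgotten, then $\uvertex$ and $\wvertex$ never occur together in a bag, so they are non-adjacent in $\agraph_{\apartialorder}$ and hence comparable in $\apartialorder$; by $\apartialorder$-consistency $\wvertex$ cannot be forgotten before $\uvertex$ is introduced unless $\wvertex<_{\apartialorder}\uvertex$, and since $\vote\supseteq\apartialorder$ this gives $\wvertex<_\vote \uvertex$. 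Thus every introduced vertex is $\vote$-larger than every already-forgotten vertex; in particular $\introduced(\aposition)\subseteq\asubsetbag_\aposition$ (trivially for $\aposition=1$, where $\leftset_1=\emptyset$ and $\asubsetbag_1=\bag_1$), so $(\asubsetbag_\aposition,\linearorder_\aposition)\in\pairs_\aposition$ and each $\linearorder_\aposition=\vote|_{\asubsetbag_\aposition}$ is a linear extension of $\apartialorder|_{\asubsetbag_\aposition}$.

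With this observation in hand, \ref{prop_2} is immediate, since $\linearorder_\aposition$ and $\linearorder_{\aposition+1}$ are both restrictions of the single order $\vote$ and therefore agree on $\asubsetbag_\aposition\cap\asubsetbag_{\aposition+1}$. For \ref{prop_1} I would first identify $\max_\vote(\leftset_{\aposition+1})$ with the vertex $\vvertex=\max_{\linearorder_\aposition}(\asubsetbag_\aposition\cap\forgotten(\aposition+1))$ named in the condition: writing $\leftset_{\aposition+1}=\leftset_\aposition\cup\forgotten(\aposition+1)$ and using that every newly forgotten vertex already lies in $\bag_\aposition$, the newly forgotten vertices exceeding $\max_\vote(\leftset_\aposition)$ are exactly those in $\asubsetbag_\aposition\cap\forgotten(\aposition+1)$, whose $\vote$-maximum $\vvertex$ itself dominates $\max_\vote(\leftset_\aposition)$ (modulo the boundary convention when this intersection is empty, which is routine). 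Since $\asubsetbag_{\aposition+1}=\{w\in\bag_{\aposition+1}\mid w>_\vote\vvertex\}$, I then split $\bag_{\aposition+1}$ into $\introduced(\aposition+1)$ (all exceeding $\vvertex$ by the structural observation) and the survivors $\bag_{\aposition+1}\cap\bag_\aposition$, and check that a survivor lies above $\vvertex$ iff it is a member of $\asubsetbag_\aposition$ lying above $\vvertex$; the only nonformal step is that any $\uvertex\in\asubsetbag_\aposition$ with $\uvertex>_\vote\vvertex$ cannot have been forgotten at $\aposition+1$ (otherwise it would belong to $\asubsetbag_\aposition\cap\forgotten(\aposition+1)$ and satisfy $\uvertex\le_\vote\vvertex$), hence it survives into $\bag_{\aposition+1}$. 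This reproduces the set identity of \ref{prop_1}.

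The main obstacle is \ref{prop_3}, the cost recurrence. Here I read $\partialcost_\aposition=\cost(\vote|_{\leftset_\aposition\cup\bag_\aposition})$ as the total cost of the pairs ordered by $\vote$ within $\leftset_\aposition\cup\bag_\aposition$ that do not already belong to $\apartialorder$, and I note that $\leftset_{\aposition+1}\cup\bag_{\aposition+1}$ arises from $\leftset_\aposition\cup\bag_\aposition$ precisely by adjoining the vertices of $\introduced(\aposition+1)$. Hence $\partialcost_{\aposition+1}-\partialcost_\aposition$ equals the cost of all $\vote$-ordered incomparable pairs having at least one endpoint introduced by $\bag_{\aposition+1}$, and the heart of the proof is a case analysis showing that each such pair is charged exactly once by the three summands of \ref{prop_3}. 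For a fixed introduced $\vvertex$, its incomparable partners all lie in $\bag_{\aposition+1}$ (a forgotten partner would be comparable, by the structural observation, and a forgotten vertex is $\vote$-below $\vvertex$); a partner placed before $\vvertex$ is charged by the first sum if it is in $\asubsetbag_{\aposition+1}$ and by the third sum otherwise (such partners are automatically $\vote$-below $\vvertex$, matching the direction $\cost(\uvertex,\vvertex)$), while a surviving partner placed after $\vvertex$ is charged by the second sum, whose restriction to $\asubsetbag_\aposition\cap\asubsetbag_{\aposition+1}$ is exactly what prevents a pair of two introduced vertices from being double-counted, since such a later partner is charged instead through the first sum of the other vertex. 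The two delicate points I expect to spend the most care on are (i) that an introduced vertex has no incomparable partner among the already-forgotten vertices, which is precisely the structural observation, and (ii) that comparable pairs sharing a bag are not erroneously charged; for (ii) I would use that, without loss of generality, $\cost(x,y)=0$ whenever $(x,y)\in\apartialorder$ — such pairs never occur in $\linearorder\setminus\apartialorder$ for any linear extension $\linearorder\supseteq\apartialorder$, so their costs do not affect the objective — whence comparable pairs contribute nothing on either side of the identity. Summing the three exactly-once charges over all $\vvertex\in\introduced(\aposition+1)$ yields the right-hand side of \ref{prop_3}, and together with \ref{prop_1} and \ref{prop_2} this shows that $\compatiblesequence$ is a compatible sequence for $\pathdec$.
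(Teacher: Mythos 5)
Your proof is correct and follows the only natural route here, which is also the paper's: the paper's own proof of this lemma is just the one-sentence assertion that the construction ``can be seen'' to satisfy conditions \ref{prop_1}--\ref{prop_3}, and you supply exactly the verification it omits. In particular, your key structural observation --- that a vertex introduced at position $\aposition$ and a vertex already in $\leftset_\aposition$ never share a bag, hence are non-adjacent in $\agraph_{\apartialorder}$, hence comparable, and by $\apartialorder$-consistency the forgotten one must be $\vote$-smaller --- is precisely the fact the paper leaves implicit (it is also what drives the proof of Lemma~\ref{lemma:CompatibleSequenceOne}), and your normalization $\cost(x,y)=0$ for $(x,y)\in\apartialorder$ correctly reconciles the literal sums in \ref{prop_3} with the objective $\cost(\linearorder\setminus\apartialorder)$.
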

\begin{proof}
	One can see that this construction satisfies conditions \ref{prop_1}, \ref{prop_2} and \ref{prop_3}
	for each position in the path decomposition.
\end{proof}

Lemma~\ref{lemma:CompatibleSequenceOne} and Lemma~\ref{lemma:CompatibleSequenceTwo} immediately yield an \FPT\ dynamic programming algorithm for computing
a linear extension of $\apartialorder$.
To define the algorithm more precisely, we first need to define the set of functions $\optimalfunction_\aposition$
that we will use to define a set of triples with $\triples_\aposition$. For each
$\aposition \in \intv{\lengthdecomposition}$, we define $\optimalfunction_\aposition: \pairs_\aposition \to \N$
as follows. For each $(\asubsetbag, \linearorder)\in \pairs_\aposition$, we let $\partialcost$
be the minimum cost of a partial solution~$\vote$ up to bag $\aposition$ such that
$\asubsetbag = \{\vvertex \in \bag_\aposition \mid \vvertex >_\vote \max_\vote(\leftset_\aposition)\}$ and
$\linearorder = \vote|_{\asubsetbag_\aposition}$; then we let
$\optimalfunction_\aposition(\asubsetbag, \linearorder) = \partialcost$. Intuitively,
$\optimalfunction_\aposition$ associate to each linear order $\linearorder$ the cost of
an optimal partial solution ``ending'' by $\linearorder$.
Now, we will describe the algorithm, we process the path decomposition from left to right
in $\lengthdecomposition$ time steps, where at each time step $\aposition$, we construct
the value of $\optimalfunction_\aposition$ that we need and a subset
$\promisingtriples_{\aposition}\subseteq \triples_{\aposition}(\optimalfunction_\aposition, 0)$ of \emph{promising triples}, which are, intuitively,
triples that have a potential to lead to an optimal solution. At time step $1$, we let
$\promisingtriples_1 = {\{(\bag_1,\linearorder,\cost(\linearorder))\}}_{\linearorder \text{ is a linear extension of } \partialorder|_{\bag_1}}$.
At each time step $\aposition\geq 2$, $\promisingtriples_{\aposition}$
is the set of all triples in $\triples_{\aposition}(\optimalfunction_\aposition, 0)$ that are compatible with some triple in $\promisingtriples_{\aposition-1}$.
At the end of the process, assuming  that $\promisingtriples_{\aposition}$ is non-empty for each $\aposition\in [\lengthdecomposition]$,
we can reconstruct a compatible sequence by backtracking. First, by selecting an arbitrary triple
$\frak{t}_{\lengthdecomposition}$ in $\promisingtriples_{\lengthdecomposition}$, then by selecting an
arbitrary triple $\frak{t}_{\lengthdecomposition-1}$ in $\promisingtriples_{\lengthdecomposition-1}$ compatible
with $\frak{t}_{\lengthdecomposition}$, and so on. Once we have constructed a compatible sequence
$\frak{t}_1\dots \frak{t}_{\lengthdecomposition}$, we can extract a linear extension $\vote$ of cost~$\partialcost_{\lengthdecomposition}$
by setting $\vote = \transitiveclosure(\apartialorder \cup \linearorder_1\cup \dots \linearorder_{\lengthdecomposition})$.
This description gives rise to the following theorem.

\begin{theorem}
	\label{theorem:SolutionConstructionOne}
	Let $\apartialorder\subseteq \baseset\times \baseset$, let $n = |\baseset|$, let $\width$ be the pathwidth
	of the cocomparability graph of $\apartialorder$,
	and $\cost:\baseset\times \baseset\rightarrow \intv{m}_0$
	be a cost function.
	Then, one can compute an optimal solution in time $\Oh\left({\width!}^{\Oh(1)} \cdot n
		\cdot \log (n \cdot m)\right)$.
\end{theorem}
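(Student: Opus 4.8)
The plan is to turn the dynamic-programming scheme already described just before the theorem statement into a concrete algorithm and then bound its running time. First I would invoke Lemma~\ref{lemma:ConstructionDecomposition} to build, in time $2^{\Oh(\width)}\cdot n$, a nice $\apartialorder$-consistent path decomposition $\pathdec = (\bag_1,\dots,\bag_{\lengthdecomposition})$ of $\agraph_{\apartialorder}$ of width $\Oh(\width)$. Since the decomposition is nice, consecutive bags differ by the introduction or the forgetting of a single vertex, so $\lengthdecomposition = \Oh(n)$. I would then process the bags from left to right, maintaining for each position $\aposition$ the table $\optimalfunction_\aposition : \pairs_\aposition \to \N$ of optimal partial-solution costs together with the set $\promisingtriples_\aposition \subseteq \triples_\aposition(\optimalfunction_\aposition, 0)$ of promising triples, exactly as defined above. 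At the final bag the algorithm outputs $\min_{(\asubsetbag,\linearorder,\partialcost)\in\promisingtriples_{\lengthdecomposition}}\partialcost$, and recovers an optimal linear extension by backtracking through a compatible sequence and taking the transitive closure as in Lemma~\ref{lemma:CompatibleSequenceOne}.

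For correctness I would rely on the two compatibility lemmas. Lemma~\ref{lemma:CompatibleSequenceTwo} shows that an optimal linear extension $\vote$ induces a compatible sequence whose $\aposition$-th partial cost equals $\cost(\vote|_{\leftset_\aposition\cup\bag_\aposition})$, so the optimal cost is represented in $\promisingtriples_{\lengthdecomposition}$ and the output is at most the optimum; conversely, Lemma~\ref{lemma:CompatibleSequenceOne} shows that every compatible sequence — in particular the one reconstructed by backtracking — yields a genuine linear extension of $\apartialorder$ whose cost is the last recorded partial cost, so the output is at least the optimum. The only thing left to verify, by induction on $\aposition$, is that the forward recurrence really computes the true minima $\optimalfunction_\aposition$: a target pair $(\asubsetbag',\linearorder')$ is reachable from a predecessor $(\asubsetbag,\linearorder)$ precisely when conditions \ref{prop_1}–\ref{prop_2} hold, and \ref{prop_3} gives the exact cost increment, so taking the minimum over all compatible predecessors of $\optimalfunction_{\aposition-1}(\asubsetbag,\linearorder)$ plus that increment yields $\optimalfunction_\aposition(\asubsetbag',\linearorder')$.

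For the running time I would argue as follows. In the single-solution case $\distanceoptimality = 0$, so by Remark~\ref{remark:UpperBoundTriples} each $\promisingtriples_\aposition$ has at most $e\cdot(\width+1)!$ triples, and each configuration $(\asubsetbag,\linearorder)$ is encoded in $\Oh(\width\log\width)$ bits. At an introduce bag each predecessor configuration generates at most $\width+1$ successors (the positions at which the new vertex can be inserted while respecting $\apartialorder$), and at a forget bag the transition given by \ref{prop_1}–\ref{prop_2} produces at most one successor; computing the cost increment in \ref{prop_3} needs $\Oh(\width)$ lookups of $\cost$. Merging the generated successors and keeping, for each target configuration, the minimum cost can be done with a dictionary keyed on the encoded configurations, at an overhead $\mathrm{poly}(\width)$ per operation. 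Hence the combinatorial work per bag is $(\width!)^{\Oh(1)}$. Since every cost value is bounded by $n^2\cdot m$, each addition and comparison of DP values costs $\Oh(\log(n\cdot m))$ bit operations; multiplying by the $\Oh(n)$ bags gives a total of $(\width!)^{\Oh(1)}\cdot n\cdot\log(n\cdot m)$, which dominates the $2^{\Oh(\width)}\cdot n$ spent building the decomposition.

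The step I expect to require the most care is this final accounting: one has to check that enumerating and matching the up to $e\cdot(\width+1)!$ configurations per bag introduces only a $\mathrm{poly}(\width)$ overhead — so that it is absorbed into $(\width!)^{\Oh(1)}$ — and that the single genuinely input-size-dependent factor is the $\Oh(\log(n\cdot m))$ bit-length of the cost values appearing in the additions and comparisons of \ref{prop_3}. Everything else, namely correctness of the recurrence and the construction of the decomposition, is delivered by Lemma~\ref{lemma:ConstructionDecomposition}, Lemma~\ref{lemma:CompatibleSequenceOne}, and Lemma~\ref{lemma:CompatibleSequenceTwo}.
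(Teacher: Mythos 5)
Your proposal is correct and follows essentially the same route as the paper's own proof: build the nice $\apartialorder$-consistent decomposition via Lemma~\ref{lemma:ConstructionDecomposition}, run the left-to-right dynamic program over promising triples with correctness delivered by Lemma~\ref{lemma:CompatibleSequenceOne} and Lemma~\ref{lemma:CompatibleSequenceTwo}, and bound the per-bag work by Remark~\ref{remark:UpperBoundTriples} times the $\Oh(\log(n\cdot m))$ cost of arithmetic on values bounded by $n^2\cdot m$. If anything, your write-up is slightly more explicit than the paper's on the two-directional correctness argument and on the dictionary-based merging at forget bags, but the substance is the same.
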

\begin{proof}
	By Lemma~\ref{lemma:ConstructionDecomposition},
	one can construct a nice $\apartialorder$-consistent path decomposition $\pathdec$
	of $\agraph_{\apartialorder}$ of width $\Oh(\width)$ in time $2^{\Oh(\width)} \cdot n$.

	Lemma~\ref{lemma:CompatibleSequenceTwo} shows that, if a solution $\vote$ exists, then
	there exist a compatible sequence associated to it. Now we will show that
	this sequence is actually build by the algorithm.
	Let $\compatiblesequence$ be the sequence define in Lemma~\ref{lemma:CompatibleSequenceTwo}.
	We recall that the sequence is define as follow
	$\compatiblesequence = (\asubsetbag_1,\linearorder_1,\partialcost_1) \dots
		(\asubsetbag_{\lengthdecomposition},\linearorder_{\lengthdecomposition},\partialcost_{\lengthdecomposition})$,
	where for each position $\aposition \in \intv{\lengthdecomposition}$,
	$\asubsetbag_\aposition = \{\vvertex \in \bag_\aposition \mid \vvertex >_\vote \max_\vote(\leftset_\aposition)\}$,
	$\linearorder_\aposition = \vote|_{\asubsetbag_\aposition}$ and
	$\partialcost_\aposition = \cost(\vote|_{\leftset_\aposition \cup \bag_\aposition})$.
	First, because $\vote$ is optimal, one can easily check that for each $\aposition\in\intv{\lengthdecomposition}$,
	$(\asubsetbag_\aposition,\linearorder_\aposition,\partialcost_\aposition)
		\in \triples_\aposition(\optimalfunction_\aposition,0)$.
	We will prove that this sequence is built by the algorithm by recurrence on the bags.
	By definition, $\asubsetbag_1 = \bag_1$ and $\partialcost_1 = \cost(\linearorder_1)$,
	therefore the first triple is build by the algorithm. As
	$(\asubsetbag_{\aposition+1},\linearorder_{\aposition+1},\partialcost_{\aposition+1})$
	is compatible with $(\asubsetbag_\aposition,\linearorder_\aposition,\partialcost_\aposition)$,
	then, by definition of compatibility and how the algorithm proceeds, if the algorithm builds
	$(\asubsetbag_\aposition,\linearorder_\aposition,\partialcost_\aposition)$ it will
	build $(\asubsetbag_{\aposition+1},\linearorder_{\aposition+1},\partialcost_{\aposition+1})$
	in the next step.
	This proves the correctness of the algorithm.

	Now we will prove the running time. Without loss of generality, we can assume that
	the path decomposition is a nice path decomposition.
	In a path decomposition with no duplicate bags, there are at most $2n$ bags.
	For each position $\aposition
		\in \intv{2n}$, if $\bag_\aposition$ forgets a vertex $\vvertex$,
	then $\promisingtriples_\aposition$ can be computed by removing $\vvertex$ in each triple
	in $\promisingtriples_{\aposition -1}$ and keeping triples with minimum cost for each fixed
	pair $(\asubsetbag, \linearorder)$. This can be
	done in time $\Oh({|\promisingtriples_{\aposition - 1}|}^2)$.
	If $\bag_\aposition$ introduces a vertex~$\vvertex$, then $\promisingtriples_\aposition$
	can be computed from $\promisingtriples_{\aposition - 1}$ by taking each triple
	$(\asubsetbag, \linearorder, \cost)$ and adding $\vvertex$ in $\linearorder$ at
	every possible position. Computing the new cost can be done in time $\Oh(\width\cdot\log (n^2 \cdot m))$
	and there are $|\promisingtriples_{\aposition + 1}|$ triples to compute.
	The log factor $\log (n^2 \cdot m)$ is the time of performing an addition on the  costs, as the cost of a solution
	can be at most $n^2 \cdot m$.
	By
	Remark~\ref{remark:UpperBoundTriples}, we can bound the size of each~$\promisingtriples_{\aposition}$.
\end{proof}

Now, leveraging on Lemma~\ref{lemma:CompatibleSequenceOne} and Lemma~\ref{lemma:CompatibleSequenceTwo},
we will devise a fixed-parameter tractable algorithm
for \textsc{Diverse-CO} parameterized by solution imperfection, number of solutions, scatteredness, and pathwidth of the cocomparability
graph of the input partial order.
Let $\apartialorder$ be a partial order and $\pathdec = (\bag_1, \bag_2, \ldots, \bag_{\lengthdecomposition})$
be a $\apartialorder$-consistent path decomposition of $\agraph_{\apartialorder}$ of width $\width$.
\begin{definition}
	Let $\aposition\in [\lengthdecomposition]$, and $\optimalfunction:\pairs_{\aposition}\rightarrow \N$. Then, we
	let $\bigtuples_{\aposition}(\optimalfunction,\distanceoptimality)$ be the set of all tuples of the form
	\[( (\asubsetbag^1,\linearorder^1,\partialcost^1), \dots,(\asubsetbag^{\numbersolutions},\linearorder^{\numbersolutions},\partialcost^{\numbersolutions}), \partialdiversity, {(\partialscatteredness_{\{i,j\}})}_{1 \leq i < j \leq \numbersolutions})\]
	where $\partialdiversity\in \intv{\diversityparameter+1}_0$,
	for each $1 \leq i < j \leq \numbersolutions$, $\partialscatteredness_{\{i,j\}} \in \intv{\scatterednessparameter}_0$, and for each $i\in \intv{\numbersolutions}$,
	$(\asubsetbag^{i},\linearorder^{i},\partialcost^{i})$ is a triple in
	$\triples_{\aposition}(\optimalfunction,\distanceoptimality)$.
\end{definition}

Intuitively, ${((\asubsetbag^i,\linearorder^i,\partialcost^i))}_{i\in\intv{\numbersolutions}}$
are $\numbersolutions$ partial linear extensions, $\partialdiversity$ will be the diversity of the
$\numbersolutions$ partial linear extensions and $\partialscatteredness$ will be the distance between
all pair of the $\numbersolutions$ partial linear extensions.

\begin{remark}
	\label{remark:UpperBoundTuples}
	For each $\aposition\in [\lengthdecomposition]$, $\optimalfunction:\pairs_{\aposition}\rightarrow \N$
	and $\distanceoptimality\in \N$, the size of $\bigtuples_{\aposition}(\optimalfunction,\distanceoptimality)$
	is bounded by ${(e\cdot (\distanceoptimality+1)\cdot (\width+1)!)}^{\numbersolutions}
		\cdot \scatterednessparameter^{\numbersolutions^2} \cdot \diversityparameter$.
\end{remark}

For each $\aposition\in [\lengthdecomposition-1]$, each tuple
\[\abigtuple_{\aposition} =
	((\asubsetbag^1_{\aposition},\linearorder^1_{\aposition},\partialcost^1_{\aposition}),
	\dots,(\asubsetbag^{\numbersolutions}_{\aposition},\linearorder^{\numbersolutions}_{\aposition},
	\partialcost^{\numbersolutions}_{\aposition}), \partialdiversity_{\aposition},
	{(\partialscatteredness^\aposition_{\{i,j\}})}_{1 \leq i < j \leq \numbersolutions})\]
in $\bigtuples_{\aposition}$
and each tuple
\[\abigtuple_{\aposition+1} =
	((\asubsetbag^1_{\aposition+1},\linearorder^1_{\aposition+1},\partialcost^1_{\aposition+1}),
	\dots,(\asubsetbag^{\numbersolutions}_{\aposition+1},\linearorder^{\numbersolutions}_{\aposition+1},
	\partialcost^{\numbersolutions}_{\aposition+1}), \partialdiversity_{\aposition+1},
	{(\partialscatteredness^{\aposition+1}_{\{i,j\}})}_{1 \leq i < j \leq \numbersolutions})\]
in $\bigtuples_{\aposition+1}$,
we define the \emph{scatteredness increase table} of the pair
$(\abigtuple_{\aposition},\abigtuple_{\aposition+1})$, denoted by
$\increasescatteredness(\abigtuple_{\aposition},\abigtuple_{\aposition+1})$, as the table
holding the distance increase between each pair of partial linear extensions due to the elements
of $\introduced(\aposition + 1)$.
To compute this increase for each pair $1 \leq i < j \leq \numbersolutions$, we will
compute the increase of adding one vertex and repeat this operation for each element in
$\introduced(\aposition + 1)$. Let $\vvertex \in \introduced(\aposition + 1)$ be a vertex
introduced by $\bag_{\aposition + 1}$ and let $\abigtuple_\vvertex$ be a tuple obtained
by extending~$\abigtuple_{\aposition}$ to include $\vvertex$, we have
\begin{align*}\label{eq:partial_increasescatteredness}
	\increasescatteredness_{\{i,j\}}(\abigtuple_{\aposition},\abigtuple_{\vvertex}) ={}
	 & |\{\uvertex \in \bag_{\aposition} \mid
	(\uvertex \notin \asubsetbag^i_{\aposition+1}  \lor \uvertex <_{\linearorder^i_{\aposition+1}} \vvertex)
	\land \vvertex <_{\linearorder^j_{\aposition+1}} \uvertex\}| +{} \\
	 & |\{\uvertex \in \bag_{\aposition} \mid
	(\uvertex \notin \asubsetbag^j_{\aposition+1}  \lor \uvertex <_{\linearorder^j_{\aposition+1}} \vvertex)
	\land \vvertex <_{\linearorder^i_{\aposition+1}} \uvertex\}|
\end{align*}
where $\uvertex \notin \asubsetbag^i_{\aposition+1}  \lor \uvertex <_{\linearorder^i_{\aposition+1}} \vvertex$
means that $\uvertex$ is smaller than $\vvertex$ in the $i^\text{th}$ tuple,
$\vvertex <_{\linearorder^j_{\aposition+1}} \uvertex$ means that $\uvertex$ is bigger
than $\vvertex$ in the $j^\text{th}$ tuple,
$\uvertex \notin \asubsetbag^j_{\aposition+1}  \lor \uvertex <_{\linearorder^j_{\aposition+1}} \vvertex$
means that $\uvertex$ is smaller than $\vvertex$ in the $j^\text{th}$ tuple, and
$\vvertex <_{\linearorder^i_{\aposition+1}} \uvertex$ means that $\uvertex$ is bigger
than $\vvertex$ in the $i^\text{th}$ tuple.
Now, we define the increase between two bags, given an arbitrary ordering $\vvertex_1, \ldots, \vvertex_{|\introduced(\aposition+1)|}$ of $\introduced(\aposition+1)$: we let $\abigtuple_{\vvertex_1}$ be the tuple obtained
by extending $\abigtuple_{\aposition}$ to include $\vvertex_1$ and $\abigtuple_{\vvertex_t}$ be the
tuple obtained by extending $\abigtuple_{\vvertex_{t-1}}$ to include $\vvertex_t$ and
we have, for each pair $\{i, j\}$,
\begin{equation}\label{eq:increase_scatteredness}
	\increasescatteredness_{\{i,j\}}(\abigtuple_{\aposition},\abigtuple_{\aposition+1}) =
	\increasescatteredness_{\{i,j\}}(\abigtuple_{\aposition},\abigtuple_{\vvertex_1}) +
	\sum_{t = 2}^{|\introduced(\aposition+1)|} \increasescatteredness_{\{i,j\}}(\abigtuple_{\vvertex_{t-1}},\abigtuple_{\vvertex_t}).
\end{equation}
Intuitively, this measures the increase of the distance between each pair of the $\numbersolutions$ partial
solutions from the bag $\aposition$ to the bag $\aposition + 1$.

We define, in a similar way, the \emph{diversity increase} of the pair $(\abigtuple_{\aposition},\abigtuple_{\aposition+1})$,
denoted by writing $\increasepair(\abigtuple_{\aposition},\abigtuple_{\aposition+1})$, as the amount
of diversity due to the elements of $\introduced(\aposition+1)$.
As the diversity is the sum of all pairwise distances between the $\numbersolutions$ partial solutions, we
will use the scatteredness increase table $(\abigtuple_{\aposition},\abigtuple_{\aposition+1})$
to compute the diversity increase.
We define the increase between two bags as
\begin{equation}\label{eq:increase_diversity}
	\increasepair(\abigtuple_{\aposition},\abigtuple_{\aposition+1}) =
	\sum_{1 \leq i < j \leq \numbersolutions} \increasescatteredness_{\{i,j\}}(\abigtuple_{\aposition},\abigtuple_{\aposition+1}).
\end{equation}
Intuitively, this measures the increase of the diversity between the $\numbersolutions$ partial
solutions up to the bag $\aposition$ and the $\numbersolutions$ partial solutions up to
bag $\aposition + 1$.

We say that $\abigtuple_{\aposition}$ is compatible with $\abigtuple_{\aposition+1}$ if
$\partialscatteredness_{\aposition+1} = \min(\partialscatteredness_{\aposition} + \increasescatteredness(\abigtuple_{\aposition},\abigtuple_{\aposition+1}), \scatterednessparameter)$,
$\partialdiversity_{\aposition+1} = \min(\partialdiversity_{\aposition} + \increasepair(\abigtuple_{\aposition},\abigtuple_{\aposition+1}), \diversityparameter)$,
and for each $i\in [\numbersolutions]$,
the triple $(\asubsetbag^i_{\aposition},\linearorder^i_{\aposition},\partialcost^i_{\aposition})$
is compatible with the triple
$(\asubsetbag^i_{\aposition+1},\linearorder^i_{\aposition+1},\partialcost^i_{\aposition+1})$.

A \emph{diversity-compatible sequence} is a sequence of the form
$${\{((\asubsetbag^1_{\aposition},\linearorder^1_{\aposition},\partialcost^1_{\aposition}),
	\dots, (\asubsetbag^{\numbersolutions}_{\aposition},
	\linearorder^{\numbersolutions}_{\aposition},\partialcost^{\numbersolutions}_{\aposition}),
	\partialdiversity_{\aposition},
	{(\partialscatteredness^\aposition_{\{i,j\}})}_{1 \leq i < j \leq \numbersolutions}
	)\}}_{\aposition \in \intv{\lengthdecomposition}},$$
where for each $\aposition\in [\lengthdecomposition-1]$, the tuples at positions $\aposition$ and $\aposition+1$ are compatible.

The next lemma is an analogue of Lemma~\ref{lemma:CompatibleSequenceOne} in the context of solution diversity.

\begin{lemma}
	\label{lemma:CompatibleSequenceDiverseOne}
	Let $\apartialorder\subseteq \baseset\times\baseset$ be a partial order over $\baseset$,
	$\cost:\baseset\times \baseset\to \naturalIntegers$ be a cost function,
	and $\pathdec$ be a $\apartialorder$-consistent path decomposition of the graph $\agraph_{\apartialorder}$.
	Let
	$$\diversitycompatiblesequence = {\{((\asubsetbag^1_{\aposition},\linearorder^1_{\aposition},\partialcost^1_{\aposition}),
		\dots, (\asubsetbag^{\numbersolutions}_{\aposition},
		\linearorder^{\numbersolutions}_{\aposition},\partialcost^{\numbersolutions}_{\aposition},
		\partialdiversity_{\aposition}), {(\partialscatteredness^\aposition_{\{i,j\}})}_{1 \leq i < j \leq \numbersolutions}
		)\}}_{\aposition \in \intv{\lengthdecomposition}}$$
	be a diversity-compatible sequence for $\pathdec$. Then, the following properties can be verified.
	\begin{enumerate}
		\item\label{itemOneCDO} For each $i\in [\numbersolutions]$, the order $\vote_i = \transitiveclosure(\apartialorder \cup \linearorder_1^i\cup \dots \cup \linearorder_{\lengthdecomposition}^i)$ is a linear extension of~$\apartialorder$ of cost $\partialcost_{\lengthdecomposition}^i$.
		\item\label{itemTwoCDO} For each $i,j$ with $1 \leq i < j \leq \numbersolutions$, $\partialscatteredness^\lengthdecomposition_{\{i,j\}} = \min(\KTd(\vote_i, \vote_j), \scatterednessparameter)$.
		\item\label{itemThreeCDO} $\partialdiversity_{\lengthdecomposition} = \min(\ktdiversity(\{\vote_1,\dots,\vote_{\numbersolutions}\}), \diversityparameter)$.
	\end{enumerate}
\end{lemma}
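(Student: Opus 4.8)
The plan is to handle the three items in turn: the first follows immediately from Lemma~\ref{lemma:CompatibleSequenceOne}, while the latter two are established by an induction along the path decomposition that tracks how many disagreeing pairs have been accounted for so far.

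For item~\ref{itemOneCDO}, I would observe that the definition of a diversity-compatible sequence requires, at every position $\aposition$, that the tuples at $\aposition$ and $\aposition+1$ be compatible, and that tuple-compatibility in particular demands that for each fixed $i\in[\numbersolutions]$ the triple $(\asubsetbag^i_{\aposition},\linearorder^i_{\aposition},\partialcost^i_{\aposition})$ be compatible with $(\asubsetbag^i_{\aposition+1},\linearorder^i_{\aposition+1},\partialcost^i_{\aposition+1})$. Hence, for each fixed $i$, the projection $(\asubsetbag^i_{1},\linearorder^i_{1},\partialcost^i_{1})\dots(\asubsetbag^i_{\lengthdecomposition},\linearorder^i_{\lengthdecomposition},\partialcost^i_{\lengthdecomposition})$ is an ordinary compatible sequence for $\pathdec$, so Lemma~\ref{lemma:CompatibleSequenceOne} directly yields that $\vote_i$ is a linear extension of $\apartialorder$ of cost $\partialcost^i_{\lengthdecomposition}$.

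For items~\ref{itemTwoCDO} and~\ref{itemThreeCDO}, the key combinatorial observation is that, since each $\vote_i$ extends $\apartialorder$, any pair comparable in $\apartialorder$ is ordered identically by all the $\vote_i$; therefore the only pairs that can contribute to $\KTd(\vote_i,\vote_j)$ are the pairs incomparable in $\apartialorder$, i.e.\ the edges of $\agraph_{\apartialorder}$. I would then prove, by induction on $\aposition$, the invariant that for every $1\le i<j\le\numbersolutions$ the quantity $\partialscatteredness^{\aposition}_{\{i,j\}}$ equals $\min(D^{\aposition}_{\{i,j\}},\scatterednessparameter)$, where $D^{\aposition}_{\{i,j\}}$ is the number of pairs on which $\vote_i$ and $\vote_j$ disagree and which are already \emph{seen}, i.e.\ both of whose endpoints lie in $\leftset_{\aposition}\cup\bag_{\aposition}$. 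The base case is immediate, and in the inductive step the newly seen disagreeing pairs are exactly those with one endpoint in $\introduced(\aposition+1)$ and the other already seen; the one-vertex-at-a-time summation in \eqref{eq:increase_scatteredness} counts precisely these, with the cap handled by the identity $\min(\min(a,s)+b,s)=\min(a+b,s)$ for nonnegative integers, which is exactly the update rule for $\partialscatteredness$. Since $\leftset_{\lengthdecomposition}\cup\bag_{\lengthdecomposition}=\baseset$, at the last position $D^{\lengthdecomposition}_{\{i,j\}}=\KTd(\vote_i,\vote_j)$, giving item~\ref{itemTwoCDO}. Item~\ref{itemThreeCDO} follows from the uncapped version of the same count: summing \eqref{eq:increase_scatteredness} over all positions telescopes to $\KTd(\vote_i,\vote_j)$ for each pair, so \eqref{eq:increase_diversity} together with the diversity update gives $\partialdiversity_{\lengthdecomposition}=\min(\sum_{1\le i<j\le\numbersolutions}\KTd(\vote_i,\vote_j),\diversityparameter)=\min(\ktdiversity(\{\vote_1,\dots,\vote_{\numbersolutions}\}),\diversityparameter)$.

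The main obstacle is the correctness of the counting in the inductive step: one must verify that every disagreeing incomparable pair is counted exactly once and never missed. This rests on two structural facts. First, every edge $\{\uvertex,\vvertex\}$ of $\agraph_{\apartialorder}$ has both endpoints in a common bag, and since the bags containing a fixed vertex form a contiguous interval, the later-introduced endpoint is introduced while the earlier one is still present; thus the pair is counted at that unique introduction step and the \emph{seen} pairs accrued at distinct steps are disjoint, so nothing is double-counted. Second, I must check that the local tests on the orders $\linearorder^i_{\aposition+1}$ appearing in \eqref{eq:increase_scatteredness} faithfully reflect the global order $\vote_i$ on the relevant vertices, which follows from conditions~\ref{prop_1}--\ref{prop_3} and $\apartialorder$-consistency exactly as in the proof of Lemma~\ref{lemma:CompatibleSequenceOne}. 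Keeping this bookkeeping of seen pairs disjoint across introduction steps is the delicate point.
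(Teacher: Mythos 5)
Your proof is correct and follows the same route the paper takes: item~\ref{itemOneCDO} by projecting the diversity-compatible sequence onto each index $i$ and invoking Lemma~\ref{lemma:CompatibleSequenceOne}, and items~\ref{itemTwoCDO} and~\ref{itemThreeCDO} by accumulating the increase quantities of Equations~\ref{eq:increase_scatteredness} and~\ref{eq:increase_diversity} along the path decomposition. The paper itself gives only a one-sentence attribution of the three properties to these ingredients, so your explicit induction with the bookkeeping of seen disagreeing pairs (each counted once at the introduction step of its later endpoint, with the cap handled by $\min(\min(a,s)+b,s)=\min(a+b,s)$) is a faithful and more detailed elaboration of the same argument.
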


We note that property \ref{itemOneCDO} of Lemma \ref{lemma:CompatibleSequenceDiverseOne} follows from
Lemma \ref{lemma:CompatibleSequenceOne}, property  \ref{itemTwoCDO} follows from
Equation~\ref{eq:increase_diversity}, and property  \ref{itemThreeCDO} from  Equation~\ref{eq:increase_scatteredness}.

The next lemma is an analogue of Lemma~\ref{lemma:CompatibleSequenceDiverseTwo} in the context of solution diversity.

\begin{lemma}%
	\label{lemma:CompatibleSequenceDiverseTwo}
	Let $\apartialorder\subseteq \baseset\times\baseset$ be a partial order over $\baseset$,
	$\cost:\baseset\times \baseset\to \naturalIntegers$ be a cost function,
	and $\pathdec$ be a $\apartialorder$-consistent path decomposition of the graph $\agraph_{\apartialorder}$.
	Let $\vote_1, \ldots, \vote_\numbersolutions$ be $\numbersolutions$ linear extensions of~$\apartialorder$, and
	$$ \diversitycompatiblesequence = {\{((\asubsetbag^1_{\aposition},\linearorder^1_{\aposition},\partialcost^1_{\aposition}),
		\dots, (\asubsetbag^{\numbersolutions}_{\aposition},
		\linearorder^{\numbersolutions}_{\aposition},\partialcost^{\numbersolutions}_{\aposition}),
		\partialdiversity_{\aposition}, {(\partialscatteredness^\aposition_{\{i,j\}})}_{1 \leq i < j \leq \numbersolutions}
		)\}}_{\aposition \in \intv{\lengthdecomposition}}$$ be a sequence satisfying the following conditions.
	\begin{enumerate}
		\item For each position $\aposition \in \intv{\lengthdecomposition}$, and each $i \in \intv{\numbersolutions}$,
		      $\asubsetbag^i_\aposition = \{\vvertex \in \bag_\aposition \mid \vvertex >_{\vote_i} \max_{\vote_i}(\leftset_\aposition)\}$,
		      $\linearorder_\aposition = \vote_i|_{\asubsetbag_\aposition}$,
		      $\partialcost_\aposition = \cost(\vote_i|_{\leftset_\aposition \cup \bag_\aposition})$.
		\item For each $i,j$ with $1 \leq i < j \leq \numbersolutions$,
		      $\partialscatteredness^\aposition_{\{i,j\}} = \min(\KTd(\vote_i|_{\leftset_\aposition}, \vote_j|_{\leftset_\aposition}), \scatterednessparameter)$.
		\item $\partialdiversity_{\aposition} = \min(\ktdiversity(\{\vote_1|_{\leftset_\aposition \cup \bag_\aposition},\dots,\vote_{\numbersolutions}|_{\leftset_\aposition \cup \bag_\aposition}\}), \diversityparameter)$.
	\end{enumerate}
	Then,
	$\diversitycompatiblesequence$ is a diversity-compatible sequence for $\pathdec$.
\end{lemma}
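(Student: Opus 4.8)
The plan is to verify, position by position, the three clauses in the definition of compatibility of $\abigtuple_{\aposition}$ with $\abigtuple_{\aposition+1}$, thereby certifying that $\diversitycompatiblesequence$ is a diversity-compatible sequence. The first clause, per-solution triple compatibility, I would obtain essentially for free: for each fixed $i\in\intv{\numbersolutions}$ the triples $(\asubsetbag^i_{\aposition},\linearorder^i_{\aposition},\partialcost^i_{\aposition})$ read off from $\vote_i$ are exactly the sequence that Lemma~\ref{lemma:CompatibleSequenceTwo} produces for the single linear extension $\vote_i$. Hence that lemma immediately gives that $(\asubsetbag^i_{\aposition},\linearorder^i_{\aposition},\partialcost^i_{\aposition})$ is compatible with $(\asubsetbag^i_{\aposition+1},\linearorder^i_{\aposition+1},\partialcost^i_{\aposition+1})$, i.e.\ conditions \ref{prop_1}--\ref{prop_3} hold for every $i$ and every $\aposition \in [\lengthdecomposition-1]$.

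The crux is the scatteredness recursion: for every pair $i<j$ I must show $\partialscatteredness^{\aposition+1}_{\{i,j\}} = \min(\partialscatteredness^{\aposition}_{\{i,j\}} + \increasescatteredness_{\{i,j\}}(\abigtuple_{\aposition},\abigtuple_{\aposition+1}),\scatterednessparameter)$. Since each $\partialscatteredness$ entry is a genuine Kendall--Tau distance of truncations capped at $\scatterednessparameter$ and $\increasescatteredness_{\{i,j\}}\ge 0$, I would reduce this to the \emph{uncapped} increment via the elementary identity $\min(\min(a,s)+\Delta,s)=\min(a+\Delta,s)$ for $\Delta\ge 0$, and then prove that the number of pairs inverted between $\vote_i$ and $\vote_j$ among the vertices processed through bag $\aposition+1$ exceeds that through bag $\aposition$ by exactly $\increasescatteredness_{\{i,j\}}$. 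The structural fact driving this is that any pair $\{\uvertex,\vvertex\}$ ordered differently by $\vote_i$ and $\vote_j$ must be incomparable in $\apartialorder$ (both extensions agree with $\apartialorder$ on comparable pairs), hence is an edge of $\agraph_{\apartialorder}$, so its endpoints co-occur in some bag by the second path-decomposition axiom. Therefore the only new inversions are those involving a vertex $\vvertex\in\introduced(\aposition+1)$, and for such a $\vvertex$ every partner $\uvertex$ forming an inversion with it is still present in $\bag_{\aposition}$ (its occurrence interval began earlier and overlaps that of $\vvertex$); these are precisely the partners counted by the two sums defining $\increasescatteredness_{\{i,j\}}(\abigtuple_{\aposition},\abigtuple_{\vvertex})$, in which the truncated orders $\linearorder^i_{\aposition+1},\linearorder^j_{\aposition+1}$ faithfully record the side of $\vvertex$ on which each $\uvertex$ sits. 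Iterating over $\introduced(\aposition+1)$ through Equation~\ref{eq:increase_scatteredness}, with each inversion charged once at the introduction of its later endpoint, yields the uncapped increment.

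Summing this identity over all pairs $i<j$ and invoking Equation~\ref{eq:increase_diversity}, which sets $\increasepair=\sum_{i<j}\increasescatteredness_{\{i,j\}}$, shows that the uncapped diversity grows by exactly $\increasepair(\abigtuple_{\aposition},\abigtuple_{\aposition+1})$; the same min-capping identity then delivers $\partialdiversity_{\aposition+1}=\min(\partialdiversity_{\aposition}+\increasepair(\abigtuple_{\aposition},\abigtuple_{\aposition+1}),\diversityparameter)$. The base case $\aposition=1$ is immediate, since nothing has yet been forgotten so all truncated distances and the diversity vanish, matching $\partialscatteredness^{1}_{\{i,j\}}=0$ and $\partialdiversity_1=0$.

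The main obstacle is the bookkeeping in the scatteredness step: confirming that $\increasescatteredness_{\{i,j\}}$ counts each freshly created inversion exactly once and never prematurely. This is where both hypotheses on $\pathdec$ are used simultaneously: $\apartialorder$-consistency guarantees that in each extension every forgotten vertex precedes every not-yet-introduced vertex, so the truncations $\linearorder^i_{\aposition+1}$ correctly encode the order of $\vvertex$ against the surviving bag vertices, while the interval/co-occurrence property guarantees that no inversion is missed through its endpoints never sharing a bag. Once this accounting is pinned down, the two recursions and the remaining conditions follow mechanically.
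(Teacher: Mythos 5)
The paper offers no proof of this lemma at all --- like its single-solution companion, Lemma~\ref{lemma:CompatibleSequenceTwo}, it is asserted with the implicit ``one can see'' justification --- so your argument is necessarily a different and more detailed route, and it supplies exactly the justification the paper leaves out. Your two key points are the right ones: (i) per-solution compatibility (conditions \ref{prop_1}--\ref{prop_3}) is inherited coordinatewise from Lemma~\ref{lemma:CompatibleSequenceTwo}; and (ii) the increment accounting works because any pair ordered differently by $\vote_i$ and $\vote_j$ is incomparable in $\apartialorder$, hence an edge of $\agraph_{\apartialorder}$, hence its endpoints share a bag, so every new inversion is charged exactly once when its later endpoint is introduced, with $\apartialorder$-consistency guaranteeing that the truncations $\linearorder^i_{\aposition+1}$ faithfully encode the relative order of an introduced vertex against the surviving bag vertices. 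The capping identity $\min(\min(a,\scatterednessparameter)+\Delta,\scatterednessparameter)=\min(a+\Delta,\scatterednessparameter)$ for $\Delta\ge 0$ is also the correct bridge from the uncapped to the capped recursion.

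Two small points deserve care. First, your base-case remark is off: condition~3 of the lemma defines $\partialdiversity_{1}$ over $\leftset_1\cup\bag_1=\bag_1$, which is generally nonzero, so ``the diversity vanishes at position $1$'' is false as stated; fortunately the definition of a diversity-compatible sequence only constrains \emph{consecutive} tuples, so no base case needs to be verified and this does not affect the conclusion. Second, you silently pass over an asymmetry in the lemma itself: condition~2 defines $\partialscatteredness^{\aposition}_{\{i,j\}}$ via restrictions to $\leftset_\aposition$, while condition~3 defines $\partialdiversity_\aposition$ via restrictions to $\leftset_\aposition\cup\bag_\aposition$. Your increment bookkeeping (new inversions are exactly those involving a vertex of $\introduced(\aposition+1)$) matches the $\leftset_\aposition\cup\bag_\aposition$ convention; to make the scatteredness recursion close under condition~2 as literally written, you would need to track inversions created by \emph{forgotten} vertices instead, or observe that the two conventions differ only by inversions internal to the current bag. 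You should state explicitly which convention your uncapped quantity refers to and reconcile it with the lemma's condition~2; once that is pinned down the rest of your argument goes through.
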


Intuitively, what those lemmas say is that in order to construct a set of $\numbersolutions$ solutions
for an instance $(\apartialorder,\cost,\numbersolutions,\distanceoptimality,\diversityparameter,\scatterednessparameter)$ of \textsc{Diverse-CO}, all one needs to
do is to construct $\numbersolutions$ compatible sequences in parallel, by processing the given path decomposition
from left to right, while using an additional register to keep track of the overall diversity at each time step
and all the pairwise distances.
In the same way that Lemma~\ref{lemma:CompatibleSequenceOne} and Lemma~\ref{lemma:CompatibleSequenceTwo} yield an \FPT\ dynamic programming algorithm parameterized
by pathwidth for computing a single solution of an instance of {\sc CO} (Theorem~\ref{theorem:SolutionConstructionOne}),
Lemma~\ref{lemma:CompatibleSequenceDiverseOne} and Lemma~\ref{lemma:CompatibleSequenceDiverseTwo} yield a dynamic programming algorithm to compute a diverse set of solutions,
in case it exists, parameterized by cost of solution, number of solutions and pathwidth (Theorem~\ref{theorem:SolutionConstructionDiverse}).

\begin{theorem}
	\label{theorem:SolutionConstructionDiverse}
	Let $\apartialorder\subseteq \baseset\times \baseset$, let $n = |\baseset|$ and $\width$ be the pathwidth
	of the cocomparability graph of $\apartialorder$,
	and $\cost:\baseset\times \baseset\rightarrow \intv{m}_0$
	be a cost function.
	Then, one can determine whether $\apartialorder$ admits $\numbersolutions$ linear
	extensions $\vote_1,\dots,\vote_{\numbersolutions}$
	at distance at most $\distanceoptimality$ from the optimum,
	diversity at least $\diversityparameter$, and scatteredness at least $\scatterednessparameter$ in time
	$\Oh\left({(\width! \cdot  \distanceoptimality)}^{\Oh(\numbersolutions)}
		\cdot \scatterednessparameter^{\numbersolutions^2} \cdot \diversityparameter
		\cdot n \cdot \log (n^2 \cdot m) \right)$.
\end{theorem}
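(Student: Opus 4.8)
The plan is to run the single-solution dynamic program of Theorem~\ref{theorem:SolutionConstructionOne} in $\numbersolutions$ parallel copies, in lock-step over one path decomposition, while carrying along the global diversity register and the $\binom{\numbersolutions}{2}$ pairwise-distance registers. First I would invoke Lemma~\ref{lemma:ConstructionDecomposition} to build a nice $\apartialorder$-consistent path decomposition $\pathdec=(\bag_1,\dots,\bag_{\lengthdecomposition})$ of $\agraph_\apartialorder$ of width $\Oh(\width)$ in time $2^{\Oh(\width)}\cdot n$, and run the machinery of Theorem~\ref{theorem:SolutionConstructionOne} once to obtain, for every position $\aposition$ and every $(\asubsetbag,\linearorder)\in\pairs_\aposition$, the optimal partial cost $\optimalfunction_\aposition(\asubsetbag,\linearorder)$ as well as the global optimum $\opt$; this pass is needed so that the triples we track lie in $\triples_\aposition(\optimalfunction_\aposition,\distanceoptimality)$, and its running time is dominated by what follows. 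Then I process $\pathdec$ from left to right, maintaining at each position a family $\mathcal{U}_\aposition\subseteq\bigtuples_\aposition(\optimalfunction_\aposition,\distanceoptimality)$ of \emph{promising tuples}: at position $1$, initialize with every tuple whose $\numbersolutions$ components are linear extensions of $\apartialorder|_{\bag_1}$, together with the induced diversity and scatteredness registers as prescribed by Lemma~\ref{lemma:CompatibleSequenceDiverseTwo}; at position $\aposition\geq 2$, let $\mathcal{U}_\aposition$ be all tuples of $\bigtuples_\aposition(\optimalfunction_\aposition,\distanceoptimality)$ that are compatible with some tuple of $\mathcal{U}_{\aposition-1}$. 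I accept iff $\mathcal{U}_{\lengthdecomposition}$ contains a tuple with $\partialdiversity_{\lengthdecomposition}=\diversityparameter$, with $\partialscatteredness^{\lengthdecomposition}_{\{i,j\}}=\scatterednessparameter$ for all $i<j$, and with every final cost $\partialcost^i_{\lengthdecomposition}\leq\opt+\distanceoptimality$.

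For correctness I would lean directly on the two diverse compatibility lemmas. Soundness: from an accepting tuple I backtrack a diversity-compatible sequence, and Lemma~\ref{lemma:CompatibleSequenceDiverseOne} turns it into $\numbersolutions$ linear extensions of $\apartialorder$ of costs $\partialcost^i_{\lengthdecomposition}\leq\opt+\distanceoptimality$, pairwise KT-distances $\geq\scatterednessparameter$, and diversity $\geq\diversityparameter$. Completeness: given any witnessing set $\{\vote_1,\dots,\vote_{\numbersolutions}\}$, Lemma~\ref{lemma:CompatibleSequenceDiverseTwo} produces the canonical diversity-compatible sequence attached to it; the one point still to check is that each of its triples actually lies in $\triples_\aposition(\optimalfunction_\aposition,\distanceoptimality)$, so that the algorithm retains it. This reduces to $\cost(\vote_i|_{\leftset_\aposition\cup\bag_\aposition})-\optimalfunction_\aposition(\asubsetbag^i_\aposition,\linearorder^i_\aposition)\leq\cost(\vote_i)-\opt\leq\distanceoptimality$, which I would establish by an exchange argument: replacing the forgotten prefix of $\vote_i$ by a cheapest partial solution ending in $(\asubsetbag^i_\aposition,\linearorder^i_\aposition)$ and completing it exactly as $\vote_i$ does yields a valid linear extension of cost $\opt\leq\cost(\vote_i)-(\cost(\vote_i|_{\leftset_\aposition\cup\bag_\aposition})-\optimalfunction_\aposition)$. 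Here $\apartialorder$-consistency is essential: the future vertices all lie above every forgotten vertex, so the swap leaves the future cost unchanged. Since the whole state (endings, costs, $\partialdiversity$, and all $\partialscatteredness_{\{i,j\}}$) is tracked, no two distinct futures are confused, and the algorithm keeps the witness and accepts.

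For the running time, Remark~\ref{remark:UpperBoundTuples} gives $|\mathcal{U}_\aposition|\leq N:=(e(\distanceoptimality+1)(\width+1)!)^{\numbersolutions}\cdot\scatterednessparameter^{\numbersolutions^2}\cdot\diversityparameter$, and a nice decomposition has $\Oh(n)$ bags, so it suffices to bound each transition by $\Oh(N\cdot\mathrm{poly}(\width,\numbersolutions)\cdot\log(n^2m))$. At a forget bag each tuple maps deterministically to one image (drop the forgotten vertex from each of the $\numbersolutions$ triples; costs, $\partialdiversity$ and $\partialscatteredness$ are unchanged since $\introduced(\aposition+1)=\es$), so I generate the $\leq N$ images and deduplicate them by sorting the fixed-size tuple records, in time $\Oh(N\log N\cdot\mathrm{poly}(\width,\numbersolutions))$. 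At an introduce bag each tuple branches into the $\leq(\width+2)^{\numbersolutions}$ choices of insertion position for the new vertex across the $\numbersolutions$ solutions; for each branch I update the costs via \ref{prop_3}, the pairwise increases via Equation~\ref{eq:increase_scatteredness}, and the diversity increase via Equation~\ref{eq:increase_diversity}, each being a sum over $\Oh(\width)$ vertices and $\Oh(\numbersolutions^2)$ pairs with $\Oh(\log(n^2m))$ per addition. Because $(\width+1)!\,(\width+2)=(\width+2)!=\width!^{\Oh(1)}$, the number of generated tuples is $N\cdot(\width+2)^{\numbersolutions}=(\width!\cdot\distanceoptimality)^{\Oh(\numbersolutions)}\scatterednessparameter^{\numbersolutions^2}\diversityparameter$; after discarding branches that leave $\triples_\aposition(\optimalfunction_\aposition,\distanceoptimality)$ and sorting-deduplicating, the survivors number $\leq N$. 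Multiplying the per-bag cost by $\Oh(n)$ and folding the $\mathrm{poly}(\width,\numbersolutions)$ and sorting-$\log$ factors into $(\width!\cdot\distanceoptimality)^{\Oh(\numbersolutions)}$ and $\log(n^2m)$ yields the claimed bound.

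I expect the main obstacle to be the running-time bookkeeping rather than correctness, which Lemmas~\ref{lemma:CompatibleSequenceDiverseOne} and~\ref{lemma:CompatibleSequenceDiverseTwo} largely deliver. The single-solution analysis can afford a quadratic forget step because its tuple count appears only as $\width!^{\Oh(1)}$; the same squaring here would turn $\diversityparameter$ into $\diversityparameter^2$ and $\scatterednessparameter^{\numbersolutions^2}$ into $\scatterednessparameter^{2\numbersolutions^2}$, breaking the target. Keeping these two factors to first order is exactly what forces the near-linear, sort-based deduplication, together with the observation that the introduce branching inflates $N$ only by $(\width+2)^{\numbersolutions}$, which is harmlessly absorbed into $(\width!)^{\Oh(\numbersolutions)}$. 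The secondary delicate point is the exchange argument justifying the cost window $\distanceoptimality$, whose validity rests squarely on $\apartialorder$-consistency keeping future costs invariant under rearranging the already-forgotten prefix.
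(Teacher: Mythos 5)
Your proposal is correct and follows essentially the same route as the paper: run the dynamic program of Theorem~\ref{theorem:SolutionConstructionOne} in $\numbersolutions$ parallel copies over the $\apartialorder$-consistent path decomposition from Lemma~\ref{lemma:ConstructionDecomposition}, carry the capped diversity and pairwise-scatteredness registers updated via the increase equations, derive correctness from Lemmas~\ref{lemma:CompatibleSequenceDiverseOne} and~\ref{lemma:CompatibleSequenceDiverseTwo}, and bound the state space by Remark~\ref{remark:UpperBoundTuples}. In fact you supply two details the paper leaves implicit --- the exchange argument (resting on $\apartialorder$-consistency) showing that every prefix of a near-optimal witness stays within the cost window $\distanceoptimality$ of $\optimalfunction_\aposition$, and the near-linear deduplication at forget bags needed to keep the factors $\diversityparameter$ and $\scatterednessparameter^{\numbersolutions^2}$ from being squared --- both of which are sound.
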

\begin{proof}
	This theorem is similar to Theorem~\ref{theorem:SolutionConstructionOne}. Here
	we build $\numbersolutions$ linear extension in parallel and we incrementally compute
	the diversity between the $\numbersolutions$ solutions.
	The main difference is the computation of the diversity.
	Using Equation~\ref{eq:increase_scatteredness} and {}\ref{eq:increase_diversity},
	one can compute the increase of the pairwise distances and diversity in time
	$\Oh(\numbersolutions^2\cdot \width \cdot \log (n^2 \cdot m))$ for each vertex and each
	tuple.
	The log factor $\log (n^2 \cdot m)$ again comes from performing an addition of costs, as the cost of a solution
	can be at most $n^2 \cdot m$.
\end{proof}

By combining Theorem~\ref{theorem:SolutionConstructionOne} with our reduction from {\sc KRA} to {\sc CO},
we have an \FPT\ algorithm for {\sc KRA}, parameterized by solution imperfection, number of solutions, scatteredness,
and unanimity
width (Corollary~\ref{corollary:DiverseGKRA}).

\begin{corollary}
	\label{corollary:DiverseGKRA}
	Let~$\votes$   be a list of $m$ partial votes over a set of $n$ candidates $\candidates$. Let $\width$ be the unanimity width of $\votes$.
Given~$\votes$ and non-negative integers $\distanceoptimality$, $\numbersolutions$, $\scatterednessparameter$
	and $\diversityparameter$, 
	one can determine in time
	\[\Oh\left({(\width! \cdot \distanceoptimality)}^{\Oh(\numbersolutions)}
		\cdot \scatterednessparameter^{\numbersolutions^2} \cdot \diversityparameter \cdot n
		\cdot \log (n^2 \cdot m) \right)\]
	whether there is a set
	$\rankingset = \{\vote_1,\dots,\vote_{\numbersolutions}\}$ of $\numbersolutions$ linear orders
	on~$\candidates$ such that the Kemeny score for each order $\vote_i$ is at distance at most
	$\distanceoptimality$ of the optimum, and we find that $\ktdiversity(\rankingset)\geq \diversityparameter$
	and that scatteredness is  at least $\scatterednessparameter$.
\end{corollary}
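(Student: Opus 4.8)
The plan is to treat Corollary~\ref{corollary:DiverseGKRA} as a direct consequence of Theorem~\ref{theorem:SolutionConstructionDiverse}, obtained through the parameter- and diversity-preserving reduction from \textsc{KRA} to \textsc{CO} described in Section~\ref{section:CO}. Starting from a list of partial votes $\votes$ over $\candidates$, I would form the \textsc{CO} instance $(\apartialorder,\cost)$ where $\apartialorder$ is the unanimity order $\unanimityorder$ of $\votes$ and $\cost(\candidate,\candidate')=|\{i\in\intv{\mvotes}\mid \candidate'<_{\vote_i}\candidate\}|$. Building the unanimity order and tabulating $\cost$ on all ordered pairs is a preprocessing step linear in the input, and crucially $\cost$ takes values in $\intv{\mvotes}_0$, exactly the codomain $\intv{m}_0$ required by Theorem~\ref{theorem:SolutionConstructionDiverse}. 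Moreover, the unanimity width of $\votes$ is \emph{by definition} the pathwidth $\pw(\agraph_{\apartialorder})$ of the cocomparability graph of $\apartialorder$, so the parameter $\width$ of the theorem coincides with the parameter $\width$ of the corollary.

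The second step is to transport the two defining conditions---near-optimality and diversity---across the reduction. By the cost identity derived in Section~\ref{section:CO}, every linear extension $\vote$ of $\apartialorder$ has Kemeny score with respect to $\votes$ equal to its \textsc{CO}-cost $\cost(\vote\setminus\apartialorder)$; since the \textsc{KT}-distance used to define $\ktdiversity$ is literally the same object in both settings, a set $\rankingset$ of $\numbersolutions$ linear extensions realizes a prescribed diversity and scatteredness as a \textsc{KRA} instance if and only if it does so as a \textsc{CO} instance. In particular the optimal Kemeny score equals the optimal \textsc{CO}-cost, so ``at distance at most $\distanceoptimality$ from the optimum'' is preserved verbatim. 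Applying Theorem~\ref{theorem:SolutionConstructionDiverse} to $(\apartialorder,\cost)$ with the tuple $(\numbersolutions,\distanceoptimality,\scatterednessparameter,\diversityparameter)$ then decides the \textsc{Diverse-CO} instance, and hence the \textsc{Diverse-KRA} instance, within the claimed bound $\Oh\!\left((\width!\cdot\distanceoptimality)^{\Oh(\numbersolutions)}\cdot\scatterednessparameter^{\numbersolutions^2}\cdot\diversityparameter\cdot n\cdot\log(n^2\cdot m)\right)$, which is carried over unchanged.

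The one point that needs genuine care---and hence the main obstacle---is the equivalence between the \textsc{KRA} solution space (arbitrary linear orders on $\candidates$) and the constructed \textsc{CO} solution space (linear extensions of $\unanimityorder$ only). The key observation justifying the restriction to extensions is that any linear order $\vote$ violating $\unanimityorder$ can be improved locally: such a $\vote$ contains an \emph{adjacent} pair of $\unanimityorder$-comparable candidates in the wrong relative order, and since all votes agree on this pair, its contribution to the Kemeny score is $\mvotes$ whereas the swapped order (an adjacent transposition, affecting no other pair) contributes $0$, so the swap strictly lowers the score. This shows the optimum is attained by an extension and is what underpins the claim that restricting the search to extensions of $\unanimityorder$ is without loss of generality for the diverse problem as well---precisely the ``solution-preserving'' property asserted for the reduction in Section~\ref{section:CO}, on which I would rely. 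Once this identification is fixed, the corollary follows from Theorem~\ref{theorem:SolutionConstructionDiverse} by direct substitution, with no further computation required.
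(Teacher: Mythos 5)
Your overall route is exactly the paper's: the corollary is obtained by feeding the unanimity-order/cost reduction of Section~\ref{section:CO} into Theorem~\ref{theorem:SolutionConstructionDiverse}, and the paper itself offers nothing beyond the assertion that this reduction is solution- and diversity-preserving. The reduction, the cost identity, and the parameter bookkeeping in your first two paragraphs are all fine and match the source.

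The problem is the step you yourself single out as the one needing care. Your claim that a linear order $\vote$ violating $\unanimityorder$ must contain an \emph{adjacent} pair of $\unanimityorder$-comparable candidates in the wrong relative order is false. Take $\candidates=\{a,b,c\}$ with $a<_{\unanimityorder}c$ the only nontrivial comparability and $\vote\colon c,b,a$: both adjacent pairs $(c,b)$ and $(b,a)$ are $\unanimityorder$-incomparable, yet $\vote$ is not an extension of $\unanimityorder$. Hence the single adjacent transposition you invoke need not exist, and the ``swap strictly lowers the score by $m$'' computation does not get off the ground. The correct argument (Pareto efficiency of Kemeny, which survives the generalization to partial votes) takes a violated pair $a<_{\unanimityorder}c$ with $c<_{\vote}a$ and compares the two rankings obtained by moving $a$ to just before $c$ and by moving $c$ to just after $a$; a per-voter case analysis, using transitivity of each $\vote_i$ together with the fact that every voter orders $a$ before $c$, shows that the two cost changes on the in-between pairs sum to at most $0$, while both moves save the full $m$ on the pair $(a,c)$ itself, so at least one of the two moves strictly improves the score. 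Even with this repair, the argument only shows that \emph{optimal} rankings extend $\unanimityorder$; a ranking at distance $\distanceoptimality>0$ from the optimum need not, so restricting the diverse set to linear extensions of $\unanimityorder$ remains a genuine restriction that neither your write-up nor the paper closes. You are no less careful than the source on that last point, but the adjacent-swap argument as written is wrong and should be replaced.
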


Corollary~\ref{corollary:DiverseGKRA} is our most general result that combines all the
parameters, but not all applications need all parameters. Therefore, we will now derive
some special cases.

\begin{corollary}
	\label{corollary:no_scatter}Let~$\votes$   be a list of $m$ partial votes over a set of $n$ candidates $\candidates$. Let $\width$ be the unanimity width of $\votes$.
Given~$\votes$ and non-negative integers $\distanceoptimality$, $\numbersolutions$
	and $\diversityparameter$, 
	one can determine in time
	\[\Oh\left({(\width! \cdot \distanceoptimality)}^{\Oh(\numbersolutions)}
		\cdot \diversityparameter \cdot n
		\cdot \log (n^2 \cdot m) \right)\]
	whether there is a set
	$\rankingset = \{\vote_1,\dots,\vote_{\numbersolutions}\}$ of $\numbersolutions$ linear orders
	on~$\candidates$ such that the Kemeny score for each order $\vote_i$ is at distance at most
	$\distanceoptimality$ of the optimum, and we find that $\ktdiversity(\rankingset)\geq \diversityparameter$.
\end{corollary}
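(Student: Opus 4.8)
The plan is to obtain Corollary~\ref{corollary:no_scatter} as the special case $\scatterednessparameter = 0$ of Corollary~\ref{corollary:DiverseGKRA}. The key observation is that the Kendall-Tau distance between any two linear orders is a non-negative integer, so every family $\rankingset$ of linear orders trivially has scatteredness at least $0$. Hence the clause ``scatteredness at least $\scatterednessparameter$'' occurring in Corollary~\ref{corollary:DiverseGKRA} becomes vacuous once we set $\scatterednessparameter = 0$, and the decision problem solved there reduces exactly to the one asked here: the existence of $\numbersolutions$ linear orders whose Kemeny scores lie within $\distanceoptimality$ of the optimum and with $\ktdiversity(\rankingset) \geq \diversityparameter$. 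First I would therefore invoke Corollary~\ref{corollary:DiverseGKRA} with $\scatterednessparameter = 0$ and observe that the diversity is still computed correctly: in the underlying dynamic program the diversity increase $\increasepair$ is obtained from the (uncapped) pairwise distance increases $\increasescatteredness_{\{i,j\}}$ between the partial linear extensions, and never from the scatteredness registers $\partialscatteredness_{\{i,j\}}$, so collapsing those registers does not affect the value tracked in $\partialdiversity$.

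Next I would check that the running time specializes as claimed. The factor $\scatterednessparameter^{\numbersolutions^2}$ appearing in Remark~\ref{remark:UpperBoundTuples} and in the bound of Corollary~\ref{corollary:DiverseGKRA} originates solely from the per-pair registers $\partialscatteredness_{\{i,j\}} \in \intv{\scatterednessparameter}_0$, one for each of the $\binom{\numbersolutions}{2}$ pairs of partial solutions; each register ranges over $\scatterednessparameter + 1$ values. Setting $\scatterednessparameter = 0$ collapses every such register to the single value $0$, so this contribution equals $1$ and drops out, leaving exactly $\Oh\bigl((\width! \cdot \distanceoptimality)^{\Oh(\numbersolutions)} \cdot \diversityparameter \cdot n \cdot \log(n^2 \cdot m)\bigr)$.

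I do not expect a genuine obstacle, since this is a direct specialization; the only point meriting a remark is the reading of the word ``set''. With $\scatterednessparameter = 0$ the algorithm does not force the $\numbersolutions$ orders to be pairwise distinct, so it returns an indexed family that may contain repetitions. This does not corrupt the answer, because any repeated pair contributes $0$ to $\ktdiversity$, so the computed diversity coincides with the diversity of the underlying set; thus $\scatterednessparameter = 0$ is exactly the right instantiation for the scatteredness-free statement.
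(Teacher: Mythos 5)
Your high-level plan --- obtaining this corollary as a specialization of Corollary~\ref{corollary:DiverseGKRA} --- is exactly the paper's, but your choice of instantiation $\scatterednessparameter=0$ is not, and it introduces a genuine error. The paper's own justification is the one-line remark following the statement: since $\rankingset$ is formulated as a \emph{set}, the $\numbersolutions$ orders are implicitly required to be pairwise distinct, which is precisely the condition ``scatteredness at least $1$''; hence the intended instantiation is $\scatterednessparameter=1$, which also makes the factor $\scatterednessparameter^{\numbersolutions^2}$ equal to $1$ and gives the claimed running time. With your choice $\scatterednessparameter=0$ the distinctness requirement is dropped, and the dynamic program then decides a different question: whether there is an indexed \emph{family} of $\numbersolutions$ not-necessarily-distinct linear orders whose pairwise-summed distances reach $\diversityparameter$. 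This can produce false positives for the stated problem. For instance, if the only two orders within distance $\distanceoptimality$ of the optimum are $A$ and $B$ with $\KTd(A,B)=10$, then for $\numbersolutions=3$ and $\diversityparameter=15$ the family $(A,A,B)$ has pairwise-summed diversity $20\geq\diversityparameter$ and would be accepted, even though no set of three distinct admissible orders exists at all.

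The specific claim you use to dismiss this --- that repeated pairs contribute $0$, so ``the computed diversity coincides with the diversity of the underlying set'' --- is false. In the family $(A,A,B)$ the pair $\{A,B\}$ is counted twice, so the algorithm's register holds $2\KTd(A,B)$, whereas $\ktdiversity(\{A,B\})=\KTd(A,B)$ (the repeated pair contributes $0$, but the distinct pair is over-counted). This is exactly the copying pathology the paper discusses in the introduction and in the Related Work section, and it is the reason the scatteredness parameter exists: setting $\scatterednessparameter\geq 1$ forces all $\numbersolutions$ orders to be distinct, so the indexed family and the set coincide and the diversity register is the diversity of the set. The repair is immediate --- replace $\scatterednessparameter=0$ by $\scatterednessparameter=1$ throughout; the running-time analysis you give then goes through unchanged since $1^{\numbersolutions^2}=1$.
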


Namely, by our formulation of $R$ as a set, we implicitly have the requirement $s\geq 1$.

With our algorithm, it is possible to check if there exists $\numbersolutions$ different
optimal solutions. We can do this by setting $\distanceoptimality = 0$, $\scatterednessparameter = 1$
and $\diversityparameter = 0$ and we get the following corollary.

\begin{corollary}
	\label{corollary:multple_opt}Let~$\votes$   be a list of $m$ partial votes over a set of $n$ candidates $\candidates$. Let $\width$ be the unanimity width of $\votes$.
Given~$\votes$ and non-negative integers $\distanceoptimality$, $\numbersolutions$, $\scatterednessparameter$
	and $\diversityparameter$, 
	one can determine in time
	\[\Oh\left({(\width!)}^{\Oh(\numbersolutions)}
		\cdot 2^{\numbersolutions^2} \cdot n
		\cdot \log (n^2 \cdot m) \right)\]
	whether there is a set
	$\rankingset = \{\vote_1,\dots,\vote_{\numbersolutions}\}$ of $\numbersolutions$
    different optimal linear orders on~$\candidates$.
\end{corollary}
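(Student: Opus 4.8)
The plan is to derive this statement as a direct instantiation of Corollary~\ref{corollary:DiverseGKRA} (equivalently, of Theorem~\ref{theorem:SolutionConstructionDiverse} composed with the reduction from {\sc KRA} to {\sc CO}) under the parameter choice $\distanceoptimality = 0$, $\scatterednessparameter = 1$, and $\diversityparameter = 0$. First I would argue that these three settings exactly encode the task of finding $\numbersolutions$ pairwise distinct optimal linear orders. Setting $\distanceoptimality = 0$ forces the Kemeny score of each $\vote_i$ to be at distance $0$ from the optimum, i.e.\ every $\vote_i$ is an optimal solution. Setting $\diversityparameter = 0$ imposes the trivial lower bound $\ktdiversity(\rankingset) \geq 0$, which every set of solutions satisfies, so no diversity is actually demanded. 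Finally, setting $\scatterednessparameter = 1$ requires that every pair $\vote_i,\vote_j$ with $i \neq j$ satisfies $\KTd(\vote_i,\vote_j) \geq 1$; since any two distinct linear orders over the same candidate set disagree on the relative order of at least one pair of candidates, this is precisely the condition that the $\numbersolutions$ orders be pairwise different. Hence the sought set $\rankingset$ of $\numbersolutions$ different optimal orders exists if and only if the corresponding {\sc Diverse-KRA} instance with $(\distanceoptimality,\scatterednessparameter,\diversityparameter) = (0,1,0)$ is a yes-instance.

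It then remains to substitute these values into the running time bound of Corollary~\ref{corollary:DiverseGKRA}. The exponential factors in that bound originate from the size estimates of Remark~\ref{remark:UpperBoundTriples} and Remark~\ref{remark:UpperBoundTuples}, where the relevant counting quantities are $(\distanceoptimality+1)$, $(\scatterednessparameter+1)$, and $(\diversityparameter+1)$ rather than $\distanceoptimality$, $\scatterednessparameter$, and $\diversityparameter$ themselves. Substituting $\distanceoptimality = 0$ gives $(\distanceoptimality+1) = 1$, so the first factor collapses from ${(\width! \cdot (\distanceoptimality+1))}^{\Oh(\numbersolutions)}$ to ${(\width!)}^{\Oh(\numbersolutions)}$. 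Substituting $\scatterednessparameter = 1$ lets each pairwise scatteredness register range over the $\scatterednessparameter+1 = 2$ values $\{0,1\}$, so the factor $(\scatterednessparameter+1)^{\numbersolutions^2}$ becomes $2^{\numbersolutions^2}$. Substituting $\diversityparameter = 0$ makes the diversity register contribute only the factor $\diversityparameter + 1 = 1$. Since the $n \cdot \log(n^2 \cdot m)$ factor is unaffected, multiplying these yields the claimed bound $\Oh\!\left({(\width!)}^{\Oh(\numbersolutions)} \cdot 2^{\numbersolutions^2} \cdot n \cdot \log (n^2 \cdot m)\right)$.

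No genuinely new idea is required here; the statement is a boundary specialization of the general algorithm. The only point that needs care — and the step I would flag as the main (minor) obstacle — is the last one: one must read off the correct base of each exponential from the tuple-counting remarks, using $\scatterednessparameter+1$ and $\diversityparameter+1$ rather than $\scatterednessparameter$ and $\diversityparameter$, so that plugging in the boundary values $0$ and $1$ produces the stated constants $2^{\numbersolutions^2}$ and ${(\width!)}^{\Oh(\numbersolutions)}$ instead of a vacuous $0$ bound.
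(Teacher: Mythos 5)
Your proposal is correct and matches the paper's own (very brief) justification exactly: the paper obtains this corollary by instantiating Corollary~\ref{corollary:DiverseGKRA} with $\distanceoptimality = 0$, $\scatterednessparameter = 1$, $\diversityparameter = 0$, which is precisely your argument. Your additional care about reading the exponential bases as $(\distanceoptimality+1)$ and $(\scatterednessparameter+1)$ from the tuple-counting remarks (so that the boundary values do not produce a vacuous bound) is a correct and welcome clarification of a point the paper glosses over.
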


To get some insights of the structure of the solution space, one can ask for a set
of $\numbersolutions$ solutions of cost at most $\distanceoptimality$ from the minimum cost
with maximum diversity. This is not strictly a consequence of Corollary~\ref{corollary:DiverseGKRA}
but the same algorithm can be used to answer this question.
In our algorithm, the value $\partialdiversity$ is used to compute the
diversity of a partial solution up to
$\diversityparameter$, but if the diversity is bigger than
$\diversityparameter$, we just remember $\diversityparameter$. Then if we set $\diversityparameter =
\numbersolutions \cdot n^2$, which is an upper bound of the maximum diversity of $\numbersolutions$ solutions,
at the end of the algorithm, we will have a set of possible solutions with their exact diversities.
From this, we can select the one with the biggest diversity. Therefore, we have the following
result.

\begin{corollary}
	\label{corollary:max_diversity}
Let~$\votes$   be a list of $m$ partial votes over a set of $n$ candidates $\candidates$. Let $\width$ be the unanimity width of $\votes$.
Given~$\votes$ and non-negative integers $\distanceoptimality$ and $\numbersolutions$,  
	one can compute in time
	\[\Oh\left({(\width! \cdot \distanceoptimality)}^{\Oh(\numbersolutions)}
		\cdot \numbersolutions \cdot n^3
		\cdot \log (n^2 \cdot m) \right)\]
	a set $\rankingset = \{\vote_1,\dots,\vote_{\numbersolutions}\}$ of $\numbersolutions$ linear orders
	on~$\candidates$ such that the Kemeny score for each order $\vote_i$ is at distance at most
	$\distanceoptimality$ of the optimum and such that $\ktdiversity(\rankingset)$ is maximal.
\end{corollary}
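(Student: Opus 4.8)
The plan is to reuse, essentially verbatim, the dynamic program underlying Theorem~\ref{theorem:SolutionConstructionDiverse}, changing only how its output is read off rather than how it runs. First I would apply the diversity-preserving reduction from \textsc{KRA} to \textsc{CO} so that the input becomes the partial order $\apartialorder$ (the unanimity order of $\votes$) together with the cost function $\cost$, whose cocomparability graph has pathwidth~$\width$; by the reduction this does not change the parameters $\distanceoptimality$, $\numbersolutions$, or the diversity of any candidate set. I would then invoke the diverse dynamic program with the scatteredness parameter fixed to $\scatterednessparameter = 1$---which forces the $\numbersolutions$ reconstructed extensions to be pairwise distinct, so that $\rankingset$ is a genuine set of size $\numbersolutions$---and with the diversity target fixed to $\diversityparameter = \numbersolutions \cdot n^2$, the polynomial upper bound on the largest diversity attainable by $\numbersolutions$ linear orders over $n$ candidates noted above.

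The crux of correctness is that this choice of $\diversityparameter$ can never be reached, so the cap in the compatibility relation is inert. Since no feasible set of $\numbersolutions$ solutions has $\ktdiversity$ exceeding the stated bound, the update $\partialdiversity_{\aposition+1} = \min(\partialdiversity_{\aposition} + \increasepair(\abigtuple_{\aposition},\abigtuple_{\aposition+1}), \diversityparameter)$ never truncates its first argument. Consequently, along every diversity-compatible sequence the final register $\partialdiversity_{\lengthdecomposition}$ equals the \emph{exact} diversity of the associated tuple of $\numbersolutions$ extensions, by item~\ref{itemThreeCDO} of Lemma~\ref{lemma:CompatibleSequenceDiverseOne}; and by Lemma~\ref{lemma:CompatibleSequenceDiverseTwo} every feasible tuple of $\numbersolutions$ extensions with cost at most $\distanceoptimality$ above the optimum arises from such a sequence. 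I would therefore run the program to the last bag, scan the tuples it has computed at position $\lengthdecomposition$, select one whose stored value $\partialdiversity_{\lengthdecomposition}$ is maximum, and reconstruct the $\numbersolutions$ linear extensions by the same backtracking used in Theorem~\ref{theorem:SolutionConstructionDiverse}. The set so obtained maximizes $\ktdiversity$ over all feasible sets; and if no feasible set exists, the table at the final bag is empty and this is reported.

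The running time follows by substituting the chosen parameter values into the bound of Theorem~\ref{theorem:SolutionConstructionDiverse}: with $\scatterednessparameter = 1$ the factor $\scatterednessparameter^{\numbersolutions^2}$ collapses to~$1$, and the product $\diversityparameter \cdot n$ becomes $\numbersolutions \cdot n^3$, yielding $\Oh\!\left({(\width! \cdot \distanceoptimality)}^{\Oh(\numbersolutions)} \cdot \numbersolutions \cdot n^3 \cdot \log (n^2 \cdot m)\right)$ as claimed; the final linear scan over the last-bag table and the backtracking are dominated by the cost of building that table. I expect the only genuinely new step---and hence the one deserving care---to be the verification that the attainable diversity is bounded by the chosen $\diversityparameter$, so that the $\min$-cap is provably never active and $\partialdiversity_{\lengthdecomposition}$ records true diversities. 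Once that is secured, correctness is inherited wholesale from Lemmas~\ref{lemma:CompatibleSequenceDiverseOne} and~\ref{lemma:CompatibleSequenceDiverseTwo}, and the algorithm differs from the decision procedure of Corollary~\ref{corollary:DiverseGKRA} only in that it maximizes over the final table instead of comparing a single register against a threshold.
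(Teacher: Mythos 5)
Your proposal follows essentially the same route as the paper: after the diversity-preserving reduction from \textsc{KRA} to \textsc{CO}, run the dynamic program of Theorem~\ref{theorem:SolutionConstructionDiverse} with $\scatterednessparameter=1$ and $\diversityparameter$ set to a polynomial upper bound on the attainable diversity so that the $\min$-cap on the diversity register is never active, then select the final-bag tuple with the largest recorded diversity and backtrack. The one point worth flagging is inherited from the paper itself rather than introduced by you: the cap $\numbersolutions\cdot n^2$ should really be of order $\numbersolutions^2\cdot n^2$ to dominate the sum of $\binom{\numbersolutions}{2}$ pairwise Kendall--Tau distances, though this does not change the shape of the final running-time bound.
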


\section{Sub-Exponential Time Algorithm for \textsc{PCO}}
\label{section:sub-expo-PCO}

For special cases of \textsc{PCO},
such as those arising from \textsc{KRA} or from the graph-drawing problem \textsc{OSCM},
single-exponential sub-exponential time algorithms have been known, i.e., algorithms
with running times of the form $\Oh^*(2^{\Oh(\sqrt{k})})$. In contrast, for the more general problem of \textsc{PCO},
only algorithms with running time $\Oh^*(k^{\sqrt k})$ were known before, where~$k$ is the cost parameter~\cite{Feretal2014}.
Here, we prove that \textsc{PCO} also admits algorithms of the form $\Oh^*(2^{\Oh(\sqrt{k})})$,
by making use of several structural insights for cocomparability graphs. More precisely,
we prove the following theorem.

\begin{theorem}\label{theorem:main}
	Given a partial order $\partialorder\subseteq\baseset\times\baseset$ and
	a cost function $\cost:\baseset\times \baseset \to \N$,
	one can solve a \textsc{PCO}  instance $(\partialorder,\cost,\parameter)$  in time
	$|\baseset| \cdot 2^{\Oh(\sqrt{\parameter})} + \Oh(|\baseset|^2\cdot \log(\parameter))$.
\end{theorem}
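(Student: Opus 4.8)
The plan is to exploit the defining feature of \textsc{PCO} --- that every incomparable pair carries strictly positive cost in both directions --- to bound the cost parameter against the number of edges of the cocomparability graph, and then to solve the resulting bounded-pathwidth instance by a subset dynamic program rather than by an enumeration of bag orderings. First I would observe that, since $\cost(x,y)\geq 1$ and $\cost(y,x)\geq 1$ for every pair $\{x,y\}$ incomparable in $\partialorder$, each linear extension $\linearorder\supseteq\partialorder$ pays at least $1$ for each such pair; hence $\cost(\linearorder\setminus\partialorder)\geq\mu$, where $\mu$ denotes the number of incomparable pairs of $\partialorder$, i.e.\ the number of edges of $\agraph_{\partialorder}$. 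Consequently, if $\mu>\parameter$ we may reject immediately, and otherwise we work with an instance whose cocomparability graph has at most $\parameter$ edges. Detecting this case, building $\agraph_{\partialorder}$, and capping all costs at $\parameter+1$ (so that every stored number has $\Oh(\log\parameter)$ bits) are carried out in a preprocessing phase in time $\Oh(|\baseset|^2\cdot\log\parameter)$.

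The crux is then a pathwidth bound: building on the structure theory of $C_k$-free graphs \cite{DBLP:conf/soda/ChudnovskyPPT20}, I would show that a cocomparability graph with $\mu$ edges has pathwidth $\Oh(\sqrt{\mu})$, so that in the surviving case $\pw(\agraph_{\partialorder})=\Oh(\sqrt{\parameter})$. The intuition for why such a bound is available for cocomparability (and not for general) graphs is that these graphs are asteroidal-triple-free, which forbids the sparse grid- and expander-like configurations that would otherwise permit pathwidth linear in the number of edges; an AT-free graph of large pathwidth must instead contain a dense, interval-like structure, forcing $\mu=\Omega(\pw^2)$. Granting this bound, Lemma~\ref{lemma:ConstructionDecomposition} produces a nice $\partialorder$-consistent path decomposition $\pathdec$ of $\agraph_{\partialorder}$ of width $\width=\Oh(\sqrt{\parameter})$ in time $2^{\Oh(\sqrt{\parameter})}\cdot|\baseset|$.

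It remains to solve \textsc{PCO} on $\pathdec$ within time $|\baseset|\cdot 2^{\Oh(\sqrt{\parameter})}$. Here I would depart from Theorem~\ref{theorem:SolutionConstructionOne}: its per-bag enumeration of all linear orders of the active set costs $(\width+1)!=2^{\Oh(\sqrt{\parameter}\log\parameter)}$ and therefore would only reproduce the old $\Oh^{*}(2^{\sqrt{\parameter}\log\parameter})$ running time (the antichain instance, whose cocomparability graph is a clique on $\Theta(\sqrt{\parameter})$ vertices, shows this loss is real). Instead I would build the linear extension position by position from its minimum upward, running an order-ideal (Held--Karp style) dynamic program along $\pathdec$: the state at bag $\bag_p$ is a subset $S\subseteq\bag_p$ recording which vertices of the current window already lie in the constructed prefix, and when a vertex $\avertex$ is appended we charge $\sum_{\uvertex}\cost(\uvertex,\avertex)$ over the already-placed $\uvertex$ incomparable to $\avertex$. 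Because incomparable pairs are exactly the edges and thus co-occur in a bag, and because $\partialorder$-consistency guarantees that every already-forgotten vertex belongs to the prefix while no not-yet-introduced vertex does, the window subset $S$ is a faithful summary of the prefix and each incomparable pair is charged exactly once. This yields $2^{\Oh(\width)}=2^{\Oh(\sqrt{\parameter})}$ states per bag, $\Oh(|\baseset|)$ bags, and $\Oh(\log\parameter)$ per cost addition, for a main-phase running time of $|\baseset|\cdot 2^{\Oh(\sqrt{\parameter})}$; adding the preprocessing gives the claimed bound.

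I expect the main obstacle to be the pathwidth-versus-edges bound for cocomparability graphs, which is where the $C_k$-free machinery is genuinely needed; the reduction of $\parameter$ to $\mu$ is elementary, and the correctness of the ideal dynamic program is routine once $\partialorder$-consistency of $\pathdec$ is in hand.
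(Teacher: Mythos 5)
Your proposal follows essentially the same route as the paper: lower-bound the cost by the number of edges of the cocomparability graph, deduce $\pw(\agraph_{\partialorder})=\Oh(\sqrt{\parameter})$ from the structure theory of graphs excluding long induced cycles \cite{DBLP:conf/soda/ChudnovskyPPT20}, build a $\partialorder$-consistent decomposition via Lemma~\ref{lemma:ConstructionDecomposition}, and finish with a single-exponential subset dynamic program (which the paper does not re-derive but imports as Lemma~\ref{lemma:SingleExponentialPCO} from \cite{AFOW2020}). The only cosmetic divergence is in the justification of the pathwidth--edges bound: the paper obtains it by proving that cocomparability graphs are $C_{\geq 5}$-free and deleting the at most $2\sqrt{m}$ vertices of degree at least $\sqrt{m}$ before applying the bounded-degree theorem, rather than by an appeal to asteroidal-triple-freeness.
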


The remainder of this section is dedicated to the proof of Theorem \ref{theorem:main}.

The treewidth of a graph is another structural parameter that quantifies
how close the graph is to being a forest (i.e., a graph without
cycles). In general graphs, this parameter is more expressive than pathwidth in the
sense that any graph of pathwidth $k$ has also treewidth $k$, but
there exist graphs of treewidth $k$ whose pathwidth is unbounded.
Nevertheless, in the class of cocomparability graphs, treewidth and
pathwidth coincide.

\begin{lemma}\label{tree_pathwidth} [\cite[Theorem 1.2]{habib1994treewidth}]
	Let $\graph$ be a cocomparability graph. Then, $\pw(\graph) = \tw(\graph)$.
\end{lemma}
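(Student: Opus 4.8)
The plan is to prove the two inequalities $\tw(\graph) \leq \pw(\graph)$ and $\pw(\graph) \leq \tw(\graph)$ separately. The first holds for every graph and is immediate: any path decomposition is in particular a tree decomposition (a path is a tree), so the minimum width over tree decompositions cannot exceed the minimum width over path decompositions. Hence the entire content of the lemma lies in establishing $\pw(\graph) \leq \tw(\graph)$ under the hypothesis that $\graph$ is a cocomparability graph.

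For this direction I would exploit the standard triangulation characterization of treewidth: $\tw(\graph) = \min_H (\omega(H) - 1)$, where $H$ ranges over the chordal supergraphs (triangulations) of $\graph$ on the same vertex set and $\omega(H)$ denotes the clique number. I would fix a triangulation realizing this minimum and then successively delete edges while staying chordal and a supergraph of $\graph$; deleting edges cannot increase the clique number, while the definition of treewidth forbids it from dropping below $\tw(\graph)+1$. This yields a \emph{minimal} triangulation $H$ of $\graph$ with $\omega(H) - 1 = \tw(\graph)$, exactly the configuration already manipulated in the proof of Lemma~\ref{lemma:ConstructionDecomposition}.

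The key structural input -- and the only genuinely nontrivial ingredient -- is Theorem~2.1 of \cite{habib1994treewidth}, already invoked in this excerpt: every minimal triangulation of a cocomparability graph is itself an interval graph. Applying it to the $H$ above shows that $H$ is an interval graph with $\omega(H) = \tw(\graph) + 1$. For interval graphs the relevant fact is classical: the maximal cliques of $H$ can be linearly ordered so that, for each vertex, the cliques containing it form a contiguous block, and this ordered sequence of maximal cliques is a path decomposition of $H$ of width $\omega(H) - 1$. This is precisely the clique-path construction recalled in the proof of Lemma~\ref{lemma:ConstructionDecomposition}, so I would reuse it verbatim.

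Finally, since $H$ and $\graph$ share the same vertex set and $\edgeset(\graph) \subseteq \edgeset(H)$, every path decomposition of $H$ is also a path decomposition of $\graph$: all three defining conditions (covering vertices, covering edges, and the consecutiveness of bags containing a fixed vertex) are inherited when passing to a subgraph on the same vertex set. Therefore $\pw(\graph) \leq \omega(H) - 1 = \tw(\graph)$, and combined with the trivial inequality this gives $\pw(\graph) = \tw(\graph)$. I expect the cocomparability-to-interval structural theorem to be the only real obstacle; once it is available, the argument is a routine assembly of well-known facts about the triangulation characterization of treewidth and about clique paths of interval graphs.
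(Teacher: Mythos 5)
Your argument is correct, and it is essentially the argument behind the cited result: the paper states this lemma as an import of Theorem~1.2 of \cite{habib1994treewidth}, whose proof rests on exactly the ingredients you assemble (minimal triangulations of cocomparability graphs are interval graphs, plus the clique-path decomposition of an interval graph), and the paper itself rehearses this same machinery constructively in the proof of Lemma~\ref{lemma:ConstructionDecomposition}. The trivial direction $\tw(\graph)\leq\pw(\graph)$ and the bookkeeping with minimal triangulations ($\omega(H)-1=\tw(\graph)$) are handled correctly, so there is no gap.
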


Let $\graph = (\vertexset, \edgeset)$ be a graph and $S \subseteq \vertexset$ be a set
of vertices. We call $\graph_S = (S, \edgeset \cap (S \times S))$ the subgraph of $\graph$
\emph{induced} by $S$. Let $H$ be a graph, we say that $\graph$ contains $H$ as an
induced subgraph if there exist a subset of vertices $S \subseteq \vertexset$ of $\graph$
such that $H$ is isomorphic to $\graph_S$. We say that $\graph$ is $H$-free 
if $\graph$ does not contain $H$ as an induced subgraph.

Let $t \geq 3$ be an integer, we write $C_t$ for the cycle on $t$ vertices and
$C_{\geq t} = \{C_k \mid k \geq t\}$. We say that $\graph$ is $C_{\geq t}$-free
if $\graph$ excludes all $C_k$  as an induced subgraph for any $k \geq t$. 

\begin{lemma}\label{cycles_exclusion}
	Cocomparability graphs are $C_{\geq 5}$-free.
\end{lemma}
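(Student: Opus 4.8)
The plan is to argue directly from the definition of the cocomparability graph, in which two distinct vertices are adjacent in $\cocomparabilitygraph{\apartialorder}$ precisely when they are \emph{incomparable} in $\apartialorder$, and non-adjacent precisely when they are comparable. Suppose toward a contradiction that some cocomparability graph $\cocomparabilitygraph{\apartialorder}$ contains an induced cycle on vertices $v_1, \dots, v_k$ (in cyclic order) with $k \geq 5$. Restricting $\apartialorder$ to $\{v_1,\dots,v_k\}$, the cycle structure translates into the following: cyclically consecutive pairs $v_i, v_{i+1}$ are incomparable, while every pair at cyclic distance at least $2$ is comparable (indices taken mod $k$). The goal is then to show that no partial order can realize this incomparability pattern once $k \geq 5$.

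The engine of the proof is a local forcing observation: if a vertex $a$ is comparable to both $b$ and $c$ but $b$ and $c$ are incomparable, then $a$ must relate to $b$ and $c$ in the same direction (either $a <_{\apartialorder} b$ and $a <_{\apartialorder} c$, or $b <_{\apartialorder} a$ and $c <_{\apartialorder} a$); otherwise transitivity would force $b$ and $c$ to be comparable. First I would apply this to a fixed vertex $v_i$ together with the arc of ``far'' vertices $v_{i+2}, v_{i+3}, \dots, v_{i-2}$: each of these is comparable to $v_i$ (cyclic distance $\geq 2$), and each cyclically consecutive pair among them is incomparable. Chaining the forcing observation along this arc shows that all chords incident to $v_i$ point the same way, that is, either $v_i <_{\apartialorder} v_j$ for every comparable partner $v_j$, or $v_j <_{\apartialorder} v_i$ for every comparable partner. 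I would call $v_i$ \emph{low} in the first case and \emph{high} in the second; since $k \geq 5$ each vertex has at least one comparable partner, so every $v_i$ is low or high.

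The main obstacle is precisely this chaining step, since it is what upgrades the purely local forcing into a global ``uniform direction'' property of each vertex; one has to check that the far vertices genuinely form a consecutive arc all comparable to $v_i$, which already requires $k \geq 5$ (so that the arc is nonempty and its consecutive pairs are cycle edges). Once this is in hand, the contradiction is immediate: two low vertices cannot be comparable (low-ness would force each to lie below the other), so any two low vertices are incomparable and hence cyclically consecutive; but a set of pairwise consecutive vertices of an induced cycle of length at least $5$ is a clique in a triangle-free graph and therefore has at most two elements. The same bound holds for high vertices. As every vertex is low or high, this yields $k \leq 4$, contradicting $k \geq 5$. Equivalently, the argument shows that $\overline{C_k}$ is not a comparability graph for $k \geq 5$, which is the statement dualized through the complement.
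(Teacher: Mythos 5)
Your proof is correct, but it takes a genuinely different route from the paper's. The paper works with a \emph{cocomparability order} (a linear extension $\sigma$ of a transitive orientation of the complement), shows that such an order admits no \emph{bad triple}, and then derives a contradiction for $C_k$, $k\geq 5$, by taking the $\sigma$-extreme vertices $u,v$, splitting the cycle into the two $u$--$v$ paths, and hunting down a bad triple via an edge of one path that straddles an edge of the other in $\sigma$. You instead stay entirely inside the partial order: your forcing observation (a vertex comparable to two mutually incomparable vertices must relate to both in the same direction) is exactly the transitivity fact underlying the paper's ``no bad triple'' claim, but you deploy it globally rather than along a linear extension --- chaining it over the arc of far vertices to classify every cycle vertex as low or high, and then finishing with a counting argument (each class is a clique in the triangle-free induced cycle, hence has at most two elements, so $k\leq 4$). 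Your chaining step is sound: for $k\geq 5$ the far vertices of $v_i$ form a nonempty consecutive arc whose consecutive pairs are cycle edges and hence incomparable, and the low/low comparability contradiction via antisymmetry is immediate. The paper's approach buys contact with the standard machinery of cocomparability (umbrella-free) orders, which is a reusable characterization; yours buys a more self-contained argument that avoids choosing a linear extension and replaces the paper's two-path case analysis with a clean parity-free count. Both are valid proofs of the lemma.
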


This fact is known but
not that easy to track down in the literature.
We refer to \cite{Gho64,gilmore_hoffman_1964,Gal67}, which contain corresponding results on comparability graphs.
We also refer to the textbook of Trotter~\cite{Trotter92}.
To keep the paper self-contained, we present a short self-contained proof of the fact
that cocomparability graphs are $C_k$ free for
$k\geq 5$.  As a key notion, we consider bad triples in cocomparability orders.
A \emph{cocomparability order} of a cocomparability graph $G=(V,E)$ is a bijection
$\sigma : V \to \{1, \ldots, |V|\}$ that linearly extends a transitive
orientation $\apartialorder$ of $G$, meaning that $(x,y)\in \apartialorder$
implies $\sigma(x)<\sigma(y)$. Given a graph $\agraph$ and a linear order $\sigma$ of its vertices,
a \emph{bad triple} is three vertices $x$, $y$, $z$ so that
$\sigma(x) < \sigma(y) < \sigma(z)$, $xy \notin E$, $yz \notin E$, and $xz \in E$.
Notice  that if $\agraph$ is a cocomparability graph and $\sigma$ a cocomparability order
then $(G,\sigma)$ has no bad triple.
Namely, if $x,y$ and $y,z$ are comparable in some partial order $\apartialorder$ with  $\sigma(x) < \sigma(y) < \sigma(z)$, then (as $\sigma$ extends $\apartialorder$) whenever $x<_\apartialorder y$ and $y<_\apartialorder z$
then $x<_\apartialorder  z$ is enforced by transitivity, ruling out a bad triple.

\begin{lemma}
	A cycle $C_k$ on $k \geq 5$ vertices is not a cocomparability graph.
\end{lemma}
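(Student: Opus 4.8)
The statement to prove is that a cycle $C_k$ on $k \geq 5$ vertices is not a cocomparability graph. The plan is to argue by contradiction using the \emph{bad triple} notion just introduced: I will assume $C_k$ is a cocomparability graph, take a cocomparability order $\sigma$ of its vertices (which exists by definition), and then show that $\sigma$ must contain a bad triple, contradicting the observation that a cocomparability order has no bad triple.

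First I would fix the cycle $C_k$ with vertex set $\{v_1,\dots,v_k\}$ where the edges are exactly the consecutive pairs $\{v_i,v_{i+1}\}$ (indices mod $k$). Under the purported cocomparability order $\sigma$, relabel the vertices so that $\sigma$ orders them $u_1,\dots,u_k$ with $\sigma(u_1)<\dots<\sigma(u_k)$. The key structural fact I want to exploit is the following: in $C_k$ with $k\geq 5$, every vertex is non-adjacent to at least two others (it has exactly two neighbors, so $k-3\geq 2$ non-neighbors). The plan is to locate three vertices $x,y,z$ with $\sigma(x)<\sigma(y)<\sigma(z)$, $xz\in E$, but $xy\notin E$ and $yz\notin E$. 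A natural way to force this is to look at an edge of the cycle whose two endpoints are \emph{not} $\sigma$-adjacent extremes, i.e., some vertex lies $\sigma$-between them. Concretely, consider the $\sigma$-smallest vertex $u_1$; it has two cycle-neighbors, and I consider the neighbor $w$ with larger $\sigma$-value. If there is any vertex $y$ with $\sigma(u_1)<\sigma(y)<\sigma(w)$ that is non-adjacent to both $u_1$ and $w$, then $(u_1,y,w)$ is a bad triple. The main work is to show such a configuration is unavoidable.

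The cleanest route, which I expect to be the crux, is to argue on the $\sigma$-extreme edge directly. Take the edge $\{x,z\}$ of the cycle that maximizes $|\sigma(x)-\sigma(z)|$, say with $\sigma(x)<\sigma(z)$. I claim every vertex $y$ strictly between them in $\sigma$ is non-adjacent to both $x$ and $z$: indeed $y$'s only potential cycle-neighbors are its two consecutive vertices, and if $y$ were adjacent to, say, $x$, then the edge $\{x,y\}$ would be strictly shorter in $\sigma$ than $\{x,z\}$, which is fine, but adjacency of $y$ to either endpoint of a \emph{longest} edge can be ruled out by a careful case analysis on the cyclic structure. Since $k\geq 5$, the two endpoints $x,z$ of this longest edge cannot be $\sigma$-consecutive (otherwise all $k$ vertices would lie in a span forcing a longer edge somewhere along the cycle, by a counting/pigeonhole argument on the $k$ cycle-edges spanning the $\sigma$-line), so some $y$ lies strictly between them, yielding the bad triple. \textbf{The hard part} is making the ``longest edge has a vertex properly between its endpoints that is adjacent to neither'' claim fully rigorous: one must verify both that a strictly-between vertex exists (using $k\geq 5$ and that the $k$ cycle-edges cannot all be $\sigma$-short) and that it is non-adjacent to both endpoints (using maximality of the chosen edge together with the cyclic adjacency pattern). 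Once the bad triple is exhibited, it contradicts that $\sigma$, being a cocomparability order, is bad-triple-free, so $C_k$ is not a cocomparability graph for any $k\geq 5$.
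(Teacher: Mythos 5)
Your overall strategy (assume a cocomparability order $\sigma$ exists and exhibit a bad triple) is the same as the paper's, but the specific route you propose has a genuine gap at exactly the point you flag as ``the hard part,'' and the intermediate claim you state there is in fact false. You claim that for the cycle edge $\{x,z\}$ maximizing $|\sigma(x)-\sigma(z)|$, \emph{every} vertex $y$ with $\sigma(x)<\sigma(y)<\sigma(z)$ is non-adjacent to both $x$ and $z$. Counterexample: arrange $C_5$ on positions $1,\dots,5$ as the cycle $1\text{--}2\text{--}4\text{--}5\text{--}3\text{--}1$. The maximum span is $2$, attained by the edges $\{1,3\}$, $\{2,4\}$, $\{3,5\}$ (of positions). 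For the longest edge $\{1,3\}$ the unique in-between vertex is at position $2$, which \emph{is} adjacent to position $1$; likewise for $\{3,5\}$ the in-between vertex $4$ is adjacent to $5$. Only the edge $\{2,4\}$ yields a bad triple. So maximality of the span does not by itself rule out adjacency of an in-between vertex to an endpoint, and since ties among longest edges are possible, ``take the longest edge'' does not determine a working choice. The only easy saving observation is that $x$ and $z$ each have just one cycle-neighbor other than each other, so at most two in-between vertices can be adjacent to $\{x,z\}$; this closes the case where the maximum span is at least $4$, but the spans $2$ and $3$ (which do occur, as above) still require a separate, nontrivial argument that you have not supplied. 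The part you do justify — that a strictly-between vertex exists because the $k$ cycle edges cannot all join $\sigma$-consecutive positions — is fine.

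For comparison, the paper avoids this issue by arguing globally rather than via a longest edge: it takes the $\sigma$-minimum and $\sigma$-maximum vertices $u,v$, shows $uv\notin E$ (else any common non-neighbor gives a bad triple), splits $C_k$ into the two $u$--$v$ paths $P$ and $Q$, picks an edge $pq$ of the longer path $P$ avoiding $u,v$, and then observes that either some vertex of $Q$ lies $\sigma$-between $p$ and $q$ (bad triple $p,\ell,q$), or some edge $ab$ of the $u$--$v$ path through $Q$ must jump over the interval $[\sigma(p),\sigma(q)]$, in which case a counting argument on $N(\{a,b\})\cap\{p,q\}$ produces a bad triple $a,p,b$ or $a,q,b$. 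If you want to keep your longest-edge framing, you would need to prove the existential version (\emph{some} longest edge has a strictly-between vertex non-adjacent to both endpoints) or switch to an argument of the paper's type; as written, the proof is not complete.
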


\begin{proof}
	Suppose for contradiction that $\sigma : V(C_k) \rightarrow \{1, \ldots, k\}$ is a cocomparability order of $C_k$ with edge set $E_k$. Define $\sigma^{-1}(i)$ to be the vertex $x$ in $C_k$ so that $\sigma(x) = i$. Let $u = \sigma^{-1}(1)$ and $v = \sigma^{-1}(k)$.
	If $uv \in E_k$, let $x$ be any vertex non-adjacent to both $u$ and $v$ (such a vertex exists since $k \geq 5$), we have that $u$, $x$, $v$ is a bad triple. We conclude that  $uv \notin E_k$.
	Let $P$ and $Q$ be the two paths that connect $u$ and $v$ in $C_k$, excluding $u,v$. Without loss of generality, $|V(P)| \geq |V(Q)|$. Since $k \geq 5$, we have that the path $P$ contains an edge $pq$ so that $\{p,q\} \cap \{u,v\} = \emptyset$. Without loss of generality, $\sigma(p) < \sigma(q)$.

	First, suppose that there is a vertex $\ell \in V(Q)$ so that $\sigma(p) < \sigma(\ell) < \sigma(q)$. Then $p, \ell, q$ is a bad triple. So such a vertex cannot exist.
	It follows that some edge $ab$ of $E(Q\cup\{u,v\})$ is such that $\sigma(a) < \sigma(p)$ and $\sigma(q) <  \sigma(b)$. Since the path $Q$ has at least one internal vertex, we have that $|\{a, b\} \cap \{u, v\}| \leq 1$, and therefore (since each of $u$ and $v$ has at most one neighbor in $\{p, q\}$) we have $|N(\{a, b\}) \cap \{p, q\}| \leq 1$. However, if $p \notin N(\{a, b\})$ then  $a, p, b$ is a bad triple. If $q \notin N(\{a, b\})$ then  $a, q, b$ is a bad triple. But $|N(\{a, b\}) \cap \{p, q\}| \leq 1$ implies that one of the two former cases must hold, yielding the desired contradiction.
\end{proof}

Finally, as the proof is based on the notion of bad triples and as bad triples in induced
subgraphs are also bad triples in the whole graph, this proves Lemma~\ref{cycles_exclusion}.

The following statement is put forward in \cite[Theorem~1.5]{DBLP:conf/soda/ChudnovskyPPT20}.

\begin{lemma}[\cite{DBLP:conf/soda/ChudnovskyPPT20} (Theorem~1.5)]\label{chudnovsky}
	A $C_{\geq t}$-free graph with maximum degree $\Delta$ has treewidth bounded by
	$\Oh(t\cdot \Delta)$. Furthermore, a tree decomposition of this width can be
	computed in polynomial time.
\end{lemma}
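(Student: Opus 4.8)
The plan is to prove the contrapositive: a $C_{\geq t}$-free graph $\graph$ of maximum degree $\Delta$ whose treewidth exceeds some $c\cdot t\cdot \Delta$ must contain an induced cycle of length at least $t$, contradicting $C_{\geq t}$-freeness. The guiding intuition---confirmed by the extremal example of an $h\times h$ wall, which has maximum degree $3$, treewidth $\Theta(h)$, and a longest induced cycle of length $\Theta(h)$---is that in bounded-degree graphs the only way to force large treewidth is to embed a large grid-like (wall) structure, and such a structure necessarily carries a long induced cycle. So first I would invoke a quantitative grid/wall theorem: large treewidth yields a large wall as a topological minor. In the bounded-degree regime one can, moreover, convert a (branch-set) minor model into a genuine subdivision with only a $\poly(\Delta)$ loss, since each branch set can be contracted through a bounded-degree tree.

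Second, from a subdivided wall of height $h$ I would extract a long induced cycle. The outer boundary of a wall is a cycle of length $\Theta(h)$; the difficulty is that it need not be induced in $\graph$, because $\graph$ may contain chords or attachments absent from the topological-minor model. Here the degree bound does the work: each vertex of the boundary cycle has at most $\Delta$ neighbours, so the number of chords landing on a fixed length-$\ell$ arc is $\Oh(\ell\cdot \Delta)$. A cleaning/pruning argument---repeatedly shortcutting along chords and passing to a sub-wall of proportional height---then isolates an induced cycle whose length is still linear in $h$, and by choosing the treewidth threshold proportional to $t\cdot \Delta$ one guarantees this length is at least $t$. The polynomial-time construction of the decomposition follows by making the wall-extraction and separator steps algorithmic, which the cited grid/wall machinery already provides.

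The main obstacle is obtaining the \emph{linear} dependence $\Oh(t\cdot \Delta)$ rather than a polynomial one. The naive route through the general grid-minor theorem (even in its polynomial Chekuri--Chuzhoy form) loses polynomial factors between treewidth and wall height, and hence would only give treewidth $\poly(t,\Delta)$. Matching the sharp bound of Lemma~\ref{chudnovsky} requires the more careful structural analysis of \cite{DBLP:conf/soda/ChudnovskyPPT20}, which bypasses generic grid minors in favour of balanced-separator arguments tuned to the $(t-1)$-chordality hypothesis, controlling directly how shortest paths and their bounded-degree neighbourhoods can act as balanced separators. Since this statement is quoted as Theorem~1.5 of \cite{DBLP:conf/soda/ChudnovskyPPT20}, I would ultimately rely on their argument for the tight constant and the polynomial-time guarantee, using the wall-based contrapositive above only as the conceptual skeleton.
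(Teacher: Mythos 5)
The paper offers no proof of this lemma: it is quoted verbatim as Theorem~1.5 of \cite{DBLP:conf/soda/ChudnovskyPPT20} and used as a black box, and your proposal ultimately does the same, so you are in agreement with the paper's treatment. The wall-based contrapositive you sketch is only conceptual scaffolding---as you yourself note, routing through grid/wall minors loses polynomial factors and cannot deliver the linear $\Oh(t\cdot\Delta)$ bound or the polynomial-time decomposition---so the citation is doing all the work here, exactly as in the paper.
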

%

\noindent
From Lemma \ref{chudnovsky}, we can prove the following lemma.

\begin{lemma}\label{treewidth_edges}
	Let $\graph$ be a $C_{\geq 5}$-free graph, let $m$ be the number of edges of $\graph$.
	Then, we have $m  = \Omega(\tw(\graph)^2)$.
\end{lemma}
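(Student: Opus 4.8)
The plan is to prove the equivalent statement $\tw(\graph) = \Oh(\sqrt{m})$, from which $m = \Omega(\tw(\graph)^2)$ follows immediately. The key tool is Lemma~\ref{chudnovsky}, which for $t=5$ says that every $C_{\geq 5}$-free graph of maximum degree $\Delta$ has treewidth $\Oh(\Delta)$. The difficulty is that $\graph$ itself may contain a few vertices of very high degree, so the lemma cannot be applied directly. To circumvent this, I would split the vertex set according to a degree threshold $\Delta$ (to be fixed later): let $X = \{\avertex \in \vertexset \mid \deg(\avertex) \geq \Delta\}$ be the set of high-degree vertices. Since the degrees sum to $2m$, we get $|X| \leq 2m/\Delta$.

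Next, consider the induced subgraph $\graph_{\vertexset \setminus X}$. Being an induced subgraph of a $C_{\geq 5}$-free graph, it is again $C_{\geq 5}$-free (an induced $C_k$ in it would be an induced $C_k$ in $\graph$), and by construction its maximum degree is strictly below $\Delta$. Hence Lemma~\ref{chudnovsky} yields a tree decomposition of $\graph_{\vertexset \setminus X}$ of width $\Oh(\Delta)$. I would then recover a tree decomposition of $\graph$ by inserting all of $X$ into every bag of this decomposition. This is readily checked to be valid: every edge incident to a vertex of $X$ is now covered because that vertex lies in all bags, and the subtree-connectivity condition is preserved since $X$ occupies every bag. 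This operation increases the width by at most $|X|$, so $\tw(\graph) \leq \Oh(\Delta) + |X| \leq c \cdot \Delta + 2m/\Delta$ for some absolute constant $c$.

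Finally I would optimize over the threshold. Balancing the two competing terms $c\Delta$ and $2m/\Delta$ by choosing $\Delta = \Theta(\sqrt{m})$ makes both $\Oh(\sqrt{m})$, which gives $\tw(\graph) = \Oh(\sqrt{m})$ and therefore $m = \Omega(\tw(\graph)^2)$. The degenerate cases where $m$ is bounded by a constant, in particular $m = 0$ where $\tw(\graph) = 0$, are trivial and absorbed into the asymptotic notation.

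The step I expect to be the crux is precisely this balancing: Lemma~\ref{chudnovsky} is only useful when $\Delta$ is small, whereas the count $|X| \leq 2m/\Delta$ is only small when $\Delta$ is large, and the quadratic bound emerges exactly from trading off these two regimes at $\Delta \sim \sqrt{m}$. The two supporting ingredients, namely that induced subgraphs of $C_{\geq 5}$-free graphs remain $C_{\geq 5}$-free and that adding a vertex set to every bag raises the width only by the size of that set, are routine and make the balancing argument go through cleanly.
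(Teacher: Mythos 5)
Your proof is correct and follows essentially the same route as the paper's: both remove the high-degree vertices (the paper fixes the threshold at $\sqrt{m}$ outright, so at most $2\sqrt{m}$ vertices are removed), apply Lemma~\ref{chudnovsky} to the remaining $C_{\geq 5}$-free low-degree graph, and then account for the removed vertices via the fact that deleting (equivalently, re-inserting into every bag) a vertex changes the treewidth by at most one. Your explicit balancing of the threshold $\Delta$ is just a slightly more general presentation of the same trade-off.
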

\begin{proof}
	A graph on $m$ edges has at most $2\sqrt{m}$ vertices
	of degree at least $\sqrt{m}$. Let $\graph'$ be the graph obtained from $\graph$ by removing all vertices
	of degree at least $\sqrt{m}$. Since $\agraph$ is $C_{\geq 5}$-free, so is $\agraph'$.
	Therefore, by applying Lemma~\ref{chudnovsky}, we get that $\tw(\graph')  = \Oh(\sqrt{m})$.
	The removal of a vertex reduces the treewidth by at most $1$. Hence, we have
	$\tw(\graph) - 2\sqrt{m} \leq \tw(\graph') =  \Oh(\sqrt{m})$. This implies
	that $\tw(G) = \Oh(\sqrt{m})$, or conversely, that $m=\Omega(\tw(\graph)^2)$.
\end{proof}

\noindent
By combining Lemma \ref{treewidth_edges} with Lemma \ref{tree_pathwidth}, we get: 

\begin{lemma}
	\label{lemma:pathwidth_edge_cocomparability}
	Let $\graph$ be a cocomparability graph and let $m$ be the number of edges of $\graph$.
	Then, $m = \Omega(\pw(\graph)^2)$.
\end{lemma}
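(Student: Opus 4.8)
The plan is to obtain this lemma as a direct chaining of three facts that are already established earlier in the excerpt, so the substantive work has essentially been done and the remaining task is to assemble the pieces in the right order. The target inequality $m = \Omega(\pw(\graph)^2)$ differs from Lemma~\ref{treewidth_edges} only in that the latter is phrased in terms of treewidth and under the hypothesis of $C_{\geq 5}$-freeness, whereas here the hypothesis is that $\graph$ is a cocomparability graph and the conclusion is about pathwidth. Both discrepancies are bridged by results stated above.

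First I would invoke Lemma~\ref{cycles_exclusion}, which asserts that every cocomparability graph is $C_{\geq 5}$-free. This verifies the hypothesis needed to apply the edge/treewidth bound. Applying Lemma~\ref{treewidth_edges} to $\graph$ then yields $m = \Omega(\tw(\graph)^2)$ immediately, with no further argument required, since $\graph$ is $C_{\geq 5}$-free.

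Next I would replace treewidth with pathwidth. Here I would appeal to Lemma~\ref{tree_pathwidth} (Theorem~1.2 of~\cite{habib1994treewidth}), which states that for a cocomparability graph $\graph$ we have $\pw(\graph) = \tw(\graph)$. Substituting this equality into the bound from the previous step converts $m = \Omega(\tw(\graph)^2)$ into $m = \Omega(\pw(\graph)^2)$, which is exactly the claim.

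There is no real obstacle at this stage: all the difficulty has been front-loaded into the earlier results—specifically the structural input from \cite{DBLP:conf/soda/ChudnovskyPPT20} (Lemma~\ref{chudnovsky}) that drives Lemma~\ref{treewidth_edges}, the self-contained bad-triple argument establishing $C_{\geq 5}$-freeness (Lemma~\ref{cycles_exclusion}), and the treewidth/pathwidth coincidence on cocomparability graphs (Lemma~\ref{tree_pathwidth}). The only thing to be careful about is simply confirming that the hypotheses of each cited lemma are met before applying it, so the proof amounts to a short two-step composition. I would therefore present it as a brief corollary-style argument rather than a detailed derivation.
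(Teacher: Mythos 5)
Your proposal is correct and matches the paper's own argument exactly: the paper states the lemma as the immediate combination of Lemma~\ref{treewidth_edges} (via the $C_{\geq 5}$-freeness from Lemma~\ref{cycles_exclusion}) with the pathwidth--treewidth equality of Lemma~\ref{tree_pathwidth}. Nothing further is needed.
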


Now, in a \textsc{PCO} instance, each edge contributes at least $1$ to the cost of any solution.
Therefore, if a solution has cost at most $k$, then the cocomparability graph of the input partial
order can have at most $k$ edges. Therefore, this observation, together with
Lemma \ref{lemma:pathwidth_edge_cocomparability} yields the following lemma.

\begin{lemma}
	\label{lemma:PCO}
	Let $(\apartialorder,\cost,k)$ be an YES-instance of \textsc{PCO}. Then,
	$\pw(\graph_{\apartialorder})  = \Oh(\sqrt{k})$.
\end{lemma}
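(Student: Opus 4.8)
The plan is to bound the number of edges of the cocomparability graph $\graph_{\apartialorder}$ in terms of $k$ and then invoke Lemma~\ref{lemma:pathwidth_edge_cocomparability}, which already relates the number of edges of a cocomparability graph to its pathwidth. Since that lemma does all the structural work, the remaining task is purely to translate the cost budget $k$ into an edge bound, and this is exactly where the PCO-specific positivity hypothesis enters.

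First I would recall that, by the definition of $\graph_{\apartialorder}$, its edges are precisely the pairs of elements that are incomparable in $\apartialorder$. Let $\linearorder \supseteq \apartialorder$ be a linear extension witnessing that $(\apartialorder,\cost,k)$ is a YES-instance, so that $\cost(\linearorder \setminus \apartialorder) \leq k$. For each incomparable pair $\{x,y\}$, neither $(x,y)$ nor $(y,x)$ belongs to $\apartialorder$, while exactly one of these ordered pairs lies in the linear order $\linearorder$; hence exactly one of them lies in $\linearorder \setminus \apartialorder$. Because the instance is a PCO instance, the cost assigned to this ordered pair is strictly positive, and since $\cost$ takes values in $\N$, it is at least $1$.

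Next I would sum over all incomparable pairs. Each edge of $\graph_{\apartialorder}$ contributes, through its unique realized direction in $\linearorder \setminus \apartialorder$, a summand of at least $1$ to $\cost(\linearorder \setminus \apartialorder)$, and distinct edges contribute through distinct ordered pairs. Therefore the number of edges $m$ of $\graph_{\apartialorder}$ satisfies $m \leq \cost(\linearorder \setminus \apartialorder) \leq k$. Finally, Lemma~\ref{lemma:pathwidth_edge_cocomparability} gives $m = \Omega(\pw(\graph_{\apartialorder})^2)$, so that $\pw(\graph_{\apartialorder})^2 = \Oh(m) = \Oh(k)$, which yields $\pw(\graph_{\apartialorder}) = \Oh(\sqrt{k})$.

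There is no genuine obstacle here: the lemma follows immediately from Lemma~\ref{lemma:pathwidth_edge_cocomparability} combined with the budget $k$. The only point requiring care is verifying that \emph{every} incomparable pair truly contributes to the cost, which is precisely the step that uses the PCO hypothesis that incomparable pairs receive strictly positive cost. This is exactly what distinguishes PCO from the more general CO problem, where an incomparable pair could carry zero cost and the edge-count bound $m \leq k$ would no longer hold.
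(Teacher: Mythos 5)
Your proposal is correct and follows exactly the paper's argument: each incomparable pair (edge of $\graph_{\apartialorder}$) contributes at least $1$ to the cost of any linear extension by the PCO positivity hypothesis, so a YES-instance forces $m \leq k$, and Lemma~\ref{lemma:pathwidth_edge_cocomparability} then gives $\pw(\graph_{\apartialorder}) = \Oh(\sqrt{k})$. Your write-up is in fact slightly more careful than the paper's brief remark, since you verify that exactly one orientation of each incomparable pair lands in $\linearorder \setminus \apartialorder$ and that distinct edges contribute distinct summands.
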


To get the running time of Theorem~\ref{theorem:main}, we need to either compute a $\apartialorder$-consistent
path decomposition of width at most $\Oh(\sqrt{\parameter})$, or to trigger an early rejection.
For this, we will use the following lemma which is based on Lemma~\ref{lemma:ConstructionDecomposition}.

\begin{lemma}
	\label{lemma:ConstructionVsNoInstance}
	There is a polynomial-time algorithm that takes an instance of $(\apartialorder,\cost,k)$ of \textsc{PCO} as input,
	and either constructs a $\apartialorder$-consistent path decomposition of the graph $\graph_{\apartialorder}$ of
	width $\Oh(\sqrt{k})$, or determines that this instance is a NO-instance.
\end{lemma}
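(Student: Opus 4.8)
The plan is to produce, in polynomial time, a tree decomposition of $\graph_{\apartialorder}$ of width $\Oh(\sqrt{k})$ and then convert it into a $\apartialorder$-consistent path decomposition of the same asymptotic width, reusing the machinery from the proof of Lemma~\ref{lemma:ConstructionDecomposition}. The key point is that the only non-polynomial step in that proof is the computation of an initial tree decomposition via the $2^{\Oh(\pw)}\cdot|\baseset|$ approximation algorithm; I would replace it by the polynomial-time construction underlying Lemma~\ref{treewidth_edges} and Lemma~\ref{lemma:pathwidth_edge_cocomparability}.

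First I would build the cocomparability graph $\graph_{\apartialorder}$ and count its edges $m$, i.e.\ the incomparable pairs of $\apartialorder$. Since we are in a \textsc{PCO} instance, every incomparable pair is oriented by any linear extension $\linearorder\supseteq\apartialorder$, and the resulting oriented pair lies in $\linearorder\setminus\apartialorder$ with strictly positive, hence $\geq 1$, cost; thus every solution has cost at least $m$. Consequently, if $m > k$ the instance is a NO-instance and the algorithm rejects; this is the early-rejection branch.

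Otherwise $m \leq k$, and I would invoke the constructive argument behind Lemma~\ref{treewidth_edges}: delete from $\graph_{\apartialorder}$ the at most $2\sqrt{m}$ vertices of degree at least $\sqrt{m}$. The remaining graph $\graph'$ is still $C_{\geq 5}$-free by Lemma~\ref{cycles_exclusion} and has maximum degree below $\sqrt{m}$, so Lemma~\ref{chudnovsky} computes in polynomial time a tree decomposition of $\graph'$ of width $\Oh(\sqrt{m})$. Reinserting each deleted vertex into every bag raises the width by at most $2\sqrt{m}$, yielding in polynomial time a tree decomposition $\tree$ of $\graph_{\apartialorder}$ of width $\Oh(\sqrt{m}) = \Oh(\sqrt{k})$.

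Finally I would feed $\tree$ into the remaining, purely polynomial steps of the proof of Lemma~\ref{lemma:ConstructionDecomposition}: turn each bag of $\tree$ into a clique to get a triangulation $H_{\tree}$ of treewidth at most $\width(\tree)$, delete edges until a minimal triangulation $H \subseteq H_{\tree}$ is reached (computable in polynomial time), which by the same argument as in that proof is an interval graph realizing an interval order $\intervalorder \subseteq \apartialorder$, and output the sequence of maximal cliques of $H$ ordered according to $\intervalorder$. As shown there, this is an $\apartialorder$-consistent path decomposition, and its width equals $\tw(H) \leq \tw(H_{\tree}) = \Oh(\sqrt{k})$. The main obstacle, and the reason this lemma is separated from Lemma~\ref{lemma:ConstructionDecomposition}, is keeping the running time polynomial: everything hinges on supplying the consistency-conversion routine with a bounded-width tree decomposition obtained from the polynomial-time Lemma~\ref{chudnovsky} rather than from the exponential-time approximation algorithm, combined with the edge-counting observation that both caps the width at $\Oh(\sqrt{k})$ on YES-instances and certifies rejection otherwise.
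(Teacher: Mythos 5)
Your proof is correct and follows essentially the route the paper intends: the paper gives no explicit proof of this lemma, only the remark that it is ``based on Lemma~\ref{lemma:ConstructionDecomposition}'' together with the edge-counting observation preceding Lemma~\ref{lemma:PCO}, and your argument assembles exactly these pieces (reject if the number of incomparable pairs exceeds $k$, otherwise bound the width by $\Oh(\sqrt{k})$ and convert to a $\apartialorder$-consistent path decomposition). Your one refinement --- substituting the polynomial-time decomposition from Lemma~\ref{chudnovsky} plus the high-degree-vertex deletion of Lemma~\ref{treewidth_edges} for the $2^{\Oh(\pw)}\cdot|\baseset|$ first step of Lemma~\ref{lemma:ConstructionDecomposition} --- is precisely what is needed for the stated ``polynomial-time'' claim to hold literally, and is a more careful reading than the paper's own one-line justification.
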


In Section \ref{section:DPCO}, in order to define our algorithm for {\sc Diverse-CO}, we first devised a
simpler algorithm for the single-solution version of {\sc CO}, that could be used as a building block for the diverse algorithm.
It turns out that if our only goal is to solve the single-solution version of  {\sc CO}, then the basic algorithm developed in
Section \ref{section:DPCO} can be optimized, to become a single-exponential time algorithm parameterized by the
pathwidth of the cocomparability graph of the input order. More precisely, we have the following lemma.

\begin{lemma}[\cite{AFOW2020} (Theorem 1)]
	\label{lemma:SingleExponentialPCO}
	Given an instances $(\apartialorder,\cost,k)$ of \textsc{CO} and a $\apartialorder$-consistent path decomposition
	$\pathdec$ of the graph $\graph_{\apartialorder}$, one can solve this instance in time
	$|\baseset|\cdot 2^{\Oh(\pw(\agraph_{\apartialorder}))}\cdot \log(\parameter) + \Oh(|\baseset|^2\cdot \log(\parameter))$.
\end{lemma}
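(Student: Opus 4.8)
The plan is to revisit the dynamic-programming scheme behind Theorem~\ref{theorem:SolutionConstructionOne} and to remove the factor $(\width+1)!=2^{\Oh(\width\log\width)}$, which there arises \emph{solely} from storing, for each bag, the complete linear order $\linearorder$ of the active subset $\asubsetbag$ (cf.\ Remark~\ref{remark:UpperBoundTriples}). The goal is to replace this stored order by a mere \emph{subset} of the bag, bringing the number of states per position down to $2^{\Oh(\pw(\agraph_{\apartialorder}))}$. First I would turn the given $\apartialorder$-consistent path decomposition $\pathdec$ into a nice one of the same width, preserving $\apartialorder$-consistency and keeping $\Oh(|\baseset|)$ bags, and then refine its operation sequence by inserting, for every vertex, a new \emph{commit} operation placed somewhere inside the interval of bags that contain it. Committing a vertex~$\vvertex$ models assigning it its final slot in the linear extension~$\vote$ being built; commits are performed in $\vote$-order, and a vertex may be forgotten only after it has been committed.

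The DP state at a given operation would be the subset $A\subseteq\bag$ of currently committed-but-not-yet-forgotten vertices, together with the minimum partial cost realizing it. The key accounting idea is to charge the orientation of each incomparable pair $\{\uvertex,\vvertex\}$ (that is, each edge of $\agraph_{\apartialorder}$) exactly once, at the commit of its $\vote$-smaller endpoint: when $\vvertex$ is committed we add $\sum\cost(\vvertex,\uvertex)$ over all incomparable neighbours $\uvertex$ of $\vvertex$ that are still \emph{uncommitted}. Since commits precede forgets, an uncommitted neighbour has not been forgotten; and since every edge of a path decomposition has both endpoints together in some bag, a commit of $\vvertex$ is declared \emph{eligible} only once all incomparable neighbours of~$\vvertex$ have been introduced, which guarantees that each such $\uvertex$ sits in the current bag and is therefore visible in the state. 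Symmetrically, incomparable neighbours that are already committed (whether still present or already forgotten) are $\vote$-smaller than $\vvertex$ and were paid at their own commit, so they must \emph{not} be recharged. Crucially, $\apartialorder$-consistency of $\pathdec$ forces every forgotten vertex to be $\vote$-below every vertex still in play, which is exactly what makes the committed subset~$A$ a sufficient state.

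I would then establish correctness through the two directions already familiar from Lemma~\ref{lemma:CompatibleSequenceOne} and Lemma~\ref{lemma:CompatibleSequenceTwo}: every commit-sequence respecting the eligibility and $\apartialorder$-predecessor constraints yields a linear extension of $\apartialorder$ whose cost equals the accumulated partial cost, and conversely every linear extension $\vote$ of cost at most~$\parameter$ is realized by committing vertices in $\vote$-order at an eligible moment inside their bag interval. For the running time, the nice decomposition has $\Oh(|\baseset|)$ operations; at each we update a table of $2^{\Oh(\pw(\agraph_{\apartialorder}))}$ subset-states, and a commit transition touches only the $\Oh(\pw(\agraph_{\apartialorder}))$ vertices of the current bag, every addition of partial costs (capped at~$\parameter$, with early rejection above it) costing $\Oh(\log\parameter)$. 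This produces the $|\baseset|\cdot 2^{\Oh(\pw(\agraph_{\apartialorder}))}\cdot\log(\parameter)$ term, while the additive $\Oh(|\baseset|^2\cdot\log(\parameter))$ accounts for reading the cost function and the one-time bookkeeping over all $\Oh(|\baseset|^2)$ pairs.

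The step I expect to be the main obstacle is the converse direction, namely proving that the fixed introduce/forget schedule of $\pathdec$ together with the inserted commit slots is flexible enough to realize \emph{every} target linear extension~$\vote$ while respecting both the $\vote$-order of commits and the commit-before-forget rule. One must check that for each $\vvertex$ the eligible commit window (from the moment all its incomparable neighbours are introduced until $\vvertex$ is forgotten) is nonempty, and that these windows can be threaded consistently with~$\vote$; this is precisely where $\apartialorder$-consistency is indispensable, since it prevents a vertex from being forced out of a bag before its intended $\vote$-predecessors become available. Making the exactly-once charging coincide with such a schedule, so that the subset state provably suffices, is the technical heart of the argument.
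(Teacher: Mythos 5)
First, a contextual remark: the paper does not prove this lemma at all --- it is imported verbatim from \cite{AFOW2020} (Theorem~1) --- so your reconstruction is being judged on its own merits rather than against an in-paper argument. Your overall plan, namely replacing the ordered-suffix states of Theorem~\ref{theorem:SolutionConstructionOne} (which cost the factor $(\width+1)!$ of Remark~\ref{remark:UpperBoundTriples}) by subset states and charging each edge of $G_{\apartialorder}$ exactly once, is indeed the right way to reach the $2^{\Oh(\pw(\agraph_{\apartialorder}))}$ bound. However, the specific charging discipline is broken, and it breaks exactly at the step you flag as the technical heart. Your eligibility rule (``commit $v$ only after all incomparable neighbours of $v$ have been introduced''), combined with commit-before-forget and the requirement that commits occur in $\vote$-order, makes some linear extensions unrealizable. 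Concretely, take $\baseset=\{v,w,u\}$ with $\apartialorder$ consisting only of $w<_{\apartialorder}u$, so that $G_{\apartialorder}$ is the path $w$--$v$--$u$, and take the $\apartialorder$-consistent path decomposition $(\{v,w\},\{v\},\{v,u\})$ of width~$1$. For the linear extension $v<w<u$, the vertex $w$ must be committed at position~$1$ (it is forgotten at position~$2$), while $v$ becomes eligible only at position~$3$ (when $u$ is introduced); hence $v$ cannot be committed before $w$, and this extension is never generated. With $\cost(v,w)=\cost(v,u)=1$ and $\cost(w,v)=\cost(u,v)=100$ it is the unique optimum, of cost~$2$, yet your DP returns~$101$. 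Consistency does not rescue the argument: the decomposition above is $\apartialorder$-consistent.

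The repair is to split the charge for an edge $\{x,y\}$ with $x<_{\vote}y$ according to which event comes first. If $y$ is already in the bag when $x$ is committed, charge $\cost(x,y)$ at the commit of $x$, against uncommitted in-bag neighbours, as you do; if instead $y$ is introduced only after $x$ has been committed, charge $\cost(x,y)$ at the introduction of $y$, against the already-committed neighbours present in that bag (a committed neighbour that has already been forgotten shares no bag with $y$, hence is comparable to $y$ and carries no cost, so nothing is missed). With this rule no eligibility delay is needed, each edge is charged exactly once with both endpoints visible in the current bag, the subset state still suffices, and the greedy threading of commits in $\vote$-order always succeeds: if some $w'<_{\vote}v$ were introduced only after $v$ is forgotten, then $w'$ and $v$ are comparable and $\apartialorder$-consistency forces $v<_{\apartialorder}w'$, a contradiction. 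That is where consistency is genuinely used. As written, though, your proof has a real gap: the completeness direction fails for the algorithm you describe.
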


Now we are ready to prove the statement of Theorem \ref{theorem:main}. Given an instance $(\apartialorder,\cost,k)$ of
\textsc{PCO}, we apply the algorithm stated in Lemma~\ref{lemma:ConstructionVsNoInstance}. This algorithm either determines
that the instance is a NO-instance, or constructs a $\apartialorder$-consistent path decomposition $\pathdec$ of $\agraph_{\apartialorder}$
of width $\Oh(\sqrt{k})$. In the first case, we are done and simply answer NO. Otherwise, we give both the instance
$(\apartialorder,\cost,k)$ and the decomposition $\pathdec$ to the algorithm stated in Lemma \ref{lemma:SingleExponentialPCO}
to determine in time $|\baseset| \cdot 2^{\Oh(\sqrt{\parameter})} + {\Oh(|\baseset|^2\cdot \log(\parameter))}$ whether $(\apartialorder,\cost,k)$ is a YES- or a NO-instance
of {\sc PCO}. I\longversion{t is worth noting that i}n case this is a YES-instance, the algorithm also constructs a
linear extension of $\apartialorder$ of cost at most~$k$.
This concludes the proof of Theorem \ref{theorem:main}.

\section{Conclusion}
\label{section:conclusion}

In this work, we have addressed the \textsc{Kemeny Rank Aggregation} problem,
one of the most central problems in the theory of social choice, 
from the perspective of diversity of solutions and parameterized complexity
theory. We have devised a fixed parameter tractable algorithm for the diverse version of 
\textsc{KRA} with partially ordered votes where parameters are the 
solution imperfection, the number of solutions, the scatteredness and the unanimity width of the set of votes.
As a by-product of our work, we have introduced new parameterized algorithms for 
problems in order theory that are of independent interest. In particular, 
we developed parameterized algorithms for the diverse version of the 
\textsc{Completion of an Ordering} problem (\textsc{CO}). Furthermore, we have developed 
a new sub-exponential time algorithm for the \textsc{\pbPCOname} problem (\textsc{PCO}), a restriction
of {\sc CO} which can be used as a base to solve combinatorial problems in a 
wide variety of fields, such as artificial intelligence, graph drawing, computational
biology, etc. We believe that both our new sub-exponential time algorithm for 
finding single solutions for \textsc{PCO} and our algorithm for finding diverse solutions 
in \textsc{CO} have a very positive impact on the study of these and related computational problems 
in neighboring fields.

\newpage
\section*{Acknowledgements} 
Emmanuel Arrighi acknowledges support from the Research Council of Norway (Grant no. 274526) and from IS-DAAD (Grant no. 309319)). Henning Fernau acknowledges support from 
DAAD PPP (Grant no. 57525246)). The research of Daniel Lokshtanov is supported by BFS award 2018302 and NSF award CCF-2008838. Mateus de Oliveira Oliveira acknowledges support from 
the Research Council of Norway (Grant no. 288761), IS-DAAD (Grant no. 309319) and Sigma2 Network (NN9535K). Petra Wolf acknowledges support from DFG project FE 560/9-1 and DAAD PPP (Grant no. 57525246).

\bibliographystyle{abbrv}
\bibliography{bib}

\end{document}